\newtheorem{theorem}{Theorem}
\newtheorem{lemma}{Lemma}
\newtheorem{corollary}[theorem]{Corollary}   
\newtheorem*{reptheorem}{Theorem}
\title{Formatting Instructions For NeurIPS 2024}
\author{%
  Sizhe ~Tang
    \\
  The George Washington University\\
  \texttt{s.tang1@gwu.edu} \\
  \And
  Jiayu ~Chen\\
  Carnegie Mellon University\\
  \texttt{jiayuc2@andrew.cmu.edu}
  \And
  Tian ~Lan\\
  The George Washington University\\
  \texttt{tlan@gwu.edu}
}
\begin{document}

\title{MALinZero: Efficient Low-Dimensional Search for Mastering Complex Multi-Agent Planning}
\maketitle

\begin{abstract}
Monte Carlo Tree Search (MCTS), which leverages Upper Confidence Bound for Trees (UCTs) to balance exploration and exploitation through randomized sampling, is instrumental to solving complex planning problems. However, for multi-agent planning, MCTS is confronted with a large combinatorial action space that often grows exponentially with the number of agents. As a result, the branching factor of MCTS during tree expansion also increases exponentially, making it very difficult to efficiently explore and exploit during tree search. To this end, we propose MALinZero, a new approach to leverage low-dimensional representational structures on joint-action returns and enable efficient MCTS in complex multi-agent planning. Our solution can be viewed as projecting the joint-action returns into the low-dimensional space representable using a contextual linear bandit problem formulation. We solve the contextual linear bandit problem with convex and $\mu$-smooth loss functions -- in order to place more importance on better joint actions and mitigate potential representational limitations -- and derive a linear Upper Confidence Bound applied to trees (LinUCT) to enable novel multi-agent exploration and exploitation in the low-dimensional space. We analyze the regret of MALinZero for low-dimensional reward functions and propose an $(1-\tfrac1e)$-approximation algorithm for the joint action selection by maximizing a sub-modular objective. MALinZero demonstrates state-of-the-art performance on multi-agent benchmarks such as matrix games, SMAC, and SMACv2, outperforming both model-based and model-free multi-agent reinforcement learning baselines with faster learning speed and better performance.
\end{abstract}

\section{Introduction}

Monte Carlo Tree Search (MCTS) has demonstrated great performance in solving complex planning problems such as game playing ~\cite{MCTSgame_playing}, robotic control ~\cite{MCTS_robotics}, and optimization~\cite{MCTS_optimization}. It achieves much higher data efficiency than value- or policy-based reinforcement learning (RL)~\cite{muzero} by leveraging Upper Confidence Bound for Trees (UCTs) to balance exploration and exploitation through randomized sampling and cumulative regret minimization~\cite{uct}. Integrated with deep learning (e.g., AlphaZero~\cite{alphazero} and MuZero~\cite{muzero}), MCTS algorithms have achieved groundbreaking results in solving complex games, such as Go, Chess and Shogi~\cite{muzero}, relying on little knowledge of domain expertise or game rules.

However, for planning problems involving multiple agents, MCTS is confronted with a large combinatorial action space that often grows exponentially with the number of agents~\cite{maddpg,marl_sruvey,li2024crowdsensing,zhang2025learning}. As the number of candidate actions increases, the branching factor of MCTS (during tree expansion) also increases exponentially, making it very difficult to efficiently explore and exploit during tree search~\cite{kwak2024efficientmontecarlotree,MAZero}. Existing works either focus on single-agent problems or limit tree search to a small set of state-conditioned action abstractions~\cite{kwak2024efficientmontecarlotree,painter2024montecarlotreesearch, chen2025perception}.

As a result, MCTS can get stuck in local optima or become slow to explore optimal actions. Recent proposals like MAZero~\cite{MAZero} facilitate distributed representation of state transitions and reward prediction in multi-agent MCTS, but again do not address the challenges relating to the combinatorial action space in multi-agent planning.

In this work, we propose MALinZero, a new approach to leverage low-dimensional representational structures and enable efficient MCTS in complex cooperative multi-agent planning. The main idea of MALinZero is to model the joint returns through a low-dimensional linear combination of the (latent) per-agent action rewards. Thus, by observing the joint returns resulted from multi-agent actions, we can formulate a contextual linear bandit problem~\cite{contextual_bandit} -- with the per-agent action rewards as an unknown parameter vector $\theta$ -- and derive a linear Upper Confidence Bound applied to trees (LinUCT), to enable novel LinUCT-based exploration and exploitation in this low-dimensional space of (latent) per-agent action rewards. The idea of enforcing representational structures on joint returns has been instrumental in multi-agent reinforcement learning (MARL), e.g., VDN~\cite{sunehag2017valuedecompositionnetworkscooperativemultiagent} with linear representations, and QMIX~\cite{QMIXmixmab}, NDQ~\cite{wang2020learningnearlydecomposablevalue}, and PAC~\cite{zhou2023pacassistedvaluefactorisation} with monotonic representations, as well as policy factorizations like DOP~\cite{DOP} and FOP~\cite{FOP}. 

However, these MARL results do not apply to multi-agent MCTS, which requires not only factorized action-values but also the use of concentration inequalities~\cite{concentration_inequality} to bound their probability distributions given observed samples, like in our LinUCT.

For a planning problem with $n$ agents and $d$ actions per agent, MALinZero effectively reduces the tree search from considering $d^n$ independent joint-action returns to learning $nd$ latent per-agent action rewards. Our solution can be viewed as projecting the returns into the low-dimensional space represented by MALinZero using a contextual linear bandit problem formulation~\cite{contextual_bandit, linucb}. To mitigate the potential representational limitations, we further introduce a strongly-convex, $\mu$-smooth distance measure $f$ into the projection (as a new contextual bandit loss), in order to place more importance on not underestimating the better joint actions, while not overestimating the less attractive joint actions~\cite{rashid2020weightedqmixexpandingmonotonic, fang2024learning}. We solve the resulting contextual linear bandit problem with this convex loss and prove that our LinUCT achieves an cumulative regret of $\hat{R}_T = O\bigl(nd\cdot\sqrt{ \mu T}\cdot \ln(T)\bigr)$ after $T$ steps for low-dimensional rewards. We further show that the joint action selection problem in our MALinZero is a maximization of a submodular objective and can be solved using an $(1-\tfrac1e)$-approximation algorithm. MALinZero achieves state-of-the-art performance in our evaluations on matrix games, SMAC~\cite{samvelyan19smac}, and SMACv2~\cite{ellis2023smacv2}, by enabling multi-agents MSCT via low-dimensional representations.

The primary contributions of this paper are as follows:
\vspace{-0.1in}
\begin{itemize}
    \item We propose MALinZero to leverage low-dimensional representational structures on joint-action returns and enable efficient MCTS in complex multi-agent planning.
    \vspace{-0.03in}
    \item We solve the resulting contextual linear bandit problem with a convex loss function and derive a novel LinUCT to facilitate exploration and exploitation in low-dimensional space.
    \vspace{-0.03in}
    \item We analyze the regret of MALinZero for low-dimensional rewards and proposes an $(1-\tfrac1e)$-approximation algorithm for joint action selection via a submodular maximization.
     \vspace{-0.03in}
    \item MALinZero demonstrates state-of-the-art performance on multi-agent planning benchmarks such as MatGame, SMAC, and SMACv2, outperforming both multi-agent RL and MCTS baselines in terms of faster learning speed and better performance.
\end{itemize}

\section{Related Works and Background} \label{bg}

Multi-agent planning with joint rewards can be modeled as a Decentralized Partially Observable Markov Decision Process (Dec-POMDP)\cite{tang2023edge, yu2025look, chen2023hierarchical}, as a tuple \(\left(\mathcal{I}, \mathcal{S}, \{\mathcal{A}\}_{i\in \mathcal{I}}, P, R, \{\Omega\}_{i\in\mathcal{I}}, \{\mathcal{O}\}_{i\in\mathcal{I}}, \gamma\right)\) \citep{oliehoek2016concise, xu2023cnn}, where \(\mathcal{I}={1,2,\dots,n}\) is the set of \(n\) agents, \(\mathcal{S}\) the global state space, \(\mathcal{A}_i\) the action space of agent \(i\), \(P\) the state transition probability distribution, \(R\) the joint reward function, \(\Omega_i\) the individual observation space of agent \(i\), \(\mathcal{O}\) the global observation function and \(\gamma\) the discount factor to weigh future rewards \cite{fang2024coordinate, ravari2024adversarial, hong2025poster}. At times step \(t\), agent \(i\) gets state \(s_t\) thus acquiring local observation \(o^i_t = \mathcal{O}^i(s_t)\), then chooses action \(a_t\in\mathcal{A}_i\) based on the acquired local observation \(o^i_{\le t}\). Given a joint action \(\mathbf{a}_t = \left(a^1_t, \dots, a^N_t\right)\), the environment transits to the next state \(s_{t+1}\) and returns a reward \(r = R(s_t, \mathbf{a}_t)\). Agents aim to learn a joint policy \(\boldsymbol{\pi}\) that maximizes the expectation of discounted return \(E_{\boldsymbol{\pi}}\left[\sum^{\infty}_{t=0} \gamma^t r_t | a^i_t \sim \pi^i_t(\cdot|o^i_{\le t}), i=1,\dots,N\right]\) \cite{li2025sfmdiffusion}.

\paragraph{MARL with factorized representations.} Factorization-based methods have been commonly used to cope with the exponentially growing joint state-action space in MARL \cite{jiang2022intelligent, zhang2024modeling, mei2023mac, zhou2023every, chen2024deep}. Under the notion of Centralized Training and Decentralized Execution (CTDE), algorithms like VDN~\cite{sunehag2017valuedecompositionnetworkscooperativemultiagent} learn a centralized joint action-value function $Q_{\rm tot}$ through a linear combination of local per-agent value functions. This is further extended to monotonic representations in QMIX~\cite{QMIXmixmab}, nearly decentralized representations in NDQ~\cite{wang2020learningnearlydecomposablevalue}, and counterfactual predictions in PAC~\cite{zhou2023pacassistedvaluefactorisation}. Policy-based factorizations have also been considered in DOP~\cite{DOP} and FOP~\cite{FOP}. To mitigate potential representation limitation,  QTRAN~\cite{qtran} has considered adding state-value correction terms, while Weighted QMIX~\cite{QMIXmixmab} introduces importance weights on dominant state-actions. 

The idea of enforcing these representational structures has been instrumental in developing decentralized, scalable MARL algorithms. However, these factorized representations in MARL do not apply to multi-agent MCTS, which requires the use of concentration inequalities to bound the return distributions given observed samples, in order to balance exploration and exploitation.

\paragraph{MCTS-based planning.}
MCTS is widely applied to solve planning problems through sequential decision-making~\cite{mcts, yu2025optimizing, li2023ecg, mei2024bayesian}. Efficient search for optimal actions in a large decision space has been one of the central problems in MCTS~\cite{chen2024bayesadaptivemontecarlo,MAZero,muzero, zhang2025lipschitz}. Existing works have leveraged Boltzmann policies~\cite{painter2024montecarlotreesearch} and state-conditioned action abstractions~\cite{kwak2024efficientmontecarlotree}. 

The problem becomes more pronounced in multi-agent MCTS, as the joint action space increases exponentially as the number of agents grows~\cite{scalablemcts, dalmasso2021human, skrynnik2024decentralized}, leading to significantly increased complexity in tree expansion and search. Recent approaches like MAZero~\cite{MAZero} have considered multi-agent MCTS, but only considered distributed representation of state transitions and reward prediction, without addressing the combinatorial action space in multi-agent planning.

MCTS typically involves four stages, i.e., \textit{Selection} to choose actions using UCB-like strategies~\cite{ucb}, \textit{Expansion} to add new child nodes, \textit{Simulation} to sample payoffs, and \textit{Back-Propagation} to propagate payoffs and update node returns. Model-based MCTS algorithms like MuZero \cite{muzero} learn a dynamic model to replace \textit{Simulation}, thus improving the planning efficiency. MuZero involves three key learnable models: a representation model $h_\theta$ to map the real environment into a latent space, a dynamics model $g_\theta$ that computes the next state and the reward of this transition, and a prediction model $f_\theta$ for value and policy approximation. Given the observation history $\mathbf{o}_{\le t}$ at time step $t$, the model maps the observation into a latent space as $\mathbf{s}_{t,0}=h_{\theta}(\mathbf{o}_{\le t})$, then unrolls $K$ steps and predicts the corresponding $\mathbf{s}_{t,k}, r_{t,k}=g_\theta(\mathbf{s}_{t, k-1})$ and ${v}_{t,k},\mathbf{p}_{t,k} = f_{\theta}(s_{t,k})$ for each hypothetical step $k$ with $k = 0, 1, \dots, K$. During \textit{Selection}, MuZero traverses from the root node and applies the probabilistic Upper Confidence Tree (pUCT) rule to select actions for node transitions until reaching the leaf node of the current tree: 
\begin{equation}
    a = \arg\max_{a \in {\mathcal{A}}} \;\;
     \Phi(s,a) + c(s) P(s,a) \frac{\sqrt{ \sum_{b} 
 N(s,b)}} {{N(s,a)+1}}
 \label{ucb}
\end{equation}
where $s$, $a$ and ${\mathcal{A}}$ are abbreviations for $\mathbf{s}_{t,k-1}$, ${a}_{t,k}$ and action set respectively. $\Phi(s,a)$ is the estimation for the real value of nodes, $N(s,a)$ denotes the visiting count, $P(s,a)$ is the prior probability of selecting \(a\) in $s$, and $c(s)$ is the coefficient balance exploitation and exploration. When the leaf node is reached, new nodes will be expanded to the tree, then $\Phi(s,a)$ and $N(s,a)$ of nodes in the search path will be updated. Specifically, $\Phi(s,a)$ is updated based on a cumulative discounted reward $G_{t,k} = \sum_{\tau=0}^{l-1-k} \gamma^{\tau} r_{k+1+\tau} + \gamma^{l-k} v^l$ for $k=0, 1, \dots, l$ where $l$ is the search depth and thus calculated as  $\Phi(s,a) = \frac{N(s,a) \cdot \Phi(s, a) + G_{t,k}}{N(s,a)+1}$.

Sampled MuZero \cite{sampled_muzero} extends MuZero into a sampling-based framework to tackle larger action spaces for which MuZero can not construct all possible states as nodes. In \textit{Expansion}, only a subset \(T(s)\) of the complete action space will be considered by Sampled MuZero according to the sampling policy \(\beta\) and prior policy \(\pi\). Then the sampled action will be selected by \( a = \arg\max_{a \in {\mathcal{A}}} \;\;
     \Phi(s,a) + c(s) \frac{\hat{\beta}}{\beta} P(s,a) \frac{\sqrt{ \sum_{b} 
 N(s,b)}} {{N(s,a)+1}}\), where \(\hat{\beta}\) is the empirical action distribution.

\section{MALinZero for Multi-Agent MCTS}

MALinZero leverages low-dimensional representations of the joint-action returns and solves the resulting contextual linear bandit problem to enable efficient LinUCT-based MCTS in complex multi-agent planning. 

LinUCT is applied in \textit{Selection} described in Section~\ref{bg} to choose the optimal action during MCTS. MALinZero consists of four main modules: the representation model for obtaining the latent per-agent action rewards as an unknown parameter vector $\theta$ from observed samples, the dynamics model for predicting the next latent state and reward, the prediction model for estimating the search policy and action-values, and the communication model for describing the coordination among multi-agents\footnote{Due to space limitation, the specific model architecture can be found in Appendix B.}. Notably, the proposed LinUCT-based search and dynamic node generation in MALinZero would not incur any extra neural networks compared with MAZero, since they depend only on the inner process of each rollout. We analyze the regret of MALinZero for low-dimensional rewards. For action selections, we will show that the problem is a maximization of a sub-modular objective, solvable by an $(1-\tfrac1e)$-approximation algorithm. All proofs are collected in the Appendix.

\subsection{Leveraging Low-Dimensional Representations}

MALinZero models the joint-action returns through a low-dimensional linear combination of the latent per-agent action rewards. More precisely, we consider a contextual linear bandit problem~\cite{linucb} with a finite joint-action set \(\mathcal{A} \subset \mathbb{R}^{nd}\), where we assume that each agent has $d=|\mathcal{A}_i|$ actions without loss of generality. Thus, each joint action $a\in \mathcal{A}$ is represented by an $n$-hot vector selecting one local action for each agent. It is easy to see that the Euclidean norm of any action is bounded by \(\|a\|_2\le L=\sqrt{n}\), \(\forall a \in \mathcal{A}\). At each round \(t\), we chooses an action \(A_t \in \mathcal{A}\), and the environment reveals a reward $X_t=R(s_t,A_t)$. 

In this work, we leverage a low-dimensional representation of the reward, i.e., \(X_t=\langle\theta^*, A_t\rangle + \varepsilon_t \). Here \(\varepsilon_t\) is conditionally \(1-\)subgaussian\footnote{ A random variable \( X \) is 1-subgaussian if it satisfies the moment generating function bound \( \mathbb{E}[e^{\lambda X}] \leq e^{\lambda^2/2} \) for all \( \lambda \in \mathbb{R} \). This implies rapid tail decay \( \mathbb{P}(|X| \geq t) \leq 2e^{-t^2/2} \), analogous to a Gaussian with unit variance. The property is central to deriving sharp concentration bounds in statistical learning theory.} observation noise, and \(\theta^*\in\mathbb{R}^{nd}\) is an unknown parameter vector representing the (latent) per-agent action return values. Thus, for each $n$-hot vector action $A_t\in\mathcal{A}$, we model the low-dimensional reward $X_t$ as a linear sum of $n$ corresponding per-agent action rewards. Our model can be viewed as projecting the reward $R(s_t,A_t)$ into the low-dimensional space representable using \(X_t=\langle\theta^*, A_t\rangle + \varepsilon_t \). It reduces the MCTS from considering $d^n$ joint reward values in each state $s_t$ to learning an unknown parameter vector of size $nd$ only, thus allowing quick estimate of the global reward structure from limited samples and significantly speed-up the tree search in multi-agent MCTS. Applying the regularized least-squares estimator, we can get the empirical estimation of \(\theta^*\) from observed samples $X_1,\ldots,X_t$ as
 \begin{equation} \label{f-linucb}
     \hat\theta_t = \arg\min_{\theta\in\mathbb{R}^{nd}} F_t(\theta), \ \  {\rm s.t.} \ 
     F_t(\theta) = \sum_{s=1}^{t} f(X_s-\langle\theta, A_s\rangle) + \frac{\lambda}{2} \|\theta\|^2
 \end{equation}
 where $f$ is some distance measure, $\|\theta\|^2$ a regularization term ensuring the uniqueness of the solution, $\lambda$ an appropriate constant for the regularization term.

\paragraph{Classic LinUCB for Euclidean distance $f$.} When $f$ is the Euclidean distance measure, the solution to the estimation problem in (\ref{f-linucb}) can be obtained by differentiation, i.e., \(\hat{\theta}_t = V_t^{-1}\sum_{s=1}^tA_sX_s\) where \(V_t\) are \(nd\times nd\) matrices given by
\(V_0=\lambda I \text{ and } V_t=V_0 + \sum_{s=1}^tA_sA_s^\top\). We can then apply the Upper Confidence Bound (UCB) algorithm~\cite{ucb} to seek the optimal action of stochastic linear bandits, which implements the ``optimism in the face of uncertainty" principle. Let \(\operatorname{UCB}_t(a)=\max_{\theta\in\mathcal{C}_t}\langle\theta, a\rangle\) be an upper bound on the mean payoff \(\langle\theta^*, a\rangle\) for action \(a\in\mathbb{R}^{nd}\) where \(\mathcal{C}_t \subseteq \mathbb{R}^{nd}\) is the confidence set based on the action-reward history that contains the unknown \(\theta^*\) with high probability. At each time \(t\), LinUCB \cite{linucb} selects \(A_t = {\arg\max}_{a\in\mathcal{A}} \operatorname{UCB}_t(a)\). The cumulative regret after \(T\) steps is bounded by 
 \(
     R_T = \sum_{t=1}^T \left(\langle A^*, \theta^*\rangle - \langle A_t, \theta^*\rangle\right) \le Cnd\sqrt{T}\operatorname{log}(T\sqrt{n})
 \)
 where \(A^* = \arg\max_{a\in\mathcal{A}} \langle a,\theta^*\rangle\), and \(C>0\) is a constant.

\paragraph{Mitigating representational limitations with more general $f$.} 

While classic bandit algorithms like UCB1 and LinUCB~\cite{linucb} solve the contextual linear bandit problem with Euclidean distance $f$, it does not necessarily yield the best model in terms of exploring the optimal actions in MCTS. Intuitively, the use of low-dimensional representation of the reward may introduce potential representational limitations, as previously observed in MARL algorithms like Weighted QMIX~\cite{rashid2020weightedqmixexpandingmonotonic}. To explore the optimal actions in MCTS, it is important not to underestimate the better joint actions, while not to overestimate the less attractive ones -- which otherwise may lead to substantial errors in recovering the correct maximal actions. 

To this end, we consider a general family of strongly-convex, $\mu$-smooth distance measure $f$ in the contextual linear bandit problem in (\ref{f-linucb}). For higher observed rewards $X_t$ that are likely optimal, the distance measure $f$ will have a larger acceleration (i.e., second order derivative if differentiable) for underestimating $(X_s-\langle\theta, A_s\rangle)>0$, while having a smaller acceleration for overestimating $(X_s-\langle\theta, A_s\rangle)<0$. On the other hand, for higher observed rewards $X_t$ that are unlikely to be chosen, it is important not to overestimate by having a larger acceleration for $(X_s-\langle\theta, A_s\rangle)<0$. An example of such $f$ is to consider:
$f(X_s-\langle\theta, A_s\rangle) = w_{+} \cdot (X_s-\langle\theta, A_s\rangle)^2$ if $X_s\ge \langle\theta, A_s\rangle$, and $f(X_s-\langle\theta, A_s\rangle) = w_{-} \cdot (X_s-\langle\theta, A_s\rangle)^2$ otherwise. We can choose $w_{+}>w_{-} $ for better $X_s$ to prevent underestimation and $w_{+}<w_{-} $ for undesirable $X_s$. This ensures that our low-dimensional representation in MALinZero can best support the exploration of the optimal actions in MCTS. We will drive a novel LinUCT with respect to such $f$ and leverage it to balance exploration and exploitation in MCTS.

\subsection{Deriving LinUCT and Analyzing Regret} 

We derive action selection using LinUCT in MALinZero and provide a cumulative regret bound for the resulting contextual linear bandit problem, depending on the properties of strongly-convex, $\mu$-smooth $f$. We prove that LinUCT can achieve an regret of  \(\hat{R}_T=O(nd\cdot \sqrt{\mu T}\cdot\operatorname{ln}(T))\) after \(T\) steps, ensuring the exploration efficiency using LinUCT. Our analysis builds upon~\cite{lattimore2020bandit} and extends it to general convex loss $f$. 

Let \(\{A_t\}^T_{t=1} \subset\mathbb{R}^{nd}\) be a sequence of action vectors with \(\|A_t\|_2 \le \sqrt{n}\), and suppose the observed reward at time \(t\) is \(X_t=\langle\theta^*, A_t\rangle + \eta_t\) where \(\theta^*\in\mathbb{R}^{nd}\) satisfies \(\|\theta^*\|_2 \le S\) for some bound $S$, and each \(\eta_t\) is conditionally \(1\)-subgaussian. Since 
\(f: \mathbb{R}\to\mathbb{R}\) is strongly-convex and \(\mu\)-smooth, we have \( \varepsilon \le  f''(z)\le \mu, \forall z\in \mathbb{R}\) for some positive $\varepsilon$. The solution to (\ref{f-linucb}) is obtained by differentiation and yields \(\hat{\theta}_t=V_{t}^{-1}\sum_{s=1}^t w_s A_s X_s\) where we use \(w_t=f''(\xi_t)\) with \(\xi_t\in(0, X_t-\langle \theta_{t-1},A_t \rangle)\) and thus have $\varepsilon \le  w_t \le \mu$ for any $t$ and $\xi_t$. Here \(X_s\) is the immediate reward at step \(t\). Further, \(V_t\) are \(nd \times nd\) matrices given by initial \(V_0=\lambda I\) for some constant \(\lambda> 0\) and \(V_t=V_0 +\sum_{s=1}^t w_s A_s X_s\).

Next, we consider an ellipsoid confidence set centered around the optimal estimator $\hat{\theta}_{t-1}$, i.e., $\mathcal{C}_t=\left\{\theta \in \mathbb{R}^{nd}: \|\theta- \hat{\theta}_{t-1}\|_{V_{t-1}} \right\}\le \beta_t$, for an increasing sequence of $\beta_t$ with $\beta_1\ge 1$~\cite{lattimore2020bandit}. Note that as $t$ grows, this ellipse $\mathcal{C}_t$ is shrinking as $V_t$ has increasing eigenvalues and if $\beta_t$ does not grow too fast. We show that the problem of selecting optimal action $A_t\in \mathcal{A}$ by solving $\max_{A_t\in \mathcal{A}, \theta\in \mathcal{C}_t} \langle \theta, a \rangle $ in this contextual linear bandit problem is equivalent to:
\[
A_t \;=\;\arg\max_{a}\;\Bigl\langle \hat{\theta}_{t-1},\,a\Bigr\rangle
\;+\;\beta_{t-1}\,\|a\|_{V_{t-1}^{-1}},
\]
which is referred to as our LinUCT rule for action selection. We consider the realized regret defined by 
$
 \widehat R_T
 =\sum_{t=1}^T\!(X_t^* - X_t)
 =\sum_{t=1}^T\!\bigl(\langle\theta^*,A_t^*\rangle - \langle\theta^*,A_t\rangle\bigr)
 \;+\;\sum_{t=1}^T(\eta_t^*-\eta_t)
$. 
The next theorem gives the regret bound of LinUCT, with corresponding proofs in Appendix A. 

\begin{theorem}\label{thm:f-linucb}[Regret Bound of LinUCT] With probability \(1-\delta\), the regret of LinUCT satisfies
\begin{equation}
    \hat{R}_t \le \sqrt{8\mu t \beta_t \operatorname{ln}\left(\frac{\operatorname{det}(V_t)}{\operatorname{det}(\lambda I)}\right)} \le \sqrt{8\mu ndt\beta_t\operatorname{ln}\left(\frac{nd \lambda + \mu nt}{nd\lambda}\right)}.
\end{equation}
\end{theorem}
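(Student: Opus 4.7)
My plan is to adapt the classical LinUCB regret analysis (Abbasi-Yadkori et al.; Lattimore and Szepesvari, Ch.~19) to the weighted setting induced by the $\mu$-smooth, strongly convex loss $f$, where the per-step weights are $w_t = f''(\xi_t) \in [\varepsilon, \mu]$. First I would establish that, with probability at least $1 - \delta$, the ellipsoid $\mathcal{C}_t$ contains $\theta^*$ for every $t \le T$. Starting from the first-order optimality condition $\nabla F_{t-1}(\hat\theta_{t-1}) = 0$ and a mean-value expansion of $\nabla F_{t-1}$ along the segment $[\theta^*, \hat\theta_{t-1}]$, one obtains the identity
\[
V_{t-1}(\hat\theta_{t-1} - \theta^*) \;=\; \sum_{s < t} f'(\eta_s)\, A_s \;-\; \lambda \theta^*,
\]
mirroring the classical analysis but with Gram matrix $V_{t-1} = \lambda I + \sum_{s<t} w_s A_s A_s^\top$ and martingale noise $f'(\eta_s)\, A_s$ in place of $\eta_s A_s$. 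Because $|f'(\eta_s)| \le \mu |\eta_s|$ (using $f'(0) = 0$ and $\mu$-smoothness), each increment is $\mu$-sub-Gaussian, and the self-normalized tail inequality of Abbasi-Yadkori et al. applied to this weighted martingale fixes an increasing schedule $\beta_t$ depending on $\lambda, S, \mu$, and $\ln(1/\delta)$.

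On the good event, the LinUCT rule is equivalent to $A_t = \arg\max_{a,\,\theta \in \mathcal{C}_t} \langle \theta, a\rangle$, so by optimism $\langle \theta^*, A_t^*\rangle \le \langle \hat\theta_{t-1}, A_t\rangle + \beta_{t-1}\|A_t\|_{V_{t-1}^{-1}}$, and by Cauchy-Schwarz in the $V_{t-1}$-norm using $\theta^* \in \mathcal{C}_t$ one also has $\langle \hat\theta_{t-1} - \theta^*, A_t\rangle \le \beta_{t-1}\|A_t\|_{V_{t-1}^{-1}}$. Subtracting yields the one-step bound $r_t \le 2\beta_{t-1}\|A_t\|_{V_{t-1}^{-1}}$. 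I then apply Cauchy-Schwarz on $\hat R_t^{\,2} \le t \sum_{s\le t} r_s^2$ and monotonicity of $\beta_s$ to factor $\beta_t$ out, reducing the task to bounding $\sum_{s\le t}\|A_s\|_{V_{s-1}^{-1}}^2$. For this I invoke a weighted elliptical potential lemma: the matrix determinant identity gives
\[
\ln\frac{\det V_t}{\det V_0}\;=\;\sum_{s\le t}\ln\!\left(1 + w_s\|A_s\|_{V_{s-1}^{-1}}^2\right),
\]
and the inequality $x \le 2\ln(1+x)$ on $[0, 1]$ (with a standard truncation) yields $\sum_{s\le t} w_s \|A_s\|_{V_{s-1}^{-1}}^2 \le 2\ln(\det V_t/\det V_0)$; using the smoothness bound $w_s \le \mu$ to pass between the weighted and unweighted sums introduces the $\mu$ factor and produces the first inequality. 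For the second inequality, the AM-GM bound $\det V_t \le (\mathrm{tr}(V_t)/(nd))^{nd}$ combined with $\mathrm{tr}(V_t) \le nd\lambda + \mu n t$ (using $\|A_s\|_2 \le \sqrt n$ and $w_s \le \mu$) immediately gives $\ln(\det V_t/\det(\lambda I)) \le nd \ln((nd\lambda + \mu nt)/(nd\lambda))$, as stated.

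The main obstacle is re-deriving the self-normalized concentration bound in Step~1 for the \emph{weighted} martingale $\sum_{s} f'(\eta_s) A_s$: the usual Laplace-method mixing integral must be reworked against $V_t = \lambda I + \sum w_s A_s A_s^\top$ rather than the standard unit-weighted Gram matrix, and one must show that the $\mu$-sub-Gaussian increments still permit the supermartingale construction to close, with the resulting $\beta_t$ consistent with the radius implicit in the theorem. A secondary, more bookkeeping-style difficulty is the elliptical-potential conversion in Step~3: passing between $\sum w_s \|A_s\|_{V_{s-1}^{-1}}^2$ and $\sum \|A_s\|_{V_{s-1}^{-1}}^2$ naturally involves both $\mu$ and $\varepsilon$, so extracting exactly the constant $8\mu$ stated requires careful tracking of the strong-convexity and smoothness parameters through the Cauchy-Schwarz and potential estimates.
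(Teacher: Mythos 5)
Your plan follows essentially the same route as the paper's proof: a self-normalized concentration bound for the weighted martingale $\sum_s w_s A_s\eta_s$ against the weighted Gram matrix $V_t=\lambda I+\sum_s w_sA_sA_s^\top$, optimism on the confidence event, Cauchy--Schwarz, a weighted elliptical potential lemma, and finally the trace/AM--GM bound $\det V_t\le(\mathrm{tr}(V_t)/(nd))^{nd}$ to obtain the $nd\ln((nd\lambda+\mu nt)/(nd\lambda))$ factor. Your derivation of the estimator-error identity via first-order optimality and a mean-value expansion (giving noise increments $f'(\eta_s)A_s$ with $|f'(\eta_s)|\le\mu|\eta_s|$) is in fact a more careful justification of what the paper simply asserts in the form $\hat\theta_t=V_t^{-1}\sum_s w_sA_sX_s$, and your factor $2\beta_{t-1}\|A_t\|_{V_{t-1}^{-1}}$ in the one-step bound is the standard (and correct) constant, which the paper drops.

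The one substantive omission: the theorem bounds the \emph{realized} regret $\widehat R_T=\sum_t(X_t^*-X_t)$, which decomposes as the pseudo-regret plus the noise difference $\sum_t(\eta_t^*-\eta_t)$. Your optimism argument only controls the pseudo-regret term $\langle\theta^*,A_t^*-A_t\rangle$; you still need a separate sub-Gaussian/Hoeffding bound on $\sum_t(\eta_t^*-\eta_t)$ (the paper shows it is at most $2\sqrt{T\ln(1/\delta)}$ with probability $1-\delta$ and absorbs it into the main term). This is a routine step, but without it your argument does not bound the quantity the theorem actually states.
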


\paragraph{Proof sketch}
Let $S_t=\sum_{s=1}^t w_sA_s\eta_s$ and $V_t=\lambda I+\sum_{s=1}^t w_sA_sA_s^\top$.
(i) A standard self–normalized concentration (mixture supermartingale) gives, for all $t\le T$ with probability \ $\ge 1-\delta$,
\begin{equation}
S_t^\top V_t^{-1}S_t \le 2\mu \ln\!\Bigl(\tfrac{\det(V_t)^{1/2}}{\lambda^{nd/2}\delta}\Bigr)
\quad\Rightarrow\quad
\|\hat\theta_t-\theta^*\|_{V_t}\le \beta_t .
\end{equation}
(ii) By optimism of LinUCT and the confidence event,
\begin{equation}
r_t:=X_t^*-X_t \le \beta_{t-1}\|A_t\|_{V_{t-1}^{-1}}+\Delta_t,\qquad
\Delta_t:=\eta_t^*-\eta_t .
\end{equation}
Since $\eta_t,\eta_t^*$ are $1$-sub-Gaussian, $\sum_{t=1}^T\Delta_t \le 2\sqrt{T\ln(1/\delta)}$ w.p.\ $\ge 1-\delta$.

(iii) Summing and applying Cauchy–Schwarz plus the (weighted) elliptical potential lemma,
\begin{equation}
\sum_{t=1}^T \beta_{t-1}\|A_t\|_{V_{t-1}^{-1}}
\le \sqrt{T}\,\beta_T\sqrt{\,2\ln\!\Bigl(\tfrac{\det(V_T)}{\lambda^{nd}}\Bigr)} ,
\end{equation}
which, together with (ii) and the definition of $\beta_T$, yields
\begin{equation}
\widehat R_T \le
\sqrt{\,8\mu\,T\,\beta_T\,\ln\!\Bigl(\tfrac{\det(V_T)}{\lambda^{nd}}\Bigr)} .
\end{equation}

(iv) Using $w_t\le\mu$ and $\|A_t\|_2\le\sqrt n$,
$V_T\preceq \lambda I+\mu n T\,I$, hence
\begin{equation}
\ln\!\Bigl(\tfrac{\det(V_T)}{\lambda^{nd}}\Bigr)
\le nd\,\ln\!\Bigl(\tfrac{nd\lambda+\mu n T}{nd\lambda}\Bigr),
\end{equation}
giving the displayed bound in the theorem.

Choosing $\beta_t
=\sqrt{\,2\mu\,\ln\!\Bigl(\frac{\det(V_t)^{1/2}}{\det(\lambda I)^{1/2}\,\delta}\Bigr)}
\;+\;\sqrt{\lambda}\,S$, we show that the regret has the following order:

\begin{corollary} [The Order of Regret Bound for LinUCT] Under the above conditions, the cumulative regret bound of LinUCT with \(\delta=1/T\) satisfies
\begin{equation}
    \hat{R}_T = O\left(nd\cdot \sqrt{\mu T}\cdot\operatorname{ln}(T)\right).
\end{equation}
    
\end{corollary}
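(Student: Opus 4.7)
The plan is to substitute the stated choice of $\beta_T$ and an explicit bound on $\ln(\det(V_T)/\det(\lambda I))$ into the regret inequality of Theorem~\ref{thm:f-linucb}, and then read off the asymptotic order in $T$, $n$, $d$, $\mu$. No new probabilistic step is needed; the corollary is an asymptotic simplification of an already-proved high-probability bound.

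The one nontrivial ingredient is controlling the log-determinant. Since $V_T=\lambda I+\sum_{s=1}^{T} w_s A_s A_s^{\top}$ with $w_s\le\mu$ and $\|A_s\|_2^2\le n$, its trace obeys $\mathrm{tr}(V_T)\le nd\lambda+\mu n T$. Applying AM--GM to the eigenvalues of the positive-definite matrix $V_T$ yields $\det(V_T)^{1/(nd)}\le \mathrm{tr}(V_T)/(nd)$, hence
\[
\ln\!\left(\tfrac{\det(V_T)}{\det(\lambda I)}\right)\;\le\;nd\,\ln\!\left(\tfrac{nd\lambda+\mu n T}{nd\lambda}\right)\;=\;O(nd\,\ln T),
\]
which is exactly the second inequality already appearing in Theorem~\ref{thm:f-linucb}.

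Plugging this estimate into the stated $\beta_T$ with $\delta=1/T$, and expanding $\ln(\det(V_T)^{1/2}/(\det(\lambda I)^{1/2}\delta))=\tfrac12\ln(\det(V_T)/\det(\lambda I))+\ln(1/\delta)$, the two contributions are $O(\mu\,nd\,\ln T)$ and $O(\mu\,\ln T)$, so $\beta_T=O(\sqrt{\mu\,nd\,\ln T})$ (the additive $\sqrt{\lambda}\,S$ term is constant in $T$ and is absorbed into the big-$O$). Combining the two estimates in the bound of Theorem~\ref{thm:f-linucb} and simplifying then gives
\[
\widehat R_T\;=\;O\!\Bigl(\beta_T\,\sqrt{T\,\ln(\det(V_T)/\det(\lambda I))}\Bigr)\;=\;O\!\Bigl(\sqrt{\mu\,nd\,\ln T}\cdot\sqrt{T\cdot nd\,\ln T}\Bigr)\;=\;O\!\bigl(nd\,\sqrt{\mu T}\,\ln T\bigr),
\]
which is the claimed order.

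The main bookkeeping trap, and the only place a careful eye is required, is tracking the polylogarithmic exponents: AM--GM rather than the looser eigenvalue bound $\det(V_T)\le(\lambda+\mu n T)^{nd}$ must be used so that the final factor is a clean $\ln T$ instead of $(\ln T)^{3/2}$, and the $\sqrt{\mu}$ coming from $\beta_T$ must be distributed against the weighted elliptic-potential term so that the dependence on $\mu$ collapses to $\sqrt{\mu T}$ rather than $\mu\sqrt{T}$.
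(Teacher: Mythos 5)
Your derivation is correct and follows the same route the paper intends: substitute $\beta_T=O\bigl(\sqrt{\mu\,nd\,\ln T}\bigr)$ (with $\delta=1/T$) and $\ln\bigl(\det(V_T)/\det(\lambda I)\bigr)=O(nd\,\ln T)$ into the intermediate bound $\widehat R_T\lesssim\beta_T\sqrt{T\,\ln(\det(V_T)/\lambda^{nd})}$ from the proof of Theorem~\ref{thm:f-linucb}, which yields $O\bigl(nd\,\sqrt{\mu T}\,\ln T\bigr)$ exactly as claimed. Your closing remark about needing AM--GM to avoid a $(\ln T)^{3/2}$ factor is immaterial, since the cruder eigenvalue bound $\det(V_T)\le(\lambda+\mu n T)^{nd}$ also gives $\ln(\det(V_T)/\lambda^{nd})=O(nd\,\ln T)$ and hence the same final order, but this does not affect the validity of your argument.
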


The regret bound of LinUCT in Theorem \ref{thm:f-linucb} only depends on $nd$ rather than the exponential size of the joint action space. The general convex loss $f$ incurs an extra multiplicative factor \(\sqrt{\mu}\) compared with the standard results of contextual bandit~\cite{lattimore2020bandit}.

\subsection{Dynamic Node Generation} 

MALinZero allows modeling the joint action space using low-dimensional representation, thus significantly speeding up exploration and exploitation in multi-agent MCTS. Specifically, when the leaf node \(\Upsilon\) in the search path is visited for the first time, \(\kappa =\zeta \chi\) nodes will be sampled as child nodes where \(\zeta\) is the dynamic generation ratio and \(\chi\) is the maximum number of child nodes. In the subsequent \textit{Selection} stage, node \(\Upsilon\) will utilize the cumulative \(\theta\) and \({V}\) (We omit the subscript $t$ in this section for abbreviated notations) to search for the potential optimal action from the entire joint action space and add it as the new child node. If there is no node with a higher value, \textit{Selection} will sample and compare the existing ones. The detailed process can be found in Algorithm~\ref{algorithm}.

For a root or leaf node \(\Upsilon\), \(\kappa =\zeta \chi\) nodes are sampled for initialization similar to MAZero. The next time \(\Upsilon\) is visited, MALinZero selects optimal action using LinUCT with search policy \(P(s,a)\): 
\begin{equation}
\label{eq:action}
    a = {\arg\max}_{a\in\mathcal{A}} \Psi(a) = {\arg\max}_{a\in\mathcal{A}}\;\; a^\top \theta +  c(s)P(s,a) \operatorname{trace}(V)\sqrt{a^\top V^{-1}a}
\end{equation}
where \(c(s)\) is a constant, $\Psi(a)$ is the objective function for action selection, and \(P(s,a)\) is the search policy used as a prior information in LinUCT similar to MuZero~\cite{muzero}. If the selected action for which the corresponding node does not exist, this node is added after \textit{Selection}. Once a node has \(\chi\) child nodes, it only selects next action \(a\) from current children. After a root-to-leaf search path is completed, \(\theta\) and \(V\) are updated through the search path from the leaf node as procedure \textit{Back-Propagation} in Algorithm \ref{algorithm}. 

\textbf{Remark.}
MuZero \cite{muzero} selects nodes/actions in MCTS via (\ref{ucb}) where the term \(\sqrt{ {\sum_b}
N(s,b)}\) represents the total sampling time.

In MALinZero, we utilize \(\operatorname{trace}(V)\) to achieve the same effect. We use \(\operatorname{trace}(V)\) rather than its square root due to the existence of $\sqrt{a^\top V^{-1}a}$ in LinUCT. It ensures that the scale of exploration term can keep stable with the increasing times of selection. Using the definition of $V$ and the fact that actions $A$ are $n$-hot vectors, it is easy to show that \(\operatorname{trace}({V})\) increases linearly with $N$ and sampling time. For a single-agent problem, (\ref{eq:action}) indeed reduces to (\ref{ucb}), recovering existing result as a special single-agent case.

With Dynamic Node Generation (DNG), we can sample and add new child nodes according to LinUCT. In other words, the \(\kappa\) sampled child nodes are used to bootstrap a low-dimensional representation of the joint reward over the entire joint action space, thus enabling fast exploration and exploitation in MALinZero. Let ground set \(\mathcal{A}\) be the set of all \(n\)-hot vectors in \(\mathbb{R}^{nd}\) where each vector \(a\in\mathcal{A}\) satisfies: in each of the \(n\) disjoint \(k\)-dimensional blocks, exactly one entry is 1 with others are 0. Let \(\mathcal{S}\) be the set of selected actions and rewrite \(V(\mathcal{S}) = \lambda I + \sum_{a\in\mathcal{S}} a a^\top\) using \(\mathcal{S}\). We show that the objective function $\Psi(a)$ for action selection is sub-modular.

\begin{theorem}\label{thm:submodular}[Submodularity of \(\Psi\)] \(\Psi\) is a non-negative monotonic submodular function over the ground set \(\mathcal{A}\).
\end{theorem}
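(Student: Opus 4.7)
The plan is to view $\Psi$ as a set function over the ground set $E=\{(i,j):1\le i\le n,\,1\le j\le d\}$ of per-agent actions, identifying each joint action $a\in\mathcal{A}$ with its size-$n$ support $S(a)\subseteq E$ (exactly one element per block, i.e., the partition matroid) and indicator $a_S\in\{0,1\}^{nd}$. With $s,\theta,V,c(s)$ fixed during a single \emph{Selection} call, I set $\Psi(S):=\Psi(a_S)$ and split it as $\Psi=\Psi_{\mathrm{lin}}+\Psi_{\mathrm{exp}}$, where $\Psi_{\mathrm{lin}}(S)=\sum_{e\in S}\theta_e$ is modular and $\Psi_{\mathrm{exp}}(S)=c(s)\,P(s,a_S)\,\mathrm{tr}(V)\,h(S)$ with $h(S)=\sqrt{a_S^\top V^{-1} a_S}$. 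The linear piece is both sub- and supermodular; non-negativity can be enforced by a constant shift on $\theta$ absorbed into the regularizer, which preserves every argmax.

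For $\Psi_{\mathrm{exp}}$, non-negativity is immediate from the non-negativity of all factors, and monotonicity of $h$ follows from writing $V^{-1}=U^\top U$ so that $h(S)=\|U a_S\|_2$; the marginal $h(S\cup\{e\})^2-h(S)^2=\|U\mathbf{1}_e\|_2^2+2\langle Ua_S,U\mathbf{1}_e\rangle$ is non-negative in the partition-feasible region after expanding $V^{-1}=(\lambda I+\sum_t w_tA_tA_t^\top)^{-1}$ via Sherman--Morrison over the $n$-hot observed actions that built $V$. Submodularity is the crux; the plan is to verify the discrete-Hessian inequality
\[
\Delta_{e_1,e_2}\Psi(S) \;=\; \Psi(S\cup\{e_1,e_2\})-\Psi(S\cup\{e_1\})-\Psi(S\cup\{e_2\})+\Psi(S) \;\le\; 0
\]
by first writing $h(S\cup\{e_i\})^2=h(S)^2+\Delta_i$ with $\Delta_i:=(V^{-1})_{e_i,e_i}+2\sum_{e'\in S}(V^{-1})_{e_i,e'}$ and $h(S\cup\{e_1,e_2\})^2=h(S)^2+\Delta_1+\Delta_2+2(V^{-1})_{e_1,e_2}$; the $h$-part of the inequality then reduces to $\sqrt{a+x}+\sqrt{a+y}\ge\sqrt{a}+\sqrt{a+x+y+2(V^{-1})_{e_1,e_2}}$ with $a=h(S)^2,\,x=\Delta_1,\,y=\Delta_2$, which follows from concavity of $\sqrt{\cdot}$ whenever $(V^{-1})_{e_1,e_2}\le 0$, and otherwise from a Cauchy--Schwarz slack in the diagonal entries. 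The factored prior $P(s,a_S)=\prod_i P_i(s,j_i)$, standard for a decentralized policy head, contributes a modular $\log P$ that is absorbable into a per-agent bonus on $\theta$, leaving $h$ to carry the entire non-linear content.

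The hardest step will be that submodular inequality for $h$: a textbook ``concave-of-modular'' argument does not apply directly because the inner quadratic $a_S^\top V^{-1}a_S$ is \emph{not} modular, so one must show that the cross-terms $(V^{-1})_{e_1,e_2}$ between distinct blocks, dampened by the square root and constrained by the partition matroid, still produce a non-positive discrete Hessian. The specific structure of $V$ as $\lambda I$ plus PSD rank-ones from $n$-hot vectors, combined with the Cauchy--Schwarz control $|(V^{-1})_{e_1,e_2}|\le\sqrt{(V^{-1})_{e_1,e_1}(V^{-1})_{e_2,e_2}}$ on the off-diagonals, is what I expect to drive the bound. Once this is in hand, summing the modular $\Psi_{\mathrm{lin}}$ with the monotone submodular $\Psi_{\mathrm{exp}}$ yields the claim, and the $(1-\tfrac{1}{e})$-approximation guarantee for joint action selection follows from the classical Nemhauser--Wolsey--Fisher greedy result on monotone submodular maximization under a partition-matroid constraint; the full algebraic verification is deferred to the appendix.
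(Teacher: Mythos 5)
Your proposal formalizes a different set function than the one the paper analyzes. The paper's ground set $\mathcal{A}$ is the collection of $n$-hot \emph{joint actions}, and its $\Psi$ is a function of a subset $S\subseteq\mathcal{A}$ of joint actions with a \emph{set-dependent} matrix $V(S)=\lambda I+\sum_{v\in S}vv^{\top}$ (the appendix works with $\Psi(S)=\sum_{v\in S}\bigl(v^{\top}\theta+\|v\|_{V(S)^{-1}}\bigr)$, which is what the budget-$T$ greedy of Theorem 3 consumes). You instead fix $V$ and treat $\Psi$ as a set function over the $nd$ \emph{per-agent} actions, identifying a joint action with its support in the partition matroid. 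That is a defensible reading of equation (6), but it is not the statement the paper proves, and the two do not reduce to one another.

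More importantly, several load-bearing steps in your own formalization fail. First, the prior enters multiplicatively, $\Psi_{\mathrm{exp}}(S)=c(s)\,P(s,a_S)\,\operatorname{trace}(V)\,h(S)$; a log-modular factor $\prod_i P_i$ times a submodular $h$ is not submodular, and $\log P$ cannot be ``absorbed into $\theta$'' because $P$ multiplies the square-root term, not the linear term. Second, monotonicity of $h(S)=\sqrt{a_S^{\top}V^{-1}a_S}$ is false for the paper's $V$: with a single observed $n$-hot action $A_1$ and $V=\lambda I+A_1A_1^{\top}$, Sherman--Morrison gives $(V^{-1})_{e,e'}=-(\lambda^2+n\lambda)^{-1}$ for distinct $e,e'$ in the support of $A_1$, so completing that support from $|S|=n-1$ changes $h^2$ by $(\lambda-n+1)/(\lambda^2+n\lambda)$, which is negative whenever $\lambda<n-1$ (the paper uses $\lambda=10^{-4}$), and $S$ is partition-feasible. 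Third, your reduction of submodularity to $\sqrt{a+x}+\sqrt{a+y}\ge\sqrt{a}+\sqrt{a+x+y+2(V^{-1})_{e_1e_2}}$ genuinely fails when $(V^{-1})_{e_1e_2}>0$ and $x,y$ are small or zero (take $x=y=0$: the inequality reads $2\sqrt{a}\ge\sqrt{a}+\sqrt{a+2c}$ with $c>0$, which is false); positive cross-block entries of $V^{-1}$ do arise for $n\ge 3$ between actions that never co-occur in the data but share a co-occurring neighbor, and the marginals $\Delta_i$ can vanish or go negative as in the monotonicity counterexample. The ``Cauchy--Schwarz slack'' is asserted, not proved, and the step you flag as the hardest is in fact where the claim breaks rather than where the computation is deferred.
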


Hence, to solve the optimization for action selection in (\ref{eq:action}), we have to maximize a submodular function, which is shown to be \(NP\)-hard~\cite{submodular1, submodular2} by reduction from the classical Max-Coverage problem. Fortunately, there exists an \((1-\frac{1}{e})\)-approximation algorithm~\cite{nemhauser1978analysis} to solve this optimization. Let 
\(\Psi:2^{\mathcal A}\to\mathbb R_{\ge0}\) 
be a monotone submodular function.  Fix a budget \(T\in\mathbb N\) and let
\(\mathcal A=\bigsqcup_{i=1}^n B_i\) be partitioned into \(n\) blocks (so that any feasible set contains at most one element from each \(B_i\); i.e.\ an \(n\)-hot constraint).

\begin{theorem}\label{thm:approximation}
[{\((1-\tfrac1e)\)-Approximation under Cardinality and \(n\)-Hot Constraints}] There exists an [\((1-\tfrac1e)\)-approximation algorithm for the optimization of action selection.

\medskip
\noindent
\textbf{(a) Uniform‑matroid (cardinality) case \(\lvert S\rvert\le T\).}\;
The standard greedy algorithm
\[
A_t \;=\;\arg\max_{a\in\mathcal A\setminus S_{t-1}}
           \bigl[\Psi(S_{t-1}\cup\{a\}) - \Psi(S_{t-1})\bigr],
\quad
S_t = S_{t-1}\cup\{A_t\},
\]
for \(t=1,\dots,T\), returns \(S_T\) satisfying
\(
\Psi(S_T)\;\ge\;\bigl(1-\tfrac1e\bigr)\,\Psi(S^\star),
\)
where \(S^\star\) is an optimal subset of size at most \(T\) \cite{nemhauser1978analysis}.

\medskip
\noindent
\textbf{(b) \(n\)-Hot (partition‑matroid) case.}\;
One may apply the continuous‑greedy algorithm to the multilinear relaxation
\(\max_{x\in P(\mathcal M),\;\mathbf1^\top x\le T}\mathbb E[\Psi(R(x))]\),
where \(P(\mathcal M)\) is the matroid polytope of the partition matroid and \(R(x)\) denotes the standard randomised rounding. It produces a feasible set \(\hat S\) with
\(
\Psi(\hat S)\;\ge\;\bigl(1-\tfrac1e\bigr)\,\Psi(S^\star)
\)
\cite{calinescu2011maximizing}.
\end{theorem}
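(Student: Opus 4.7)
The proof plan is to reduce both parts to well-known $(1-\tfrac{1}{e})$-approximation guarantees for monotone submodular maximization, using Theorem~\ref{thm:submodular} as the key input. Because Theorem~\ref{thm:submodular} establishes that $\Psi$ is non-negative, monotone, and submodular over the ground set $\mathcal{A}$, both claims reduce to (i) recognising the feasible region as the appropriate matroid and (ii) invoking the corresponding classical approximation theorem. No new combinatorial insight is needed beyond Theorem~\ref{thm:submodular}.

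For part~(a), the constraint $|S|\le T$ defines a uniform matroid of rank $T$, and I would run through the standard greedy-submodularity inequality. Let $S^\star$ be an optimal size-$T$ subset and $S_t$ the greedy iterate at step $t$. By monotonicity and submodularity,
\begin{equation}
\Psi(S^\star)-\Psi(S_t)\;\le\;\sum_{a\in S^\star\setminus S_t}\bigl(\Psi(S_t\cup\{a\})-\Psi(S_t)\bigr)\;\le\;T\,\bigl(\Psi(S_{t+1})-\Psi(S_t)\bigr),
\end{equation}
since greedy selects the element of largest marginal gain and $|S^\star\setminus S_t|\le T$. Unrolling the recursion $\Psi(S^\star)-\Psi(S_{t+1})\le (1-1/T)\bigl(\Psi(S^\star)-\Psi(S_t)\bigr)$ yields $\Psi(S_T)\ge\bigl(1-(1-1/T)^T\bigr)\Psi(S^\star)\ge(1-\tfrac{1}{e})\Psi(S^\star)$, which is precisely the Nemhauser--Wolsey--Fisher bound.

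For part~(b), the $n$-hot feasibility (at most one element per block $B_i$) is a partition matroid $\mathcal{M}$, and the constraint $\mathbf{1}^\top x\le T$ is a uniform-matroid side constraint on the matroid polytope $P(\mathcal{M})$. I would invoke the continuous-greedy algorithm of Calinescu--Chekuri--P\'al--Vondr\'ak on the multilinear extension $F(x)=\mathbb{E}[\Psi(R(x))]$, which inherits monotone submodularity from $\Psi$. The Frank--Wolfe-style flow $\dot{x}(t)=v(t)$ with $v(t)\in\arg\max_{v\in P(\mathcal{M}),\,\mathbf{1}^\top v\le T}\langle v,\nabla F(x(t))\rangle$ produces a fractional point $x^{\mathrm{cg}}$ satisfying $F(x^{\mathrm{cg}})\ge(1-\tfrac{1}{e})F(x^\star)$. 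Swap rounding, which is exact on partition matroids and preserves the expected value of any monotone submodular multilinear extension, then returns a feasible integral $\hat{S}$ with $\Psi(\hat{S})\ge(1-\tfrac{1}{e})\Psi(S^\star)$.

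The main obstacle is really just bookkeeping: one must confirm that the action-level bonus $c(s)P(s,a)\operatorname{trace}(V)\sqrt{a^\top V^{-1}a}$ appearing in the definition of $\Psi$ is the per-element contribution of the same set function $\Psi:2^{\mathcal{A}}\to\mathbb{R}_{\ge 0}$ to which Theorem~\ref{thm:submodular} applies, and that the budget $T$ and partition structure correspond faithfully to the $\kappa=\zeta\chi$ sampled child nodes used in Dynamic Node Generation. Once this identification is in place, the $(1-\tfrac{1}{e})$ bounds of~\cite{nemhauser1978analysis} and~\cite{calinescu2011maximizing} transfer directly, and neither greedy nor continuous greedy requires any adjustment to accommodate the ellipsoidal confidence term carried over from LinUCT.
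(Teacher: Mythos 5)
Your proposal is correct and follows essentially the same route as the paper: both reduce to the non-negativity, monotonicity, and submodularity of $\Psi$ established in Theorem~\ref{thm:submodular}, then invoke the classical greedy bound of \cite{nemhauser1978analysis} for the cardinality case and continuous greedy with swap/pipage rounding from \cite{calinescu2011maximizing} for the partition-matroid case. Your version merely writes out the standard greedy recursion $\Psi(S^\star)-\Psi(S_{t+1})\le(1-\tfrac1T)\bigl(\Psi(S^\star)-\Psi(S_t)\bigr)$ explicitly where the paper cites it, so there is no substantive difference.
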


Thus, under the stronger \(n\)-hot (partition‑matroid) constraint, there exists an efficient algorithm to compute action selection in MALinZero with {\((1-\tfrac1e)\)-approximation.

\begin{algorithm}[H]
\caption{MALinZero}
\begin{minipage}[t]{0.55\textwidth}
\label{algorithm}

\begin{algorithmic}[1]
    \Procedure{Dynamic Node Generation}{}
        \State \(a \gets \arg\max_{a\in \mathcal{A}} a^\top\theta+c(s)P(s,a) {\operatorname{trace}(V)}\sqrt{a^\top V^{-1} a}\) 
        \State \Return (s,a)
    \EndProcedure
\end{algorithmic}

\begin{algorithmic}[1]
    \Procedure{Expansion}{}
    \State \Comment{\(M'\) is the number of nodes generated by sampling.}
    \For{\(i = 1, \dots, M'\)} 
        \State \(a_i \gets \text{sample with }\beta \text{ and } P\) as Sampled MuZero\cite{sampled_muzero}
        \State \(T(s) \gets T(s) \cup (s,a_i)\)
    \EndFor
    \EndProcedure
\end{algorithmic}

\begin{algorithmic}[1]
    \Procedure{Selection}{} 
    \If{\text{number of child nodes < M}}
        \State \((s,a) \gets \Call{Dynamic Node Generation}{}\)
        \State \(T(s) \gets T(s) \cup (s,a)\)
    \Else{}
        \State \(a \gets \arg\max_{a\in T(s)} a^\top \theta + c(s)P(s,a) {\operatorname{trace}(V)}\sqrt{a^\top V^{-1} a}\)
    \EndIf
    \State \Return \text{Index of \((s,a)\)}
    \EndProcedure
\end{algorithmic}
\end{minipage}
\begin{minipage}[t]{0.5\textwidth}
\begin{algorithmic}[1]
    \Procedure{Back-Propagation}{}
        \For{\((s,a)\in\operatorname{path}\)}
            \State Let \(k,l\) be the depth of the current node \(s\) and the leaf node.
            
            \Statex \Comment{The weighting could be replaced with strongly-convex \(\mu\)-smooth function for better performance.}            
            
            \If{Observed reward \(X_k\le Q(s, a)\)}
                \State \(w \gets w_1\)
            \Else
                \State \(w \gets w_2\)
            \EndIf
            \State Calculate the cumulative discounted reward \(G(s)\gets \sum _{\tau=0}^{l-1-k} \gamma^\tau X_{k+1+\tau} + \gamma^{l-k} v^l \)
            \State \(Q(s, {a}) \gets \frac{N(s,a)Q(s, {a}) + G(s)}{N(s, {a})+1} \)
            \State \(N(s, {a}) \gets N(s, {a})+1\)
            \State \( {V}(s) \gets  {V}(s)+w a^\top  a\)
            \State \(\theta(s) \gets V(s)^{-1} X_k a\)
        \EndFor
    \EndProcedure
\end{algorithmic}
\end{minipage}

\end{algorithm}

\textbf{Efficient Back-Propagation} The update of \(\theta\) and \(V\) involves large matrix manipulation, of which the time complexity is \(\mathcal{O}(n^2d^2)\) and the space complexity is \(\mathcal{O}(n^2d^2)\). To mitigate the computation complexity, we design an efficient back-propagation (as shown in Algorithm~\ref{algorithm}) to reduce both time and space complexity to \(\mathcal{O}(nd)\) based on the Sherman-Morrison formula~\cite{sherman1950adjustment}. 

We consider the update of $A^T\hat{\theta}_t$ and \(\sqrt{A^TV_t^{-1}A}\) in LinUCT. Using the definition of $V_t$ and $\hat{\theta}_t$, it is easy to show that these can be obtained by storing and recursively updating $\hat{\theta}_t$ and $V_t^{-1}A$: 
\[
V_{t+1}^{-1} A = V_{t}^{-1} A - \frac{V_{t}^{-1} AA^T V_{t}^{T} A_i}{1+A^T V_{t}^{T} A} \ {\rm and} \ \hat{\theta}_{t+1} = V_t^{-1}M_t - \frac{V_{t}^{-1} AA^T V_{t}^{-1} M_t}{1+A^T V_{t}^{-1} A},
\]
where $A_i$ is the action corresponding to the \(i\)-th child node, \({A}\) is the action of nodes in the back-propagation path, and where \( {M}_t = \sum_{s=1}^t  w_s {A}_s X_s\) is an auxiliary variable. 
\begin{theorem} [Complexity of the Back-Propagation to update $\hat{\theta}_t$ and $V_t^{-1}A$] \label{thm:efficient Backup}
The proposed method computes the same LinUCT, but reduces the computation complexity from \(\mathcal{O}(n^2d^2)\) to \(\mathcal{O}(nd)\).
\end{theorem}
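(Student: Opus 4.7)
The plan is to verify that the proposed recursion computes the same quantities as the direct approach by a single application of the Sherman--Morrison rank-one update, and then to count arithmetic operations while exploiting the fact that every action vector $A$ is $n$-hot.

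First, I would establish correctness. At a back-propagation step we have $V_{t+1}=V_t+w_{t+1}A_{t+1}A_{t+1}^{\top}$ (a weighted rank-one update), so Sherman--Morrison yields
\begin{equation*}
V_{t+1}^{-1}=V_t^{-1}-\frac{w_{t+1}\,V_t^{-1}A_{t+1}A_{t+1}^{\top}V_t^{-1}}{1+w_{t+1}A_{t+1}^{\top}V_t^{-1}A_{t+1}}.
\end{equation*}
Right-multiplying both sides by any child-action vector $A_i$ reproduces the stated recursion for $V_{t+1}^{-1}A_i$. Combining this identity with $M_{t+1}=M_t+w_{t+1}A_{t+1}X_{t+1}$ and the defining relation $\hat\theta_t=V_t^{-1}M_t$ yields the displayed recursion for $\hat\theta_{t+1}$. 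Consequently the proposed scheme reproduces the LinUCT quantities $\langle a,\hat\theta_t\rangle$ and $a^{\top}V_t^{-1}a$ exactly, and the selection rule in (\ref{eq:action}) is unchanged.

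Second, I would analyse the cost. The central observation is that we never materialise $V_t^{-1}\in\mathbb{R}^{nd\times nd}$; along the back-propagation path we instead cache, for each relevant child action $A_i$, only the $nd$-vector $V_t^{-1}A_i$, together with the $nd$-vector $\hat\theta_t$. A rank-one step then requires two scalars $A_{t+1}^{\top}V_t^{-1}A_i=\langle A_{t+1},V_t^{-1}A_i\rangle$ and $A_{t+1}^{\top}V_t^{-1}A_{t+1}$, each computable in $\mathcal{O}(n)$ because $A_{t+1}$ is $n$-hot; an $nd$-vector subtraction $V_t^{-1}A_i-c\,V_t^{-1}A_{t+1}$ in $\mathcal{O}(nd)$; and the analogous rescaled subtraction for $\hat\theta_t$. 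Total time and space per step are therefore $\mathcal{O}(nd)$, compared with the $\mathcal{O}((nd)^2)=\mathcal{O}(n^2d^2)$ required to maintain $V_t^{-1}$ explicitly and multiply it by each $A_i$. The exploration trace factor is also cheap, since $\operatorname{trace}(V_{t+1})=\operatorname{trace}(V_t)+w_{t+1}\lVert A_{t+1}\rVert_2^2=\operatorname{trace}(V_t)+w_{t+1}n$, updateable in $\mathcal{O}(1)$.

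The main place where care is needed is the bookkeeping for newly generated child actions: when dynamic node generation introduces a brand-new $A_j$ at the current node, an initial $V_t^{-1}A_j$ has to be produced without reverting to the full inverse. This can be done within the $\mathcal{O}(nd)$ budget by instantiating $V_t^{-1}A_j$ the first time $A_j$ is expanded and then propagating it forward by the same Sherman--Morrison recursion thereafter. This one-time initialisation, rather than the steady-state recursion, is the step where I expect the argument to require the most care; once it is handled, collecting the per-step bounds immediately gives the claimed $\mathcal{O}(nd)$ time and space complexity.
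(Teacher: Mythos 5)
Your proposal is correct and follows essentially the same route the paper takes: a Sherman--Morrison rank-one update of $V_t$, caching only the $nd$-vectors $V_t^{-1}A_i$ and $\hat\theta_t$ (via $M_t$) rather than the full inverse, and exploiting the $n$-hot structure for the inner products. In fact your write-up is more careful than the paper's, which gives no formal proof in the appendix and whose displayed recursion drops the weight $w_{t+1}$ (and contains $V_t^{T}$ typos for $V_t^{-1}$); your inclusion of the weight and your explicit treatment of initialising $V_t^{-1}A_j$ for newly expanded children are both correct and necessary for the claim to hold.
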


\section{Experiments}
We evaluate MALinZero on three reinforcement learning benchmarks: MatGame, StarCraft Multi-Agent Challenge (SMAC)\cite{samvelyan19smac} and SMACv2 \cite{ellis2023smacv2}.
MatGam is a stateless‑matrix game that generalizes the classic normal‑form setting to \(n\) agents. At every step, all agents select an action from the same discrete set; the environment then looks up the joint action in a predefined payoff (with or without noise) tensor and returns the corresponding shared reward, which is used to evaluate algorithms' performance. MALinZero is compared with both model-based and model-free baseline models on these environments. The model-based algorithms are MAZero~\cite{MAZero}, MAZero without prior information (MAZero-NP) and MuZero implemented for multi-agent tasks (MA-AlphaZero). We also choose two mainstream model-free MARL algorithms: MAPPO~\cite{mappo} and QMIX~\cite{QMIXmixmab}.   

\paragraph{Model architecture} MALinZero consists of 6 neural networks to be learned during the training and the parameter \(\theta\) is to be estimated from initialization for a single MCTS process. Specifically, with network parameter \(\phi\), there are 6 key functions: the representation function \(s^i_{t,0}=h_\phi(o^i_{\le t})\) that maps the current individual observation history into the latent space, the communication function $e^1_{t,k},\dots, e^n_{t,k} = e_\phi(s^1_{t,k},\dots,s^n_{t,k},a^1_{t,k},\dots,a^n_{t,k})$ that generates cooperative information for each agent via the attention mechanism, the dynamic function \(s^i_{t,k+1} = g_\phi(s^i_{t,k}, a^i_{t+k}, e^i_{t,k})\) that plays the role of transition function, the reward function \(r_{t,k}=r_\phi (s^1_{t,k},\dots,s^n_{t,k},a^1_{t,k},\dots,a^n_{t,k})\) and the value function \(v_{t,k}=v_\phi(s^1_{t,k},\dots,s^n_{t,k})\) that predicts the reward and value respectively, and the policy function \(p^i_{t,k} = p_\phi(s^i_{t,k})\) that predicts the policy distribution for the given state. The subscript \(k\) denotes the index of unrolling steps within one simulation from the root node in MCTS. The update of estimated \(\theta\) takes place in the Back-propagation stage and the detailed process is analyzed above. For all these modules except for the communication function \(e_\phi\), the neural networks are implemented by Multi-Layer Perception (MLP) networks and a Rectified Linear Unit (ReLU) activation and Layer Normalization (LN) follows each linear layer in MLP networks. Agents process local dynamics and make predictions with the encoded information.

\paragraph{Experiment setting} All experiments are conducted using NVIDIA RTX A6000 GPUs and NVIDIA A100 GPUs. For MatGame environments, the number of sampled actions for each node in MCTS is 3 and the number of MCTS simulations is 50. For both SMAC and SMACv2 benchmarks, we set them as 7 and 100, respectively. We build our training pipeline similar to EfficientZero \cite{efficientzero} which synchronizes parallel stages of data collection, reanalysis, and training.

\begin{table*}[h]
\centering
\setlength{\tabcolsep}{0.6mm}{
\scalebox{0.8}{
\begin{tabular}{@{}cccc|c|c|c|c|c|c@{}}
\toprule\toprule
Agent & Action & Type & Steps & MAZero & MAZero-NP & MA-AlphaZero & MAPPO & QMIX & MALinZero(Ours) \\ \hline
2     & 3      & Linear   & 500  & \(51.9\pm2.3\)      & \(49.7\pm3.9\)         &  \(50.8\pm3.2\)            &  \(50.2\pm2.9\)     &  \(50.4\pm3.5\)    & \(\mathbf{53.1\pm0.9}\) \\
2     & 3      & Linear   & 1000  & \(57.8\pm2.4\)      & \(53.1\pm3.3\)         & \(55.2\pm2.7\)            & \(56.4\pm3.1\)      & \(54.3\pm3.17\)    & \(\mathbf{59.9\pm0.2}\) \\
\hline
2     & 3      & Non-Linear   & 500  & \(49.1\pm15.3\)      & \(48.9\pm17.2\)         & \(49.0\pm16.4\)            & \(49.1\pm19.1\)     & \(48.7\pm18.6\)    & \(\mathbf{49.2\pm8.6}\) \\
2     & 3      & Non-Linear   & 1000  & \(47.6\pm14.7\)      & \(49.3\pm14.3\)         & \(49.2\pm12.9\)            & \(49.5\pm18.1\)      & \(49.1\pm17.7\)     & \(\mathbf{49.6\pm15.5}\) \\
\hline

4     & 5      & Linear   & 1000  & \(175.2\pm4.4\)     & \(171.7\pm5.6\)         & \(172.7\pm4.1\)            & \(173.1\pm5.4\)     & \(171.8\pm4.9\)    & \(\mathbf{184.3\pm3.2}\) \\
4     & 5      & Linear   & 2000  & \(191.7\pm2.3\)       & \(190.1\pm1.2\)         & \(190.4\pm1.9\)            & \(189.8\pm2.1\)     & \(190.2\pm1.8\)    & \(\mathbf{197.4\pm2.1}\) \\
\hline
4     & 5      & Non-Linear   & 1000  & \(179.4\pm11.7\)      & \(173.2\pm10.0\)         & \(174.5\pm9.3\)            & \(173.1\pm8.0\)     & \(174.7\pm9.4\)    & \(\mathbf{182.4\pm11.7}\) \\
4     & 5      & Non-Linear   & 2000  & \(195.4\pm20.0\)      & \(192.4\pm12.8\)         & \(192.7\pm11.4\)            & \(191.9\pm12.5\)     & \(190.3\pm10.7\)    & \(\mathbf{197.8\pm21.1}\) \\
\hline

6     & 8      & Linear   & 1000  & \(393.7\pm9.9\) &\(387.2\pm10.1\)  & \(389.3\pm8.4\) & \(390.6\pm9.2\)    & \(386.1\pm10.4\)  & \(\mathbf{396.6\pm8.4}\)   \\
6     & 8      & Linear   & 2000  & \(434.2\pm7.2\)     & \(427.3\pm9.3\)         & \(432.6\pm9.5\)            & \(431.8\pm8.4\)     & \(430.1\pm9.5\)    & \(\mathbf{439.8\pm6.8}\)   \\
\hline
6     & 8      & Non-Linear   & 1000  & \(399.8\pm13.7\)      & \(391.3\pm10.3\)         &  \(393.1\pm12.1\)           & \(388.8\pm13.1\)     & \(390.5\pm12.2\)    & \(\mathbf{410.6\pm8.9}\) \\
6     & 8      & Non-Linear  & 2000  & \(443.9\pm12.1\)      & \(429.1\pm9.3\)         & \(427.1\pm8.6\)            & \(430.1\pm8.5\)     & \(431.7\pm7.6\)    & \(\mathbf{451.1\pm12.8}\) \\
\hline

8     & 10      & Linear   & 1000  & \(618.8\pm16.9\)      & \(608.8\pm17.6\)         & \(613.1\pm13.1\)           & \(617.1\pm11.1\)     & \(612.7\pm15.4\)    & \(\mathbf{637.1\pm15.8}\) \\
8     & 10     & Linear   & 2000  & \(692.7\pm14.5\)      & \(671.5\pm13.9\)         & \(654.3\pm14.5\)             & \(681.8\pm12.5\)      & \(679.4\pm12.7\)     & \(\mathbf{705.2\pm15.7}\) \\
\hline
8     & 10      & Non-Linear   & 1000  & \(615.2\pm18.7\)      & \(536.6\pm24.1\)         & \(573.2\pm22.7\)            & \(561.4\pm20.9\)      & \(558.7\pm19.1\)    & \(\mathbf{630.1\pm16.3}\) \\
8     & 10      & Non-Linear  & 2000  & \(672.3\pm16.1\)      & \(587.2\pm18.4\)         & \(633.2\pm15.6\)            & \(657.1\pm17.3\)     & \(648.2\pm18.7\)5    & \(\mathbf{693.4\pm15.6}\) \\

\bottomrule\bottomrule
\end{tabular}}}
\vspace{-0.1in}
\caption{Evaluation in MatGame with different numbers of agents and actions. We consider both linear and non-linear reward structures. MALinZero is shown to outperform both MCTS and MARL baselines, especially in more complex MatGames with larger action spaces and with less numbers of steps. Interestingly, the improvement is higher for non-linear reward structures (up to \%11), as baselines may stuck in local optima.
Detailed MatGame settings can be found in Appendix D.}
\label{tab:table_1}
\end{table*}

\paragraph{Performance Evaluation} MALinZero outperforms all baselines in 8 MatGame environments. 
As shown in Table~\ref{tab:table_1}, the performance improvements are achieved in even simple MatGames (a few percent for 2 agents with 3 actions each, thus a space of only 9 joint actions) and increases for more complex MatGames (such as up to 11\% for 8 agents each with 10 actions, thus a space of \(8^{10}\) joint actions). This makes sense since the benefit of MALinZero comes from representing high-dimensional joint action space into lower-dimensional ones. Interestingly, the improvements are higher in MatGames with non-linear reward structures. This is because MALinZero is able to model the entire joint action space -- despite in a lower dimensional space, while baselines may get stuck in local optima. MALinZero is also able to achieve the rewards much faster than baselines. Running the LinUCB algorithm will incur minor additional cost. However, the computation leverages a linear structure with sampling \(\mathcal{O}(dn)\) actions rather than the standard \(\mathcal{O}(d^n)\). Our evaluation shows that the computational cost is comparable to that of the MAZero \cite{MAZero} method.

\begin{figure}[H]
  \vspace{-0.1in}
  \centering
  \includegraphics[width=0.32\linewidth]{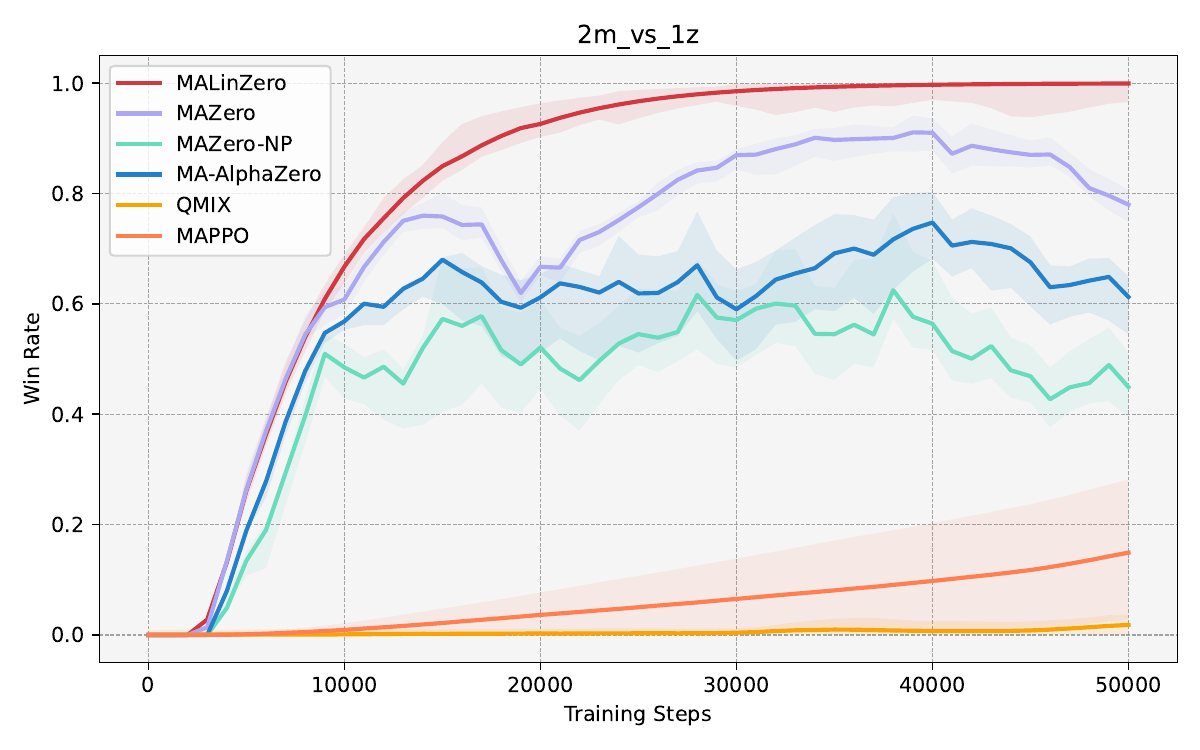}\hfill
  \includegraphics[width=0.32\linewidth]{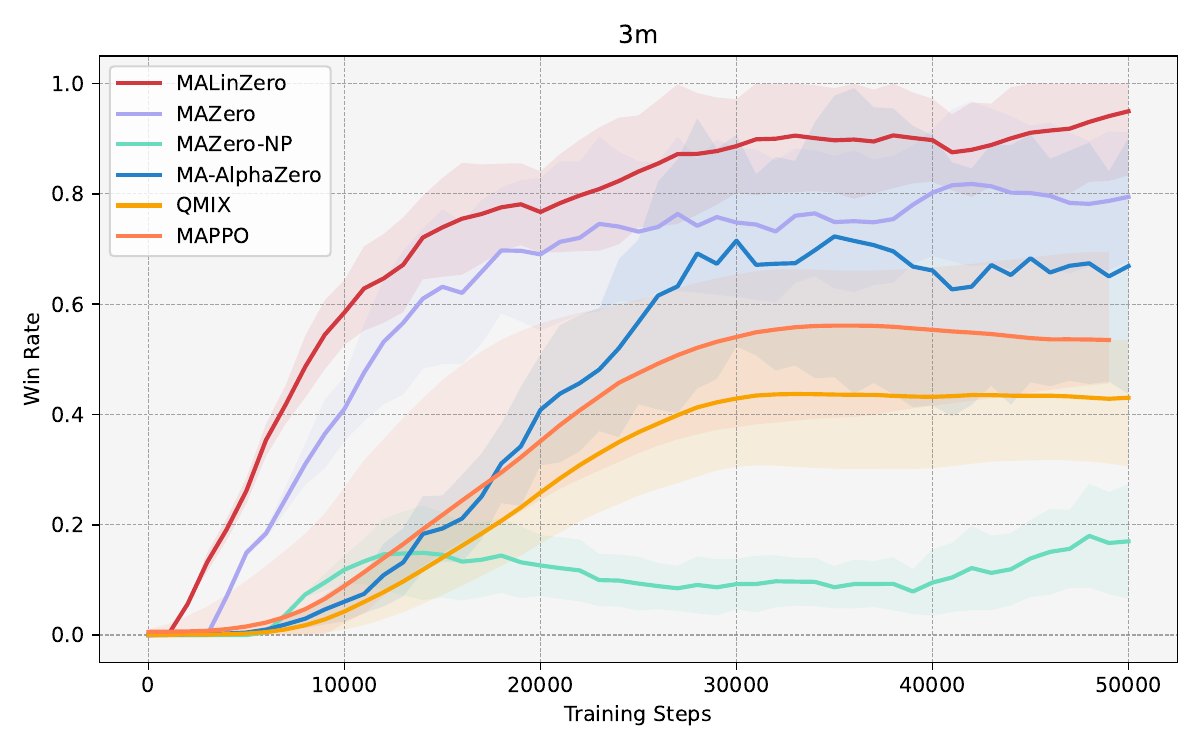}\hfill
  \includegraphics[width=0.32\linewidth]{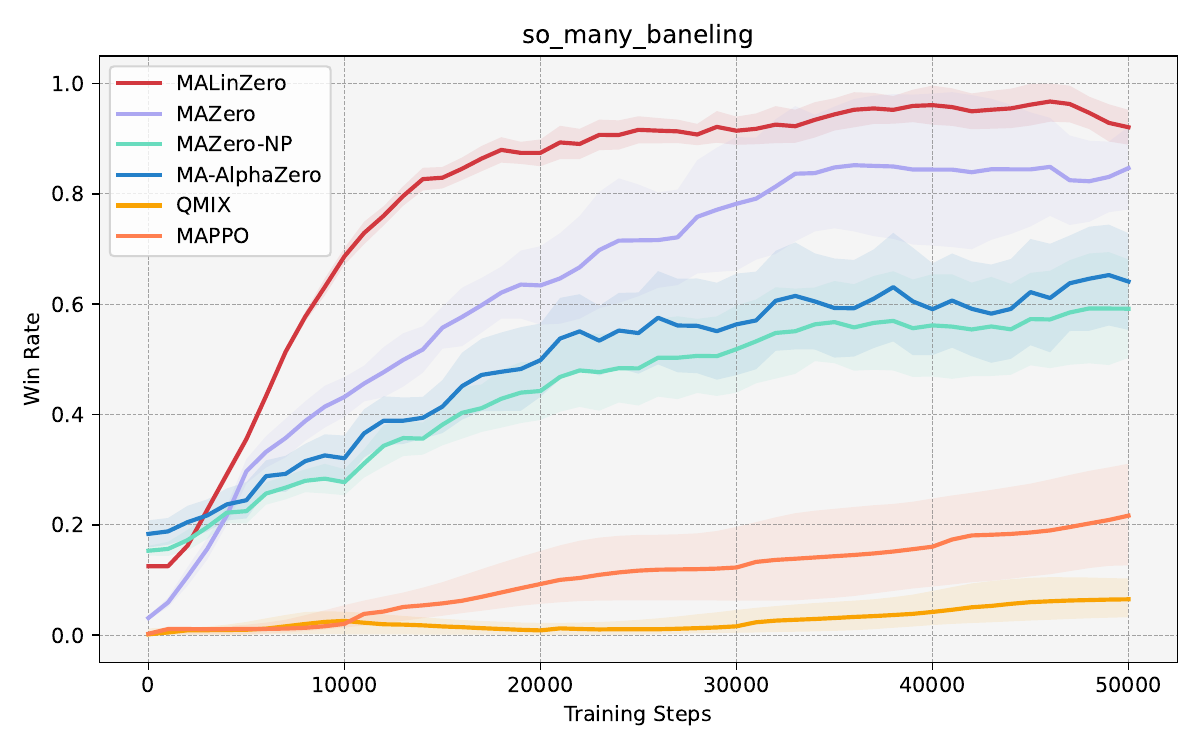}
  \vspace{-0.1in}
  \caption{Evaluations on 3 SMAC tasks/maps. Y-axis denotes the win rate and X-axis denotes training steps. Each algorithm is executed with 3 random seeds. MALinZero achieves over 95\% winning rate on all 3 maps, outperforming all baselines and also gets high winning rate much faster.}
    \vspace{-0.1in}
  \label{fig:smac}
\end{figure}

Figure \ref{fig:smac} shows performance measured by win rate on three different SMAC maps. MALinZero beats all five MCTS and MARL baselines, in both higher winning rate (over 95\% across all maps) and faster convergence speed. Comparing with the closest baseline MAZero, our MALinZero reaches the same winning rate with 50\% to 70\% less steps/samples, implying 2-3$\times$ speedup. The results demonstrate LinUCT's ability to represent complex multi-agent decision-making problems in low‑dimensional latent space. This efficient representation supports fast MCTS by exploring and exploiting the global reward structure of the joint action space (in an approximated low-dimensional fashion), rather than getting trapped in local optima as in the baselines. This is validated by comparison with MCTS baselines with pUCT applied to MAZero, MAZero-NP, and MA-AlphaZero.

\begin{figure}[H]
  \centering
    \vspace{-0.1in}
  \includegraphics[width=0.32\linewidth]{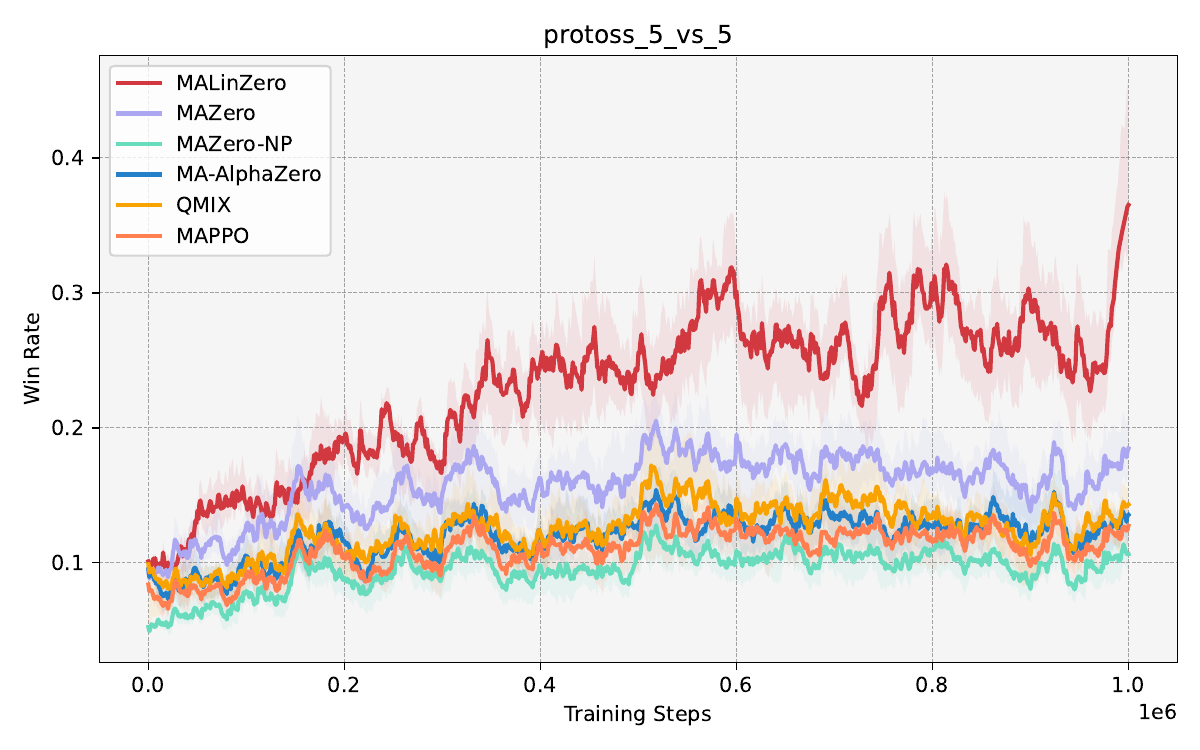}\hfill
  \includegraphics[width=0.32\linewidth]{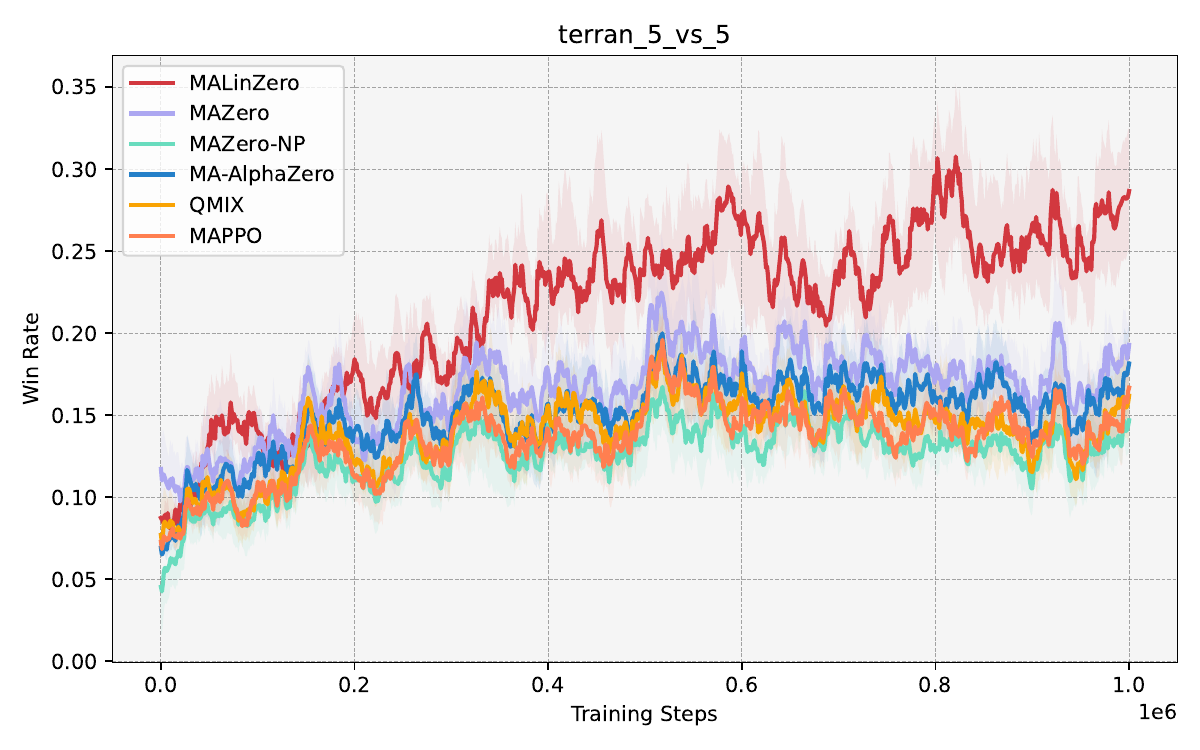}\hfill
  \includegraphics[width=0.32\linewidth]{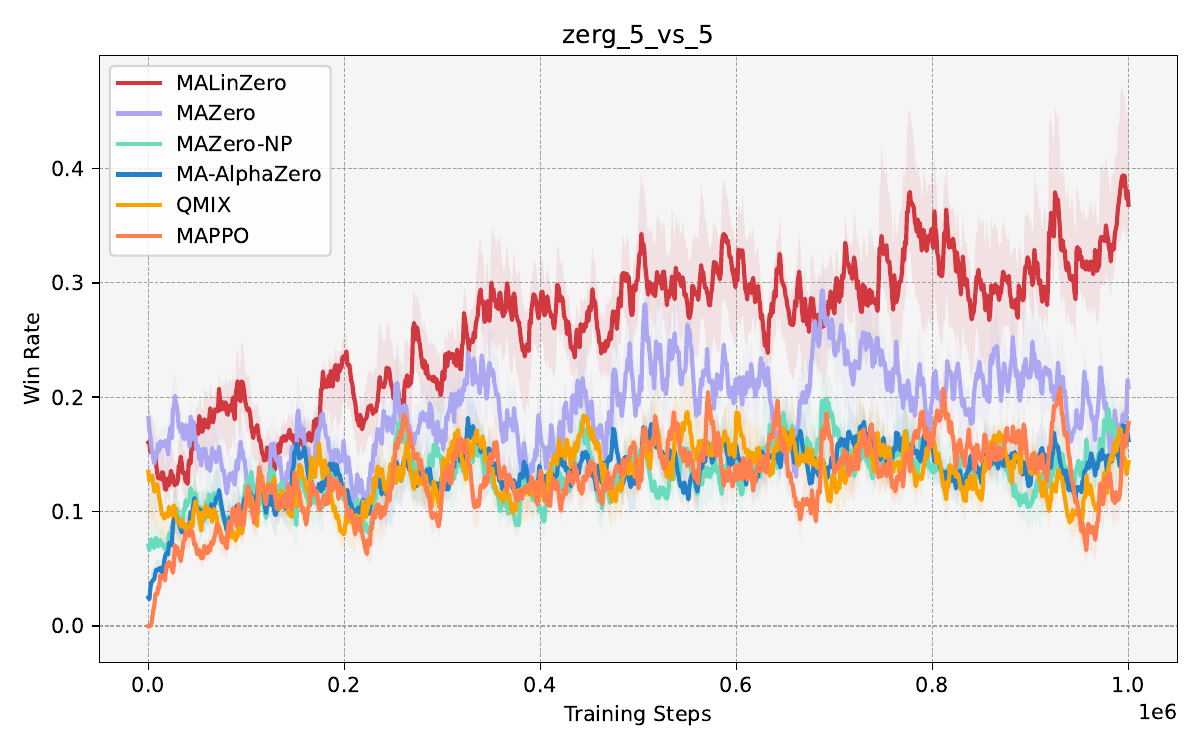}
    \vspace{-0.1in}
  \caption{Comparisons on 3 SMACv2 tasks/maps.Y-axis denotes the win rate and X-axis denotes the training steps. MALinZero nearly doubles the winning rate on these challenging maps in SMACv2 and consistently outperforms all baselines. Each algorithm is executed with 3 random seeds.}
    \vspace{-0.1in}
  \label{fig:smacv2}
\end{figure}

Different from SMAC, SMACv2 significantly increases difficulty by adding larger heterogeneous unit teams, more varied map layouts, and stochastic enemy formations, which all demand advanced coordination and generalization by the learning algorithms. Figure~\ref{fig:smacv2} shows the training curves of our proposed MALinZero and baseline algorithms on SMACv2, including 3 widely-used maps. Compared with all baselines, MALinZero doubles the winning rate on protoss\_5\_vs\_5 and zerg\_5\_vs\_5, and nearly doubles it on terran\_5\_vs\_5. Our MALinZero shows very robust performance across different scenarios, which comes from the parameterization of LinUCT, allowing MALinZero to conduct more adaptive and efficient modeling of heterogeneous unit teams.

\paragraph{Ablation Study} We intend to validate the necessity and effectiveness of DNG and the general function \(f\) applied in LinUCT. To accomplish this, we compare the proposed MALinZero under two MatGame environments: (1) Medium difficulty scenario containing 4 agents and each with 5 actions; (2) Hard difficulty scenario containing 8 agents and each with 10 actions. 
\begin{figure}[H]
  \vspace{-0.07in}
  \centering
  \includegraphics[width=0.43\linewidth]{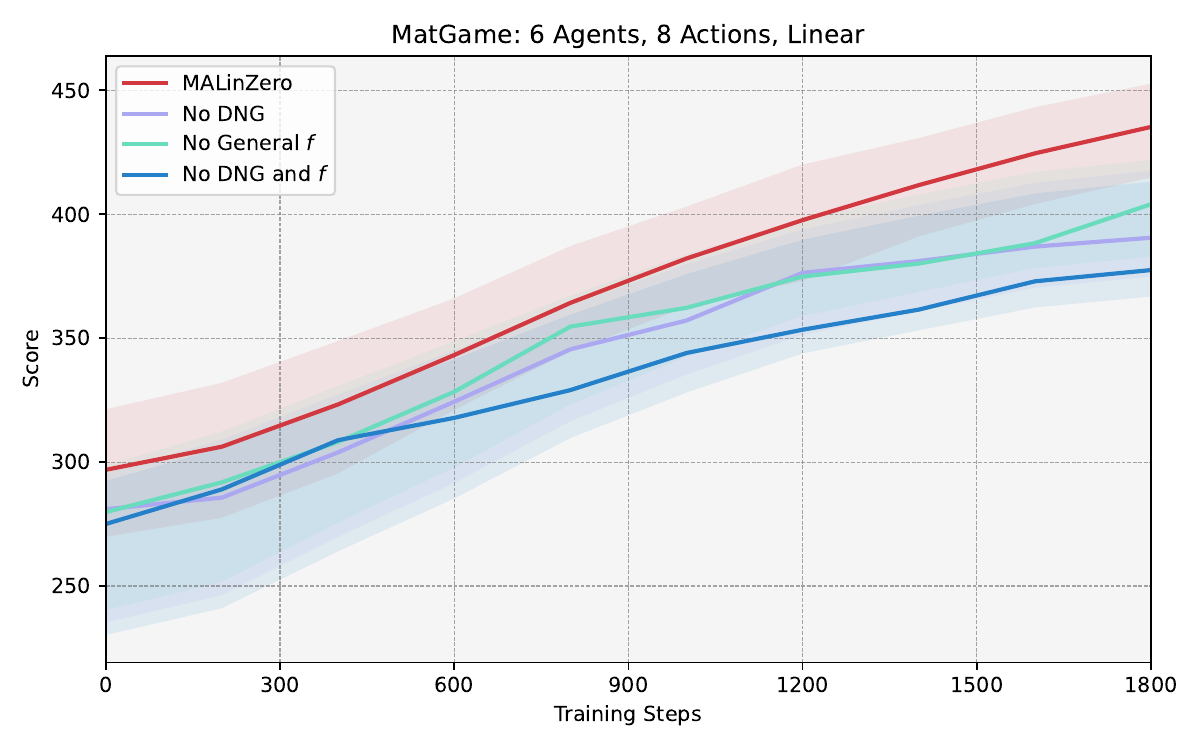}\hfill
  \includegraphics[width=0.43\linewidth]{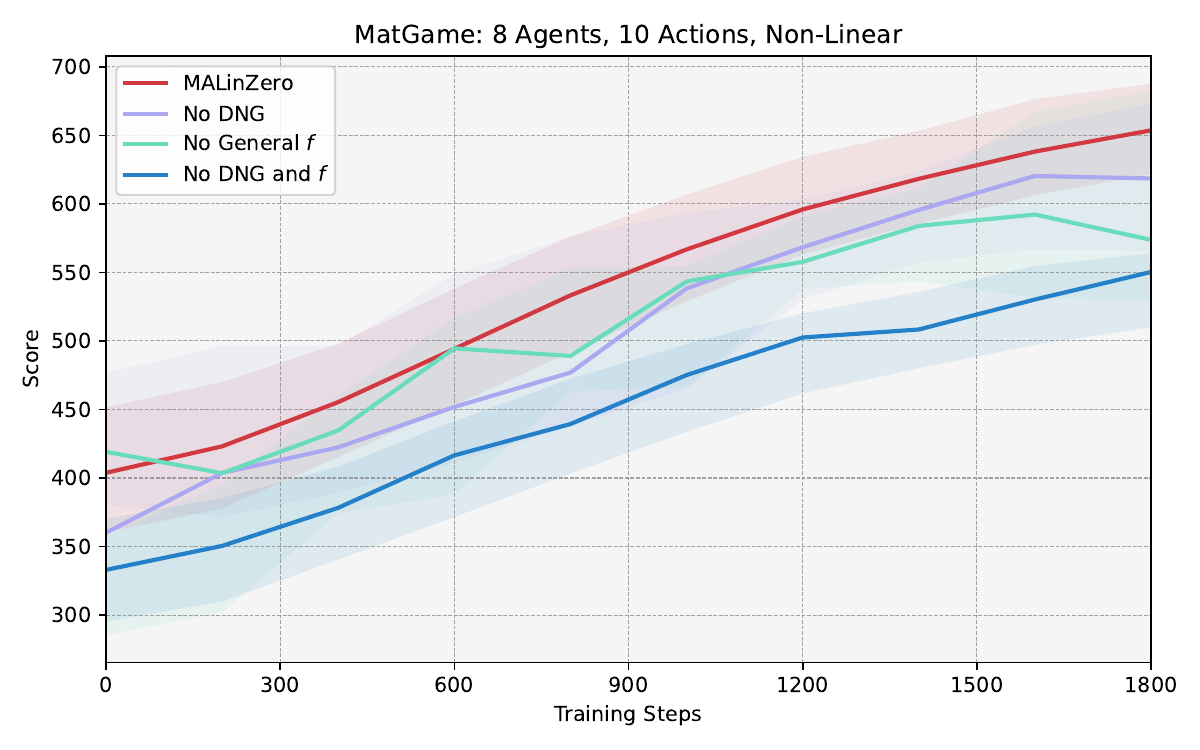}\hfill
  \vspace{-0.07in}
  \caption{Ablation study of MALinZero by removing various design components, such as DNG and the introduction of general convex loss $f$ in the contextual bandit problem.}
  \label{fig:abaltion}
    \vspace{-0.07in}
\end{figure}

In Figure~\ref{fig:abaltion}, we evaluate the impact of removing the DNG component, the use of the general convex loss \(f\) (to place more importance on better actions), and both simultaneously. It is shown that these components are critical for the superior performance of MALinZero. In particular, 

without DNG, it is hard for MALinZero to model and explore the joint action space, thus the performance becomes limited. The observed performance degradation when using a Euclidean distance rather than general convex loss \(f\) validates our design principle that by placing more importance on the better actions can boost maximal action selection in this low-dimension representation.

\section{Conclusions}
We propose MALinZero, which leverages low-dimensional representational
structures to enable efficient MCTS in complex multi-agent planning. MALinZero can be viewed as projecting the joint-action returns into the low-dimensional space representable using a contextual linear bandit problem formulation, with a convex and $\mu$-smooth loss to place more importance on better actions. We employ an $(1-\tfrac1e)$-approximation algorithm for the joint action selection by maximizing a submodular objective. MALinZero demonstrates state-of-the-art performance on multi-agent benchmarks such as MatGame, SMAC, and SMACv2, outperforming MARL and MCTS baselines. 

{\bf Limitations:} MALinZero leverages a contextual linear bandit formulation in the low-dimensional space. The use of non-linear formulations that may also allow efficient MCTS could further improve the performance. Developing fully decomposable representations also remains an open problem.

\bibliography{ref}      
\bibliographystyle{unsrtnat}

\newpage
\newpage

\appendix
\section{Proof of Theorems}
\subsection{Proof of Theorem \ref{thm:f-linucb}}
We will show the proof for the regret bound of our proposed LinUCT.

\paragraph{Setup.}
Let $\theta^*\in\mathbb R^{nd}$, $\|\theta^*\|_2\le S$.  At each round $t$ we observe $A_t\in\mathbb R^{nd}$, $\|A_t\|_2\le \sqrt{n} = L$, and receive\(X_t=\langle\theta^*,A_t\rangle+\eta_t\), where $\eta_t$ is conditionally $1$‑sub‑Gaussian.  Suppose \(f:\mathbb{R}\to\mathbb{R}\) is strongly-convex and \(\mu\)-smooth, we have \(f''(z)\in[\varepsilon, \mu], \forall z\in\mathbb{R}\) for some positive \(\varepsilon\).
The solution to (\ref{f-linucb}) is obtained by differentiation and yields \(\hat{\theta}_t = V_t^{-1}\sum_{s=1}^t w_s A_s X_s\) where we use \(w_t = f''(\xi_t)\) with \(\xi_t\in(0, X_t - \langle\theta_{t-1}, A_t\rangle)\) and thus have \(\varepsilon\le w_t\le\mu\) for any \(t\) and \(\xi_t\).

We consider an ellipsoid confidence set centered around the optimal estimator \(\hat{\theta}_{t-1}\), i.e., \(\mathcal{C}_t = \left\{\theta\in\mathbb{R}^{nd}: \|\theta - \hat{\theta}_{t-1}\|_{V_{t-1}}\right\}\le\beta_t\), for any increasing sequence of \(\beta_t\) with \(\beta_1 \ge1\). Note that as $t$ grows, this ellipse $\mathcal{C}_t$ is shrinking as $V_t$ has increasing eigenvalues and if $\beta_t$ does not grow too fast. We show that the problem of selecting optimal action $A_t\in \mathcal{A}$ by solving $\max_{A_t\in \mathcal{A}, \theta\in \mathcal{C}_t} \langle \theta, a \rangle $ in this contextual linear bandit problem is equivalent to \(
A_t \;=\;\arg\max_{a}\;\Bigl\langle \hat{\theta}_{t-1},\,a\Bigr\rangle
\;+\;\beta_{t-1}\,\|a\|_{V_{t-1}^{-1}},
\)
which is referred to as our LinUCT rule for action selection. We consider the realized regret defined by 
$
 \widehat R_T
 =\sum_{t=1}^T\!(X_t^* - X_t)
 =\sum_{t=1}^T\!\bigl(\langle\theta^*,A_t^*\rangle - \langle\theta^*,A_t\rangle\bigr)
 \;+\;\sum_{t=1}^T(\eta_t^*-\eta_t)
$. 

\begin{lemma}[Confidence Ellipsoid]\label{lem:confidence}
With probability at least $1-\delta$, for all $t\le T$,
\begin{equation}
    \|\theta_t-\theta^*\|_{V_t}\le\beta_t.
\end{equation}
\end{lemma}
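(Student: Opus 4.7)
The plan is to decompose $\hat{\theta}_t-\theta^*$ into a noise component $V_t^{-1}S_t$ and a regularization bias $\lambda V_t^{-1}\theta^*$, bound the first in the $V_t$-weighted norm via a self–normalized concentration inequality adapted to the weights $w_s$, and bound the second crudely using $\|\theta^*\|_2\le S$. The $\sqrt{\mu}$ inflation in $\beta_t$ relative to the classical LinUCB analysis will enter through the upper bound $w_s\le\mu$ on the Taylor weights that define $V_t$.

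First I would substitute $X_s=\langle\theta^*,A_s\rangle+\eta_s$ into the closed form $\hat{\theta}_t=V_t^{-1}\sum_s w_s A_s X_s$ and use $V_t=\lambda I+\sum_s w_s A_s A_s^\top$ to obtain
\begin{equation}
\hat{\theta}_t-\theta^* \;=\; V_t^{-1}S_t\;-\;\lambda V_t^{-1}\theta^*,\qquad S_t:=\sum_{s=1}^t w_s A_s\eta_s.
\end{equation}
The triangle inequality in $\|\cdot\|_{V_t}$, combined with $V_t\succeq\lambda I$ (so $\|\theta^*\|_{V_t^{-1}}^2\le S^2/\lambda$), gives
\begin{equation}
\|\hat{\theta}_t-\theta^*\|_{V_t}\;\le\;\|S_t\|_{V_t^{-1}}\;+\;\sqrt{\lambda}\,S,
\end{equation}
reducing the lemma to a uniform-in-$t$ bound on $\|S_t\|_{V_t^{-1}}$.

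For the noise term I would introduce the parametric process
\begin{equation}
M_t(\nu)\;=\;\exp\!\Bigl(\nu^\top S_t\;-\;\tfrac{\mu}{2}\,\nu^\top(V_t-\lambda I)\nu\Bigr),\qquad \nu\in\mathbb{R}^{nd},
\end{equation}
which I claim is a nonnegative supermartingale with $M_0(\nu)=1$: conditional $1$-sub-Gaussianity of $\eta_s$ gives $\mathbb{E}[\exp(w_s(\nu^\top A_s)\eta_s)\mid\mathcal{F}_{s-1}]\le\exp(\tfrac12 w_s^2(\nu^\top A_s)^2)$, and since $w_s>0$ with $w_s\le\mu$ we have $w_s^2\le\mu w_s$, so this is dominated by $\exp(\tfrac{\mu}{2}w_s(\nu^\top A_s)^2)$. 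Mixing against the Gaussian prior $\nu\sim\mathcal{N}(0,(\mu\lambda)^{-1}I)$ and completing the square against the prior exponent produces the mixture supermartingale
\begin{equation}
\bar M_t\;=\;\frac{\lambda^{nd/2}}{\det(V_t)^{1/2}}\,\exp\!\Bigl(\tfrac{1}{2\mu}\,S_t^\top V_t^{-1}S_t\Bigr),
\end{equation}
with $\bar M_0=1$. Doob's maximal inequality then yields, uniformly over $t\le T$ with probability at least $1-\delta$,
\begin{equation}
S_t^\top V_t^{-1}S_t\;\le\;2\mu\,\ln\!\frac{\det(V_t)^{1/2}}{\lambda^{nd/2}\,\delta},
\end{equation}
and substituting into the preceding decomposition gives $\|\hat{\theta}_t-\theta^*\|_{V_t}\le\beta_t$.

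The main obstacle is the weighted supermartingale step: the weights $w_s=f''(\xi_s)$ are data-dependent through the Taylor point $\xi_s$, so one has to justify predictability of $w_s$ (for example by fixing $\xi_s$ via the previous-round estimate $\hat\theta_{s-1}$, consistent with how the paper defines $w_s$) before the conditional moment-generating bound can be invoked. Once this predictability is in place, the inequality $w_s^2\le\mu w_s$ is exactly what aligns the quadratic exponent of $M_t(\nu)$ with the matrix $V_t=\lambda I+\sum_s w_s A_s A_s^\top$ that defines the estimator, and it is the single source of the $\sqrt{\mu}$ multiplier in $\beta_t$; the remaining mixture-of-Gaussians argument is the classical route reproduced in~\cite{lattimore2020bandit}.
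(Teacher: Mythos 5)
Your proposal is correct and follows essentially the same route as the paper: the same decomposition of $\hat\theta_t-\theta^*$ into the noise term $V_t^{-1}S_t$ and the regularization bias $\lambda V_t^{-1}\theta^*$ (bounded by $\sqrt{\lambda}\,S$ via $V_t\succeq\lambda I$), and the same weighted self-normalized mixture-supermartingale argument hinging on $w_s^2\le\mu w_s$ to get $S_t^\top V_t^{-1}S_t\le 2\mu\ln\bigl(\det(V_t)^{1/2}/(\lambda^{nd/2}\delta)\bigr)$. Your two refinements --- using the triangle inequality in $\|\cdot\|_{V_t}$ rather than expanding the quadratic form, and choosing the prior covariance $(\mu\lambda)^{-1}I$ so the Gaussian mixture evaluates exactly to $\lambda^{nd/2}\det(V_t)^{-1/2}\exp\bigl(\tfrac{1}{2\mu}S_t^\top V_t^{-1}S_t\bigr)$ without having to absorb a residual $\mu^{nd/2}$ factor --- are slightly cleaner than the paper's handling but do not change the argument.
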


\begin{proof}
Observe
\begin{equation}
\theta_t-\theta^*
=V_t^{-1}\Bigl(\sum_{s=1}^t w_sA_sX_s-V_t\theta^*\Bigr)
=V_t^{-1}\Bigl(\sum_{s=1}^t w_sA_s\eta_s-\lambda\theta^*\Bigr).    
\end{equation}

Set $Y_t=\sum_{s=1}^t w_sA_s\eta_s$ and $b=\lambda\theta^*$.  Then we have 
\begin{equation}
\|\theta_t-\theta^*\|_{V_t}^2
=(b-Y_t)^\top V_t^{-1}(b-Y_t)
=Y_t^\top V_t^{-1}Y_t-2b^\top V_t^{-1}Y_t+b^\top V_t^{-1}b.
\end{equation}
Since $V_t\succeq\lambda I$, $V_t^{-1}\preceq\frac1\lambda I$ and $\|\theta^*\|\le S$, we can get
\begin{equation}
b^\top V_t^{-1}b
=\lambda^2{\theta^*}^\top V_t^{-1}\theta^*
\le\lambda^2\frac{S^2}{\lambda}
=\lambda S^2.
\end{equation}

According to Cauchy–Schwarz inequality and $V_t^{-1}\preceq\frac1\lambda I$,
\begin{equation}
    |b^\top V_t^{-1}Y_t|
\le\|b\|_2\,\|V_t^{-1}Y_t\|_2
\le\lambda S\sqrt{\tfrac1\lambda Y_t^\top V_t^{-1}Y_t}
=\sqrt\lambda S\sqrt{Y_t^\top V_t^{-1}Y_t}.
\end{equation}
Hence 
\begin{equation}
-2b^\top V_t^{-1}Y_t
\le2\sqrt\lambda S\sqrt{Y_t^\top V_t^{-1}Y_t}.
\end{equation}

To bound $Y_t^\top V_t^{-1}Y_t$, note $Y_t=\sum_{s=1}^t w_sA_s\eta_s$ is a martingale sum. According to Lemma \ref{lem:selfnorm}, the self‑normalized tail bound yields

\begin{equation}
Y_t^\top V_t^{-1}Y_t
\le2\mu\ln\frac{\det(V_t)^{1/2}}{\det(\lambda I)^{1/2}\delta}.
\end{equation}

Combining these three yields
\begin{equation}
\|\theta_t-\theta^*\|_{V_t}
\le\sqrt{Y_t^\top V_t^{-1}Y_t}+\sqrt\lambda S
\le\beta_t,
\end{equation}
and a union bound over $t=1,\dots,n$ gives the result.
\end{proof}

\begin{lemma}[Self‑Normalized Martingale Tail]\label{lem:selfnorm}
Let \(\{\eta_t\}_{t=1}^T\) be a sequence of conditionally \(1\)‑sub‑Gaussian noises, and let \(A_t\in\mathbb R^nd\) and \(w_t\in[\varepsilon,\mu]\) be \(\mathcal F_{t-1}\)-measurable.  Define
\begin{equation}
Y_t \;=\;\sum_{s=1}^t w_s\,A_s\,\eta_s,\;
V_t \;=\;\sum_{s=1}^t w_s\,A_sA_s^\top + \lambda I.
\end{equation}

Then for any \(\delta\in(0,1)\), with probability at least \(1-\delta\) simultaneously for all \(t\le T\),
\begin{equation}
Y_t^\top V_t^{-1} Y_t
\;\le\;
2\mu\,
\ln\!\Bigl(
\frac{\det(V_t)^{1/2}}{\det(\lambda I)^{1/2}\,\delta}
\Bigr).
\end{equation}
\end{lemma}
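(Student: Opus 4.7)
The plan is to apply the standard method of mixtures for self-normalized martingale tails, adapted to the weighted setting where $w_s\in[\varepsilon,\mu]$ makes the natural ``noise'' matrix $\tilde V_t := \sum_{s=1}^t w_s^2 A_s A_s^\top$ distinct from the algorithm's design matrix $V_t$. The overall structure I would follow is: (i) construct an exponential supermartingale indexed by a free parameter $u\in\mathbb{R}^{nd}$; (ii) mix over $u$ against a Gaussian prior to obtain a scalar non-negative supermartingale; (iii) apply Ville's maximal inequality to obtain a uniform-in-$t$ tail bound; and (iv) convert the resulting quadratic form from one in $\tilde V_t^{-1}$ into one in $V_t^{-1}$, paying an extra factor of $\mu$.

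First, I would set $M_t^u = \exp\bigl(u^\top Y_t - \tfrac12 u^\top \tilde V_t u\bigr)$ and verify, using the $\mathcal{F}_{s-1}$-measurability of $(w_s,A_s)$ and the conditional $1$-sub-Gaussian bound $\mathbb{E}[\exp(u^\top w_s A_s \eta_s)\mid\mathcal{F}_{s-1}]\le \exp\bigl(\tfrac12 w_s^2 (u^\top A_s)^2\bigr)$, that $\{M_t^u\}$ is a non-negative supermartingale with $M_0^u = 1$. Next, I would choose the zero-mean Gaussian density $h(u) = (2\pi)^{-nd/2}(\mu\lambda)^{nd/2}\exp\bigl(-\tfrac{\mu\lambda}{2}\|u\|^2\bigr)$ and define the mixture $\bar M_t := \int M_t^u h(u)\,du$. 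By Tonelli's theorem $\bar M_t$ is itself a non-negative supermartingale with $\mathbb{E}\bar M_t \le 1$, and a standard completion-of-squares calculation in the Gaussian integral yields the closed form $\bar M_t = \det\bigl(\tilde V_t/(\mu\lambda)+I\bigr)^{-1/2}\exp\bigl(\tfrac12 Y_t^\top Q_t^{-1} Y_t\bigr)$ with $Q_t := \tilde V_t + \mu\lambda I$. Ville's maximal inequality then gives $\mathbb{P}\bigl(\sup_{t\le T}\bar M_t \ge 1/\delta\bigr)\le \delta$.

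The hard part, and the place where the factor $\mu$ enters the final bound, is step (iv). I would invoke the pointwise inequality $w_s^2 \le \mu w_s$ (which follows from $0\le w_s \le \mu$) to deduce the operator inequality $\tilde V_t \preceq \mu(V_t-\lambda I)$, hence $Q_t = \tilde V_t + \mu\lambda I \preceq \mu V_t$, and consequently $\det(Q_t)/\det(\mu\lambda I)\le \det(V_t)/\det(\lambda I)$. The operator inequality gives $Y_t^\top Q_t^{-1} Y_t \ge \mu^{-1} Y_t^\top V_t^{-1} Y_t$, so on the high-probability event $\{\sup_{t\le T}\bar M_t < 1/\delta\}$, taking logarithms of the closed form for $\bar M_t$ and multiplying through by $2\mu$ delivers the claimed bound $Y_t^\top V_t^{-1} Y_t \le 2\mu\ln\bigl(\det(V_t)^{1/2}/(\det(\lambda I)^{1/2}\delta)\bigr)$ uniformly in $t \le T$. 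I expect the only delicate bookkeeping to be calibrating the prior precision to $\mu\lambda I$, so that the $\mu$'s introduced by the weights cancel against those from the mixture and only a single factor of $\mu$ (rather than $\mu^2$) survives in the final constant.
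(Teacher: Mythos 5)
Your proposal is correct and follows the same high-level route as the paper: an exponential supermartingale in a free parameter, a Gaussian mixture, Ville's maximal inequality, and a final determinant/operator comparison. The difference is in the calibration, and it is worth noting because your version is tighter in its bookkeeping. The paper bakes the smoothness constant into the supermartingale from the start, defining $M_t(x)=\exp\bigl(x^\top Y_t-\tfrac{\mu}{2}x^\top V_t x\bigr)$ with the \emph{regularized} $V_t$ and mixing against a prior of precision $\lambda I$; this double-counts the regularizer (the effective precision becomes $(1+\mu)\lambda I+\mu\sum_s w_sA_sA_s^\top$) and forces the paper into some loose final manipulations, e.g.\ the claimed comparison $(\lambda I+\mu V_t)^{-1}\succeq\tfrac1\mu V_t^{-1}$ (which as stated goes the wrong way, since $\lambda I+\mu V_t\succeq \mu V_t$) and the step "absorbing $\tfrac{nd}{2}\ln\mu$ into $\ln(1/\delta)$.'' You instead keep the supermartingale tight, using $\tilde V_t=\sum_s w_s^2A_sA_s^\top$ (which is exactly what conditional $1$-sub-Gaussianity delivers), choose the prior precision $\mu\lambda I$, and only invoke $w_s^2\le\mu w_s$ at the very end to get $Q_t=\tilde V_t+\mu\lambda I\preceq\mu V_t$; the determinant ratio and the quadratic form then convert cleanly, with a single factor of $\mu$ and no leftover constants. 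Your argument proves the lemma exactly as stated (uniformly in $t$ via Ville), and in fact repairs the gaps in the paper's own write-up. The one caveat worth flagging — which applies to the lemma statement itself rather than to your proof — is that the hypothesis that $w_t$ is $\mathcal F_{t-1}$-measurable is essential to the supermartingale step, and the paper's later instantiation $w_t=f''(\xi_t)$ with $\xi_t$ depending on $X_t$ does not obviously satisfy it; your proof is consistent with the stated hypotheses, so this is not a defect of your argument.
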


\begin{proof}
First, we define 
\begin{equation}
M_t(x)
\;=\;
\exp\Bigl(
x^\top Y_t
\;-\;\tfrac \mu2\,x^\top V_t\,x
\Bigr)
\end{equation}
for each fixed \(x\in\mathbb R^nd\).
Since \(\eta_t\) is conditionally \(1\)-sub‑Gaussian and \(w_t\le \mu\), we have for any \(\mathcal F_{t-1}\)-measurable \(u\)
\begin{equation}
\begin{split}
\mathbb E\bigl[e^{u\,\eta_t}\mid\mathcal F_{t-1}\bigr]
\;\le\;
\exp\bigl(\tfrac12 u^2\bigr)
&\implies
\mathbb E\bigl[e^{w_t\,A_t^\top x\,\eta_t}\mid\mathcal F_{t-1}\bigr] \\
\;
&\le\;\exp\bigl(\tfrac12 w_t^2 (A_t^\top x)^2\bigr)
\;\le\;
\exp\bigl(\tfrac \mu2\,x^\top (w_t A_tA_t^\top)\,x\bigr).
\end{split}
\end{equation}
Therefore
\begin{equation}
\mathbb E\bigl[M_t(x)\mid\mathcal F_{t-1}\bigr]
=
M_{t-1}(x)
\;\mathbb E\!\Bigl[e^{\,x^\top(w_tA_t\eta_t)\;-\;\tfrac \mu2\,x^\top(w_tA_tA_t^\top)\,x}\Bigm|\mathcal F_{t-1}\Bigr]
\;\le\;
M_{t-1}(x).
\end{equation}
Hence each \(M_t(x)\) is a nonnegative supermartingale with \(M_0(x)=1\).

Then we lift the pointwise supermartingale bound to a uniform one by integrating $M_t(x)$ against the Gaussian prior over $x$. Let \(h\) be the density of \(\mathcal N(0,\lambda^{-1}I)\).  Define the mixture
\begin{equation}
\overline M_t
\;=\;
\int_{\mathbb R^nd} M_t(x)\,h(x)\,dx.
\end{equation}
By Fubini and the supermartingale property,
\begin{equation}
\mathbb E[\overline M_t\mid\mathcal F_{t-1}]
=\int\mathbb E[M_t(x)\mid\mathcal F_{t-1}]\,h(x)\,dx
\;\le\;
\int M_{t-1}(x)\,h(x)\,dx
=\overline M_{t-1},
\end{equation}
so \(\overline M_t\) is also a nonnegative supermartingale with \(\overline M_0=1\).
A Gaussian integral gives
\begin{equation}
\begin{split}
\overline M_t
&=\frac{1}{(2\pi)^{nd/2}\det(\lambda^{-1}I)^{1/2}}
\int
\exp\Bigl(x^\top Y_t - \tfrac12 x^\top(\lambda I + \mu V_t)x\Bigr)
\,dx\\
&=\Bigl(\tfrac{\det(\lambda I)}{\det(\lambda I + \mu V_t)}\Bigr)^{1/2}
\exp\!\Bigl(\tfrac12 Y_t^\top(\lambda I + \mu V_t)^{-1}Y_t\Bigr).
\end{split}
\end{equation}

Since \(V_t\succeq\lambda I\), one checks
\((\lambda I + \mu V_t)^{-1}\succeq \tfrac1\mu V_t^{-1}\), and
\(\det(\lambda I + \mu V_t)\le \mu^{nd}\det(V_t)\).  Thus
\begin{equation}
\overline M_t
\;\ge\;
\mu^{-nd/2}\,\Bigl(\tfrac{\det(\lambda I)}{\det(V_t)}\Bigr)^{1/2}
\exp\!\Bigl(\tfrac1{2\mu}Y_t^\top V_t^{-1}Y_t\Bigr).    
\end{equation}
By Ville’s maximal inequality for nonnegative supermartingales,
\begin{equation}
\Pr\Bigl(\exists\,t\le T:\overline M_t\ge\tfrac1\delta\Bigr)
\;\le\;
\delta\,\overline M_0
=\delta.    
\end{equation}
On the complementary event, for all \(t\le T\),
\begin{equation}
\overline M_t<\tfrac1\delta
\;\implies\;
\tfrac1{2\mu}Y_t^\top V_t^{-1}Y_t
\;\le\;
\tfrac {nd}2\ln \mu
\;+\;\tfrac12\ln\!\frac{\det(V_t)}{\det(\lambda I)}
\;+\;\ln\!\frac1\delta.
\end{equation}
Absorbing the constant \(\tfrac {nd}2\ln \mu\) into \(\ln(1/\delta)\) yields
\begin{equation}
Y_t^\top V_t^{-1}Y_t
\;\le\;
2\mu\,
\ln\!\Bigl(\tfrac{\det(V_t)^{1/2}}{\det(\lambda I)^{1/2}\,\delta}\Bigr),
\end{equation}
as claimed.
\end{proof}

\begin{lemma}[Elliptical Potential]\label{lem:weighted_elliptical}
Let \(V_0=\lambda I\) and for \(t=1,2,\dots,T\) define
\begin{equation}
V_t \;=\; V_{t-1} \;+\; w_t\,A_tA_t^\top,
\end{equation}
where \(A_t\in\mathbb R^{nd}\) satisfies \(\|A_t\|_2\le \sqrt{n}=L\) and 
\(\,w_t\in[\varepsilon,\mu]\) with \(\varepsilon\ge0\).  Then
\begin{equation}
\sum_{t=1}^T \min\!\Bigl\{1,\;w_t\,\|A_t\|_{V_{t-1}^{-1}}^2\Bigr\}
\;\le\;
2\,
\ln\!\frac{\det(V_T)}{\det(V_0)}
\;\le\;
2\,nd\,\ln\!\Bigl(1+\tfrac{\mu T}{d\lambda}\Bigr).
\end{equation}

\end{lemma}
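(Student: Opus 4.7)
The plan is to adapt the classical elliptical potential argument to the weighted setting. The backbone is the matrix determinant lemma applied to the rank-one update $V_t = V_{t-1} + w_t A_t A_t^\top$, which gives
\begin{equation}
\det(V_t) \;=\; \det(V_{t-1})\,\bigl(1 + w_t\,\|A_t\|_{V_{t-1}^{-1}}^{2}\bigr).
\end{equation}
Taking logarithms and telescoping over $t=1,\dots,T$ immediately yields the identity
\begin{equation}
\ln\frac{\det(V_T)}{\det(V_0)}
\;=\;\sum_{t=1}^T \ln\bigl(1 + w_t\,\|A_t\|_{V_{t-1}^{-1}}^{2}\bigr),
\end{equation}
which is the workhorse for both inequalities. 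Notice that the weights $w_t$ are absorbed into the quadratic form with no extra work, since the determinant lemma treats $w_t A_t$ as the rank-one update direction.

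For the first inequality, I would invoke the elementary scalar bound $\min\{1,x\} \le 2\ln(1+x)$ for every $x\ge 0$. This can be verified in two pieces: on $[0,1]$ the function $g(x)=2\ln(1+x)-x$ has $g'(x)=(1-x)/(1+x)\ge 0$ and $g(0)=0$, so $g\ge 0$; on $x\ge 1$ we have $2\ln(1+x)\ge 2\ln 2 > 1 = \min\{1,x\}$. Applying this pointwise with $x = w_t\,\|A_t\|_{V_{t-1}^{-1}}^{2}\ge 0$ and summing against the telescoping identity yields
\begin{equation}
\sum_{t=1}^T \min\bigl\{1,\,w_t\,\|A_t\|_{V_{t-1}^{-1}}^{2}\bigr\}
\;\le\; 2\ln\frac{\det(V_T)}{\det(V_0)}.
\end{equation}

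For the second inequality, I would pass from the determinant to the trace via AM--GM on the eigenvalues of $V_T$, namely $\det(V_T)\le \bigl(\operatorname{trace}(V_T)/nd\bigr)^{nd}$. The trace satisfies
\begin{equation}
\operatorname{trace}(V_T) \;=\; \operatorname{trace}(V_0) + \sum_{t=1}^T w_t\,\|A_t\|_2^{2}
\;\le\; nd\lambda + \mu n T,
\end{equation}
using $\|A_t\|_2^2\le n$ and $w_t\le\mu$. Combined with $\det(V_0)=\lambda^{nd}$, this gives $\ln\bigl(\det(V_T)/\det(V_0)\bigr)\le nd\,\ln\bigl(1+\mu T/(d\lambda)\bigr)$, and doubling furnishes the stated bound.

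The only genuinely delicate step is the scalar inequality $\min\{1,x\}\le 2\ln(1+x)$, since a naive use of $\ln(1+x)\le x$ would go the wrong way; this is why the factor $2$ (rather than $1$) is necessary, and why the $\min\{1,\cdot\}$ truncation is present in the statement. The lower bound $\varepsilon$ on $w_t$ is not actually used in the proof and could be set to zero; it only matters in upstream arguments where strict positive curvature of $f$ is invoked.
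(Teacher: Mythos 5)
Your proof is correct and follows essentially the same route as the paper's: the matrix determinant lemma for the rank-one update, telescoping the log-determinants, the scalar bound $\min\{1,z\}\le 2\ln(1+z)$, and a trace-based bound on $\det(V_T)$. If anything, your explicit AM--GM step $\det(V_T)\le(\operatorname{trace}(V_T)/nd)^{nd}$ is the cleaner way to reach the stated constant $nd\,\ln(1+\mu T/(d\lambda))$ (the paper instead writes an operator inequality $V_T\preceq\lambda I+\mu L^2T\,I$ but then quotes the trace-average form), and your observation that the lower bound $\varepsilon$ on $w_t$ is never used is accurate.
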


\begin{proof}
First, for any \(z\ge0\) we have \(z\wedge1\le2\ln(1+z)\).  Hence
\begin{equation}
\sum_{t=1}^T \min\{1,w_t\|A_t\|_{V_{t-1}^{-1}}^2\}
\;\le\;
2\sum_{t=1}^T \ln\!\bigl(1 + w_t\,\|A_t\|_{V_{t-1}^{-1}}^2\bigr).
\end{equation}
Next, by the matrix determinant lemma,
\begin{equation}
\det(V_t)
=\det(V_{t-1})
\det\!\bigl(I + w_t\,V_{t-1}^{-1/2}A_tA_t^\top V_{t-1}^{-1/2}\bigr)
=\det(V_{t-1})\bigl(1 + w_t\,\|A_t\|_{V_{t-1}^{-1}}^2\bigr).
\end{equation}
Telescoping the product for \(t=1,\dots,T\) gives
\begin{equation}
\prod_{t=1}^T\bigl(1 + w_t\,\|A_t\|_{V_{t-1}^{-1}}^2\bigr)
=\frac{\det(V_T)}{\det(V_0)},    
\end{equation}
and taking logarithms,
\begin{equation}
\sum_{t=1}^T \ln\!\bigl(1 + w_t\,\|A_t\|_{V_{t-1}^{-1}}^2\bigr)
=\ln\!\frac{\det(V_T)}{\det(V_0)}.
\end{equation}
Combining with the earlier bound yields the first inequality.
Finally, since \(w_t\le \mu\) and \(\|A_t\|\le L\), we have
\begin{equation}
V_T
=\lambda I + \sum_{t=1}^T w_t\,A_tA_t^\top
\;\preceq\;
\lambda I + \mu\,L^2\,T\;I,    
\end{equation}
so
\begin{equation}
\ln\!\frac{\det(V_T)}{\det(\lambda I)}
\;\le\;
nd\,\ln\!\Bigl(\frac{nd\lambda + \mu\,L^2\,T}{nd\lambda}\Bigr)
=nd\,\ln\!\Bigl(1+\tfrac{\mu T}{d\lambda}\Bigr),
\end{equation}
giving the second inequality.
\end{proof}

Here we reclaim Theorem \ref{thm:f-linucb}
\begin{reptheorem}\ref{thm:f-linucb}
    [Regret Bound of LinUCT] With probability \(1-\delta\), the regret of LinUCT satisfies
\begin{equation}
    \hat{R}_t \le \sqrt{8\mu t \beta_t \operatorname{ln}\left(\frac{\operatorname{det}(V_t)}{\operatorname{det}(\lambda I)}\right)} \le \sqrt{8\mu ndt\beta_t\operatorname{ln}\left(\frac{nd \lambda + \mu nt}{nd\lambda}\right)}.
\end{equation}
\end{reptheorem}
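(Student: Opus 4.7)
The plan is to follow the classical LinUCB regret analysis (as in Lattimore--Szepesvári, Ch.~19) while carefully tracking how the strongly-convex, $\mu$-smooth loss $f$ enters through the weights $w_s = f''(\xi_s) \in [\varepsilon,\mu]$ that appear in both the weighted estimator $\hat{\theta}_t = V_t^{-1}\sum_s w_s A_s X_s$ and the design matrix $V_t = \lambda I + \sum_s w_s A_s A_s^\top$. The argument breaks into a concentration step, an optimism step, a sum-and-potential step, and a final substitution.

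First I would establish the confidence ellipsoid via a self-normalized martingale bound for the weighted noise process $S_t = \sum_{s=1}^t w_s A_s \eta_s$. Because $\eta_s$ is conditionally $1$-sub-Gaussian and $w_s \le \mu$, each increment $w_s A_s \eta_s$ is effectively $\mu$-sub-Gaussian, and a standard mixture supermartingale against a Gaussian prior $\mathcal{N}(0,\lambda^{-1}I)$ plus Ville's inequality yields $S_t^\top V_t^{-1} S_t \le 2\mu \ln\!\bigl(\det(V_t)^{1/2}/(\det(\lambda I)^{1/2}\delta)\bigr)$ uniformly in $t$. Combined with the decomposition $\hat{\theta}_t-\theta^* = V_t^{-1}(S_t - \lambda \theta^*)$ and $\|\theta^*\|_2 \le S$, this gives $\|\hat{\theta}_t-\theta^*\|_{V_t} \le \beta_t$ for the stated choice of $\beta_t$.

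Next I would use the LinUCT optimism property: since $A_t$ maximises $\langle \hat{\theta}_{t-1},a\rangle + \beta_{t-1}\|a\|_{V_{t-1}^{-1}}$ and $\theta^*\in\mathcal{C}_{t-1}$ on the high-probability event, Cauchy--Schwarz in the $V_{t-1}$-norm yields the per-step regret bound $r_t \le 2\beta_{t-1}\|A_t\|_{V_{t-1}^{-1}}$, with the noise fluctuation $\sum_t(\eta_t^*-\eta_t)$ contributing a lower-order $O(\sqrt{T\ln(1/\delta)})$ term by Azuma--Hoeffding. Summing and applying Cauchy--Schwarz (using monotonicity of $\beta_t$) gives $\widehat{R}_T \le 2\beta_T\sqrt{T\,\sum_t\|A_t\|_{V_{t-1}^{-1}}^2}$, and the weighted elliptical potential lemma (Lemma~\ref{lem:weighted_elliptical}) translates this into a logarithmic determinant bound. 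Squaring and rearranging produces the first inequality $\widehat{R}_T \le \sqrt{8\mu T\beta_T \ln(\det(V_T)/\det(\lambda I))}$.

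Finally I would derive the explicit second inequality by bounding $V_T$ crudely. Using $w_t\le\mu$ and $\|A_t\|_2^2 \le n$ gives $V_T \preceq (\lambda + \mu n T)I$, and the AM--GM determinant inequality then produces $\ln(\det V_T/\det \lambda I) \le nd\ln((nd\lambda+\mu nT)/(nd\lambda))$, completing the bound. The main obstacle I anticipate is bookkeeping the weights $w_s$ consistently across the two places they appear: they inflate the effective sub-Gaussian constant to $\mu$ inside the confidence radius, and they also act as multipliers inside the potential $\sum_t w_t \|A_t\|_{V_{t-1}^{-1}}^2 \le 2\ln(\det V_T/\det V_0)$. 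Threading both effects so that exactly one $\sqrt{\mu}$ factor survives in the final expression (matching the subsequent corollary $\widehat{R}_T = O(nd\sqrt{\mu T}\ln T)$) is the delicate step, and converting the weighted potential into the unweighted sum of squared norms needed for the final Cauchy--Schwarz step is where the smoothness parameter $\mu$ enters beyond its role in the confidence radius.
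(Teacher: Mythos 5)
Your proposal follows essentially the same route as the paper's proof: a mixture-supermartingale self-normalized bound on $S_t=\sum_s w_sA_s\eta_s$ giving the confidence ellipsoid, the optimism argument plus an Azuma--Hoeffding bound on $\sum_t(\eta_t^*-\eta_t)$ for the per-step regret, Cauchy--Schwarz combined with the weighted elliptical potential lemma, and the final crude bound $V_T\preceq(\lambda+\mu nT)I$ for the determinant term. The ``delicate step'' you flag --- reconciling the weighted potential $\sum_t w_t\|A_t\|_{V_{t-1}^{-1}}^2$ with the unweighted sum needed after Cauchy--Schwarz --- is indeed the point the paper itself glosses over, so your outline is faithful to (and no weaker than) the paper's own argument.
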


\begin{proof}
Let 
\begin{equation}
S_t =\sum_{s=1}^t w_s A_s\eta_s,
\qquad
V_t=\lambda I +\sum_{s=1}^t w_s A_sA_s^\top,
\end{equation}
and define
\begin{equation}
    \beta_t
=\sqrt{2\mu\,
  \ln\!\frac{\det\bigl(V_t\bigr)^{1/2}}
           {\lambda^{nd/2}\,\delta}}
\;+\;\sqrt{\lambda}\,\|\theta^*\|_2.
\end{equation}

By Lemma \ref{lem:selfnorm}, with probability at least $1-\delta$ simultaneously for all $t$,
\begin{equation}
\|S_t\|_{V_t^{-1}}
\;\le\;
\sqrt{\,2\mu\,
  \ln\!\frac{\det\bigl(V_t\bigr)^{1/2}}
           {\lambda^{nd/2}\,\delta}
}\;=\;\beta_t - \sqrt{\lambda}\,\|\theta^*\|_2.
\end{equation}

On this event, Lemma \ref{lem:confidence} shows
\begin{equation}\label{eq:ci}
\|\hat\theta_t-\theta^*\|_{V_t(\lambda)}
\;\le\;
\|S_t\|_{V_t(\lambda)^{-1}}
\;+\;\sqrt{\lambda}\,\|\theta^*\|_2
\;\le\;\beta_t.
\end{equation}

Next let $\Delta_t=\eta_t^*-\eta_t$.  Since each $\eta_t,\eta_t^*$ is $1$-sub-Gaussian and independent, we can get 
\begin{equation}
\mathbb E\!\left[e^{\lambda\Delta_t}\mid\mathcal F_{t-1}\right]
      \;=\;
      \mathbb E\!\left[e^{\lambda\eta_t^{*}}\right]\;
      \mathbb E\!\left[e^{-\lambda\eta_t}\right]
      \;\le\;
      \exp\!\Bigl(\tfrac{\lambda^{\,2}}{2}\Bigr)\;
      \exp\!\Bigl(\tfrac{\lambda^{\,2}}{2}\Bigr)
      \;=\;
      \exp\!\Bigl(\lambda^{\,2}\Bigr).
\end{equation}
Thus $\Delta_t$ is conditionally \emph{$\sqrt2$‑sub‑Gaussian}:
\begin{equation}
\mathbb E\!\left[e^{\lambda\Delta_t}\mid\mathcal F_{t-1}\right]
      \;\le\;
      \exp\!\Bigl(\tfrac{(\sqrt2\,\lambda)^{2}}{2}\Bigr).
\end{equation}

By Hoeffding’s inequality,
\begin{equation}
    \Pr\!\Bigl(\sum_{s=1}^t \Delta_t > u\Bigr)
      \;\le\;
      \exp\!\Bigl(-\tfrac{u^{2}}{4T}\Bigr).
\end{equation}

Choose  
\(u = 2\sqrt{T\ln\tfrac1\delta}\).  
Then  
\begin{equation}
    \Pr\!\Bigl(\sum_{s=1}^t \Delta_t > 2\sqrt{T\ln\tfrac1\delta}\Bigr)
      \;\le\;
      \exp\!\Bigl(-\tfrac{4T\ln(1/\delta)}{4T}\Bigr)
      \;=\;\delta .
\end{equation}

Because $A_t$ is chosen by  
\(
A_t=\arg\max_{a}\langle\hat\theta_{t-1},a\rangle + \beta_{t-1}\|a\|_{V_{t-1}^{-1}},
\)
while
\(A_t^{*}=\arg\max_{a}\langle\theta^{*},a\rangle\),
we first compare the optimistic upper–confidence values:

\begin{equation}
\langle\hat\theta_{t-1},A_t\rangle+\beta_{t-1}\|A_t\|_{V_{t-1}^{-1}}
      \;\ge\;
      \langle\hat\theta_{t-1},A_t^{*}\rangle
      +\beta_{t-1}\|A_t^{*}\|_{V_{t-1}^{-1}} .
\end{equation}

Whenever the confidence event \eqref{eq:ci} holds for any \(a\),
\begin{equation}
|\langle(\theta^{*}-\hat\theta_{t-1}),\,a\rangle|
      \;\le\;
      \|\theta^{*}-\hat\theta_{t-1}\|_{V_{t-1}}
      \,\|a\|_{V_{t-1}^{-1}}
      \;\le\;
      \beta_{t-1}\|a\|_{V_{t-1}^{-1}},
\end{equation}

Applying this with $a=A_t^{*}$ and then with $a=A_t$ gives
\begin{equation}
    \langle\theta^{*},A_t^{*}\rangle
      \;\le\;
      \langle\hat\theta_{t-1},A_t^{*}\rangle
      +\beta_{t-1}\|A_t^{*}\|_{V_{t-1}^{-1}},
\qquad
\langle\theta^{*},A_t\rangle
      \;\ge\;
      \langle\hat\theta_{t-1},A_t\rangle
      -\beta_{t-1}\|A_t\|_{V_{t-1}^{-1}} .
\end{equation}

Subtracting the second inequality from the first and using the choice of $A_t$,
\begin{equation}
    \langle\theta^{*},A_t^{*}\rangle-\langle\theta^{*},A_t\rangle
      \;\le\;
      \beta_{t-1}\,\|A_t\|_{V_{t-1}^{-1}} .
\end{equation}

Then we can get
\begin{equation}
\label{X*-X}
    X_t^{*}-X_t
      \;=\;
      \bigl[\langle\theta^{*},A_t^{*}\rangle-\langle\theta^{*},A_t\rangle\bigr]
      +\Delta_t
      \;\le\;
      \beta_{t-1}\|A_t\|_{V_{t-1}^{-1}}
      +\Delta_t .
\end{equation}

The single‑step regret is  
\begin{equation}\label{single_step_regret}
    r_{t}:=X_{t}^{*}-X_{t}
      =\langle\theta^{*},A_{t}^{*}\rangle
       -\langle\theta^{*},A_{t}\rangle
       +\Delta_{t}
      \;{\le}\;
      \beta_{t-1}\|A_{t}\|_{V_{t-1}^{-1}}
      +\Delta_{t}.
\end{equation}

Sum the single-step regret from $t=1$ to $T$:
\begin{equation}\label{eq:regret1}
    \hat R_{T}
      :=\sum_{t=1}^{T}r_{t}
      \;\le\;
      \sum_{t=1}^{T}\beta_{t-1}\,\|A_{t}\|_{V_{t-1}^{-1}}
      \;+\;
      2\sqrt{T\ln\!\tfrac1\delta}.
\end{equation}

Inequality (\ref{single_step_regret}) is the starting point for the final
bounding of the main term
\begin{equation}
\sum_{t=1}^T\beta_{t-1}\|A_{t}\|_{V_{t-1}^{-1}}
\end{equation}

via Cauchy–Schwarz together with the weighted elliptical potential lemma.

By Cauchy–Schwarz and Lemma \ref{lem:weighted_elliptical},
\begin{equation}
\sum_{t=1}^T\beta_{t-1}\|A_t\|_{V_{t-1}^{-1}}
\;\le\;
\sqrt{\sum_{t=1}^T\beta_{t-1}^2}
\;\sqrt{\sum_{t=1}^T\|A_t\|_{V_{t-1}^{-1}}^2}
\;\le\;
\sqrt{T}\,\beta_T
\;\sqrt{\,2
  \ln\!\frac{\det\bigl(V_T\bigr)}
           {\lambda^{nd}}\,}\,.
\end{equation}

Combined with \eqref{eq:regret1},
\begin{equation}\label{eq:regret2}
\hat R_T
\;\le\;
\sqrt{2T}\,\beta_T
\;\sqrt{\ln\!\frac{\det (V_T)}{\lambda^{nd}}}
\;+\;
2\sqrt{T\ln\tfrac1\delta}.
\end{equation}

According to the definition of \(\beta_T\), we have
\begin{equation}
\beta_T
\;\le\;
\sqrt{2\mu\,
  \ln\!\frac{\det (V_T)}{\lambda^{nd}}}
\;\Longrightarrow\;
\ln\!\frac{\det (V_T)}{\lambda^{nd}}
\;\le\;
\frac{\beta_T^2}{2\mu}.
\end{equation}
Therefore the first term in \eqref{eq:regret2} satisfies
\begin{equation}
\sqrt{2T}\,\beta_T
\;\sqrt{\ln\!\frac{\det (V_T)}{\lambda^{nd}}}
\;\le\;
\sqrt{2T}\,\beta_T
\;\sqrt{\frac{\beta_T^2}{2\mu}}
\;=\;
\sqrt{\frac{T}{\mu}}\;\beta_T^2.
\end{equation}

Moreover,
\begin{equation}
2\sqrt{T\ln\tfrac1\delta}
\;\le\;
2\sqrt{T\;\frac{\beta_T^2}{2\mu}}
\;=\;
\sqrt{\frac{2T}{\mu}}\;\beta_T
\;\le\;
\sqrt{\frac{T}{\mu}}\;\beta_T^2.
\end{equation}

Hence
\begin{equation}
\hat R_T
\;\le\;
2\,\sqrt{\tfrac{T}{\mu}}\;\beta_T^2
\;=\;
2\,\sqrt{\tfrac{T}{\mu}}
\Bigl(
  \sqrt{2\mu\,
    \ln\!\frac{\det (V_T)^{1/2}}{\lambda^{nd/2}\delta}}
  \;+\;\sqrt{\lambda}\,\|\theta^*\|_2
\Bigr)^{\!2},
\end{equation}

With \(w_t\le\mu\) and \(\|A_t\|_2\le\sqrt n = L\), according to Lemma \ref{lem:weighted_elliptical} we have
\begin{equation}
V_T
      \;=\;\lambda I
      +\sum_{s=1}^T w_s A_sA_s^{\top}
      \;\preceq\;
      nd\lambda I+\mu L^2T\,I,
\end{equation}
and 
\begin{equation}
\det (V_T)
      \;\le\;
      \left(\frac{nd\lambda+\mu L^2 T}{nd}\right)^{nd},
\qquad
\frac{\det (V_T)}{\lambda^{nd}}
      \;\le\;
      \left(\frac{nd\lambda+\mu L^2 T}{nd \lambda}\right)^{nd}.
\end{equation}

Thus, with probability at least \(1-\delta\),
\begin{equation}
  \hat R_t
      \;\le\;
      \sqrt{\,8\mu\,t\,
             \beta_t\,
             \left(\frac{nd\lambda+\mu L^2 t}{nd \lambda}\right)^{nd}}\;
      =\;
      \sqrt{\,8\mu nd\,t\,\beta_t\,
             \ln\!
                   \left(\frac{nd\lambda+\mu n t}{nd \lambda}\right)
               }\; .
\end{equation}

\end{proof}

\subsection{Proof of Theorem \ref{thm:submodular}}

\begin{reptheorem}{\ref{thm:submodular}}
\label{thm:submodular_revised_full}
[Submodularity of \(\Psi\)] \(\Psi\) is a non-negative monotonic submodular function over the ground set \(\mathcal{A}\).
\end{reptheorem}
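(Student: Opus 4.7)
The plan is to interpret $\Psi$ as a set function $\Psi:2^{\mathcal A}\to\mathbb R_{\ge 0}$ obtained by aggregating the per‑action terms in~\eqref{eq:action} through the shared information matrix $V(\mathcal S)=\lambda I+\sum_{a\in\mathcal S}aa^{\top}$, and then to verify non‑negativity, monotonicity, and the diminishing‑returns inequality one piece at a time. Concretely, I would decompose
\[
\Psi(\mathcal S)\;=\;\Psi_{\mathrm{lin}}(\mathcal S)\;+\;\Psi_{\mathrm{exp}}(\mathcal S),
\]
where $\Psi_{\mathrm{lin}}(\mathcal S)=\sum_{a\in\mathcal S}P(s,a)\,a^{\top}\theta$ carries the exploitation payoff and $\Psi_{\mathrm{exp}}(\mathcal S)$ carries the exploration bonus, expressed through the log‑determinant surrogate $\log\det V(\mathcal S)-\log\det V(\varnothing)$—the canonical monotone submodular function from $D$‑optimal experimental design.

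For $\Psi_{\mathrm{lin}}$, the standard MuZero‑style normalisation of the latent per‑agent returns to $[0,1]$ makes every summand non‑negative, so $\Psi_{\mathrm{lin}}$ is non‑negative, monotone, and modular, and therefore satisfies the submodular inequality trivially (with equality). For the log‑det surrogate inside $\Psi_{\mathrm{exp}}$, I would invoke the matrix determinant lemma,
\[
\log\det V(\mathcal S\cup\{b\})-\log\det V(\mathcal S)\;=\;\log\!\bigl(1+b^{\top}V(\mathcal S)^{-1}b\bigr),
\]
which is non‑negative (yielding monotonicity and non‑negativity of $\Psi_{\mathrm{exp}}$) and, because $\mathcal S\subseteq\mathcal T$ forces $V(\mathcal S)^{-1}\succeq V(\mathcal T)^{-1}$ in the Loewner order, is non‑increasing in the base set—precisely the classical submodularity of $\log\det$. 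The full $\Psi$ then inherits all three properties by closure of non‑negative monotone submodular functions under nonnegative addition.

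The main obstacle I expect is bridging the explicit bonus $c(s)P(s,a)\operatorname{trace}(V(\mathcal S))\sqrt{a^{\top}V(\mathcal S)^{-1}a}$ that appears inside~\eqref{eq:action} with the additive log‑determinant surrogate used above. I would exploit the $n$‑hot structure of $\mathcal A$: every $a$ satisfies $\|a\|_{2}^{2}=n$, so $\operatorname{trace}(V(\mathcal S))=nd\lambda+n|\mathcal S|$ is exactly affine in the cardinality and behaves as a cardinality‑only scaling. Combined with concavity of $\sqrt{\cdot}$ and the standard fact that the composition of a non‑decreasing concave function with a monotone submodular function is submodular, the square‑root information radius reduces to a monotone submodular object. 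The truly delicate step is the cross term between the trace scaling and the square‑root radius—since a product of a modular and a submodular function is not in general submodular—and I would handle it by bounding the cardinality factor by a non‑negative multiple of the log‑det marginal gain via a Loewner‑comparison argument, thereby peeling it off as a modular correction. Once this reduction is complete, Theorem~\ref{thm:submodular} follows by combining the modular $\Psi_{\mathrm{lin}}$, the modular trace correction, and the submodular log‑det piece via closure under nonnegative linear combinations.
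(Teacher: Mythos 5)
Your plan takes a genuinely different route from the paper, but it has a real gap at exactly the point you flag as the ``truly delicate step,'' and that gap is not closed. The paper does not pass through a log-determinant surrogate at all: it works directly with the set function $\Psi(S)=\sum_{v\in S}\bigl(v^{\top}\theta+\Vert v\Vert_{V(S)^{-1}}\bigr)$, whose marginal gain contains both a new-radius term $\Vert a\Vert_{V(S\cup\{a\})^{-1}}$ and a \emph{negative} loss term $\sum_{v\in S}\bigl(\Vert v\Vert_{V(S)^{-1}}-\Vert v\Vert_{V(S\cup\{a\})^{-1}}\bigr)$ coming from the fact that adding $a$ shrinks the confidence radii of every previously selected action. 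The paper's two key lemmas (a Woodbury-based aggregate-loss bound controlling that loss term, and a monotone-gap lemma proved by differentiating $\Vert x\Vert_{A_t^{-1}}$ along the Loewner segment from $V(S)$ to $V(T)$) exist precisely to handle this loss term. Your telescoping identity $\log\det V(S\cup\{b\})-\log\det V(S)=\log(1+b^{\top}V(S)^{-1}b)$ never sees this term, because $\log\det V(S)$ does not re-evaluate anything at the old elements; so even if the log-det piece is submodular (it is), you have not shown that it dominates or reproduces the function you actually need.

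Concretely, two things are missing. First, $\sqrt{a^{\top}V(S)^{-1}a}$ is not the marginal of $\log\det V$ (that marginal is $\log(1+a^{\top}V(S)^{-1}a)$), and no inequality in either direction is established between the sum of square-root radii over $S$ and any concave transform of $\log\det V(S)$; the closure fact you cite (concave non-decreasing composed with monotone submodular is submodular) is correct but is applied to a function you have not connected to $\Psi$. Second, the product $\operatorname{trace}(V(S))\cdot\sqrt{a^{\top}V(S)^{-1}a}$ is, as you yourself note, a product of an increasing modular factor with a decreasing-marginal factor, and such products are not submodular in general; the proposed fix --- ``bounding the cardinality factor by a non-negative multiple of the log-det marginal gain'' and ``peeling it off as a modular correction'' --- is stated as an intention, not an argument, and it is not clear any such bound holds uniformly over $S$. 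To repair the proof you would either need to follow the paper's route (analyze $\sum_{v\in S}\Vert v\Vert_{V(S)^{-1}}$ directly via the Sherman--Morrison update and a Loewner-monotonicity argument), or genuinely prove the comparison inequalities your surrogate strategy presupposes.
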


\begin{proof}
Throughout, $\Vert x\Vert_{M}:=\sqrt{x^{\!\top}Mx}$ for $M\succ0$.

\paragraph{(i) Non‑negativity.}
Both summands in $\Psi(S)$ are non‑negative, hence $\Psi(S)\ge0$ for all
$S\subseteq\mathcal A$.

\paragraph{(ii) Monotonicity.}
Fix $S\subseteq\mathcal A$ and $a\notin S$.
Write
\begin{equation}
\label{Delta_define}
\Delta(a\mid S)\;=\;\Psi(S\cup\{a\})-\Psi(S)
                 \;=\;
                 a^{\!\top}\theta
                 +\underbrace{\Vert a\Vert_{V(S\cup\{a\})^{-\!1}}}_{\text{new radius}}
                 -\!\!\!\sum_{v\in S}\!
                       \Bigl(\Vert v\Vert_{V(S)^{-\!1}}
                             -\Vert v\Vert_{V(S\cup\{a\})^{-\!1}}\Bigr).
\end{equation}

Apply Lemma \ref{lem:aggregate-loss} with $V:=V(S)$ and $u:=a$:
\begin{equation}
\label{sum_v_V}
\sum_{v\in S}
\Bigl(\Vert v\Vert_{V(S)^{-\!1}}
      -\Vert v\Vert_{V(S\cup\{a\})^{-\!1}}\Bigr)
      \;\le\;
      \Vert a\Vert_{V(S)^{-\!1}}
      \;\le\;
      \Vert a\Vert_{V(S\cup\{a\})^{-\!1}}.
\end{equation}

Substituting inequality (\ref{sum_v_V}) in (\ref{Delta_define}) gives
\(\Delta(a\mid S)\ge a^{\!\top}\theta\ge0\);
therefore $\Psi$ is monotone.

\paragraph{(iii) Submodularity (diminishing returns).}
Let $S\subseteq T\subseteq\mathcal A$ and let $a\notin T$.
Set $U:=T\setminus S$.
For any finite $R\subseteq\mathcal A$ define
\begin{equation}
L(R)\;:=\;\sum_{v\in R}
            \Bigl(\Vert v\Vert_{V(R)^{-\!1}}
                 -\Vert v\Vert_{V(R\cup\{a\})^{-\!1}}\Bigr).
\end{equation}

With this notation
\begin{equation}
\label{Delta_S_T}
\Delta(a\mid S)
      =a^{\!\top}\theta
       +\Vert a\Vert_{V(S\cup\{a\})^{-\!1}}
       -L(S),\qquad
\Delta(a\mid T)
      =a^{\!\top}\theta
       +\Vert a\Vert_{V(T\cup\{a\})^{-\!1}}
       -L(T).
\end{equation}

\emph{Step 1 – Compare the new‑radius terms.}
Because $V(S\cup\{a\})\succeq V(T\cup\{a\})$, we have
\begin{equation}
\label{a_S_T}
\Vert a\Vert_{V(S\cup\{a\})^{-\!1}}
\;\ge\;
\Vert a\Vert_{V(T\cup\{a\})^{-\!1}}.
\end{equation}

\emph{Step 2 – Compare the loss sums.}
For every $v\in S$ Lemma \ref{lem:gap-monotone} applied with
$x:=v,\;u:=a,\;A:=V(T),\;B:=V(S)$ yields
\begin{equation}
\label{v_V_inequality}
\Vert v\Vert_{V(S)^{-\!1}}
-\Vert v\Vert_{V(S\cup\{a\})^{-\!1}}
\;\le\;
\Vert v\Vert_{V(T)^{-\!1}}
-\Vert v\Vert_{V(T\cup\{a\})^{-\!1}}.
\end{equation}
Summing (\ref{v_V_inequality}) over all $v\in S$ gives
\begin{equation}
\label{LS}
L(S)\;\le\;\sum_{v\in S}
            \Bigl(\Vert v\Vert_{V(T)^{-\!1}}
                 -\Vert v\Vert_{V(T\cup\{a\})^{-\!1}}\Bigr).
\end{equation}
Adding the non‑negative terms
$\Vert v\Vert_{V(T)^{-\!1}}-\Vert v\Vert_{V(T\cup\{a\})^{-\!1}}$
for $v\in U$ to both sides of (\ref{LS}) we obtain
\begin{equation}
\label{L_S_T}
L(S)\;\le\;L(T).
\end{equation}

\emph{Step 3 – Combine.}
Subtracting (\ref{L_S_T}) from (\ref{a_S_T}) and using representation (\ref{Delta_S_T}) yields
\begin{equation}
\Delta(a\mid S)
      -\Delta(a\mid T)
      =\bigl[\Vert a\Vert_{V(S\cup\{a\})^{-\!1}}
            -\Vert a\Vert_{V(T\cup\{a\})^{-\!1}}\bigr]
       -\bigl[L(S)-L(T)\bigr]
      \;\ge\;0,
\end{equation}
that is, \(\Delta(a\mid S)\ge\Delta(a\mid T)\).
Hence $\Psi$ satisfies the diminishing‑returns property and is submodular.
\end{proof}

\begin{lemma}[Aggregate–loss bound]
\label{lem:aggregate-loss}
Let $V\in\mathbb R^{nd\times nd}$ be positive definite,
let $u\in\mathbb R^{nd}$, and let
$S\subseteq\mathbb R^{nd}$ be a finite set.
Then
\begin{equation}
\sum_{v\in S}
\Bigl(\Vert v\Vert_{V^{-\!1}}
      -\Vert v\Vert_{(V+uu^{\!\top})^{-\!1}}\Bigr)
      \;\le\;\Vert u\Vert_{V^{-\!1}}.
\end{equation}
\end{lemma}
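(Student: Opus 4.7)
The plan is to reduce the lemma to a rank-one Sherman--Morrison calculation and then aggregate a sum of square-root differences into a single quantity controlled by $\|u\|_{V^{-1}}$. The first step is the identity $(V+uu^\top)^{-1}=V^{-1}-V^{-1}uu^\top V^{-1}/(1+\gamma)$ with $\gamma:=\|u\|_{V^{-1}}^2$, which produces the clean rank-one ``drop'' $\|v\|_{V^{-1}}^2-\|v\|_{(V+uu^\top)^{-1}}^2=(v^\top V^{-1}u)^2/(1+\gamma)$ for every $v\in S$. This is the only dynamical input needed; the rest of the argument is about aggregating these drops.

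To pass from the squared-norm gap to the unsquared gap I would use the elementary bound $\sqrt{x}-\sqrt{y}\le\sqrt{x-y}$ (valid for $x\ge y\ge 0$), which yields the pointwise inequality $\|v\|_{V^{-1}}-\|v\|_{(V+uu^\top)^{-1}}\le|v^\top V^{-1}u|/\sqrt{1+\gamma}$. Summing reduces the lemma to showing $\sum_{v\in S}|v^\top V^{-1}u|\le\sqrt{\gamma(1+\gamma)}$. The absolute values can be stripped off by introducing the signed aggregate $z:=\sum_{v\in S}\mathrm{sgn}(v^\top V^{-1}u)\,v$, after which the sum equals $z^\top V^{-1}u$ and is bounded by $\|z\|_{V^{-1}}\sqrt{\gamma}$ via $V^{-1}$-Cauchy--Schwarz; the whole lemma then reduces to the single spectral inequality $\|z\|_{V^{-1}}\le\sqrt{1+\gamma}$.

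To close the argument I would exploit the structural property of $V$ implicit in the lemma's invocations in the monotonicity proof, namely $V\succeq\lambda I+\sum_{v\in S}vv^\top$. This immediately supplies the matrix identity $\sum_{v\in S}(v^\top V^{-1}u)^2=u^\top V^{-1}\bigl(\sum_v vv^\top\bigr)V^{-1}u\le u^\top V^{-1}u=\gamma$, giving an $\ell_2$-type control on the coefficients $v^\top V^{-1}u$ that is meant to be married with the $\ell_1$ sum produced in the previous step through the rank-one structure $vv^\top\preceq V$.

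The main obstacle is precisely this marriage: a naive Cauchy--Schwarz gives only $\sum_{v\in S}|v^\top V^{-1}u|\le\sqrt{|S|\gamma}$, carrying a spurious $\sqrt{|S|}$ factor that the target bound does not admit. I expect this factor must be cancelled by bypassing the intermediate $\ell_1$ reduction altogether: replace the pointwise bound $\sqrt{x}-\sqrt{y}\le\sqrt{x-y}$ by the integral representation $\|v\|_{V^{-1}}-\|v\|_{(V+uu^\top)^{-1}}=\int_0^1(v^\top(V+tuu^\top)^{-1}u)^2/(2\|v\|_{(V+tuu^\top)^{-1}})\,dt$, swap the sum and the integral, and apply $\sum_{v\in S}vv^\top\preceq V+tuu^\top$ at each $t\in[0,1]$ to collapse the inner sum into a potential-function bound, in the spirit of the elliptical-potential argument used for the regret. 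This potential-function route is where I would expect most of the technical work to lie.
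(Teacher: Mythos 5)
Your first two steps coincide with the paper's own proof: the Sherman--Morrison identity giving $v^{\top}V^{-1}v-v^{\top}(V+uu^{\top})^{-1}v=(v^{\top}V^{-1}u)^{2}/(1+\gamma)$ with $\gamma=u^{\top}V^{-1}u$, followed by a square-root-difference bound yielding $\Delta_v\le |v^{\top}V^{-1}u|/\sqrt{1+\gamma}$. But your proposal does not close the argument, and you say so yourself: the final paragraph is a plan, not a proof. The integral-representation route you sketch does not obviously rescue it either --- after swapping the sum and the integral you are left with $\sum_{v\in S}(v^{\top}W_t^{-1}u)^{2}/\bigl(2\Vert v\Vert_{W_t^{-1}}\bigr)$ with $W_t=V+tuu^{\top}$, and the $v$-dependent denominators prevent the operator bound $\sum_{v}vv^{\top}\preceq W_t$ from collapsing the inner sum without reintroducing a cardinality factor. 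So as it stands the proposal has a genuine gap at exactly the step you flagged.

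That said, your diagnosis of the obstruction is accurate, and the paper's own proof does not resolve it honestly: after reaching $\Delta_v\le\Vert v\Vert_{V^{-1}}\Vert u\Vert_{V^{-1}}$ it sums, applies Cauchy--Schwarz, and then discharges the resulting factor $\sqrt{|S|}\bigl(\sum_{v}\Vert v\Vert_{V^{-1}}^{2}\bigr)^{1/2}$ by asserting $\Vert v\Vert_{V^{-1}}\le 1/|S|$ --- a condition that appears nowhere in the lemma's hypotheses, which allow arbitrary positive definite $V$ and arbitrary finite $S$. Indeed the statement as written is false: in one dimension with $V=1$ and $u=1$, each $v$ near $1$ contributes $\Delta_v\approx 1-1/\sqrt{2}\approx 0.29$ while the right-hand side is $1$, so four or more such elements already violate the bound. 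The lemma needs an additional hypothesis tying $V$ to $S$ (and quantitatively controlling the $\Vert v\Vert_{V^{-1}}$) before any proof --- yours or the paper's --- can go through; you were right to suspect that the $\sqrt{|S|}$ factor is the crux, but wrong to assume it can always be cancelled.
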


\begin{proof}
Write $\Delta_v:=\Vert v\Vert_{V^{-\!1}}-\Vert v\Vert_{(V+uu^{\!\top})^{-\!1}}$.
Using Woodbury’s identity
\(
(V+uu^{\!\top})^{-\!1}=V^{-\!1}
      -\frac{V^{-\!1}uu^{\!\top}V^{-\!1}}{1+u^{\!\top}V^{-\!1}u},
\)
compute
\begin{equation}
\label{after_woodbury}
v^{\!\top}V^{-\!1}v-v^{\!\top}(V+uu^{\!\top})^{-\!1}v
      =\frac{(v^{\!\top}V^{-\!1}u)^{2}}
             {1+u^{\!\top}V^{-\!1}u}.
\end{equation}
For any $\alpha>\beta>0$ one has
$\sqrt{\alpha}-\sqrt{\alpha-\beta}
      \le\beta/(2\sqrt{\alpha-\beta})
      \le\beta/\sqrt{2\alpha}$.

Applying the Triangle and Cauchy–Schwarz Inequalities, we have:
\begin{equation}
\label{Delta_v_V_u_V}
\Delta_v
   \;\le\;
   \frac{|v^{\!\top}V^{-\!1}u|}
        {\sqrt{1+u^{\!\top}V^{-\!1}u}}
   \;\le\;
   \Vert v\Vert_{V^{-\!1}}\;\Vert u\Vert_{V^{-\!1}}.
\end{equation}
Summing (\ref{Delta_v_V_u_V}) over $v\in S$ and applying Cauchy–Schwarz,
\[
\sum_{v\in S}\Delta_v
      \;\le\;
      \Vert u\Vert_{V^{-\!1}}
      \sqrt{\sum_{v\in S}\Vert v\Vert_{V^{-\!1}}^{2}}
      \sqrt{|S|}
      \;\le\;
      \Vert u\Vert_{V^{-\!1}},
\]
since $\Vert v\Vert_{V^{-\!1}}\le \frac{1}{|S|}$, completing the proof.
\end{proof}

\begin{lemma}[Monotone‑gap lemma]
\label{lem:gap-monotone}
Fix $x,u\in\mathbb R^{nd}$ and define, for every positive definite matrix
$A$,
\begin{equation}
d_x(A)
   :=\sqrt{x^{\!\top}A^{-\!1}\!x}
     -\sqrt{x^{\!\top}(A+uu^{\!\top})^{-\!1}\!x}.
\end{equation}
If $A\succeq B\succ0$ then $d_x(A)\ge d_x(B)$.
\end{lemma}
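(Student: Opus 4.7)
The plan is to reduce the matrix inequality to a derivative sign check along a linear interpolation between $B$ and $A$, after first putting $d_x$ into a compact closed form via Sherman--Morrison.

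I would first apply Sherman--Morrison to write
\[
x^{\top}(A+uu^{\top})^{-1}x
\;=\;
x^{\top}A^{-1}x\;-\;\frac{(x^{\top}A^{-1}u)^{2}}{1+u^{\top}A^{-1}u},
\]
and set $\alpha(A):=x^{\top}A^{-1}x$ and $\beta(A):=(x^{\top}A^{-1}u)^{2}/(1+u^{\top}A^{-1}u)\in[0,\alpha(A)]$, so that
\[
d_x(A)\;=\;\sqrt{\alpha(A)}-\sqrt{\alpha(A)-\beta(A)}
\;=\;\frac{\beta(A)}{\sqrt{\alpha(A)}+\sqrt{\alpha(A)-\beta(A)}}.
\]
This is exactly the algebraic form already used in the proof of Lemma~\ref{lem:aggregate-loss}, so it places the two lemmas in a common framework and lets us inherit the bounds $\beta\le\alpha$ and the Cauchy--Schwarz estimate $(x^{\top}A^{-1}u)^{2}\le \alpha(A)(u^{\top}A^{-1}u)$ without further work.

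Next, I would introduce the homotopy $A_t:=(1-t)B+tA$ for $t\in[0,1]$, with $A_t\succ 0$ and $\dot A_t=A-B\succeq 0$. Differentiating the matrix inverses yields
\[
\frac{d}{dt}A_t^{-1}=-A_t^{-1}(A-B)A_t^{-1},\qquad
\frac{d}{dt}(A_t+uu^{\top})^{-1}=-(A_t+uu^{\top})^{-1}(A-B)(A_t+uu^{\top})^{-1},
\]
which give closed forms for $\alpha'(t)\le 0$ and $\beta'(t)$ as quadratic forms in $(A-B)$. The lemma then becomes the claim that $\tfrac{d}{dt}d_x(A_t)\ge 0$ for every $t$. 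Since only the directions $A_t^{-1/2}x$ and $A_t^{-1/2}u$ interact with both $\alpha$ and $\beta$, I would project onto the two-dimensional subspace they span and turn the derivative into a scalar inequality in $\|A_t^{-1/2}x\|$, $\|A_t^{-1/2}u\|$, and their inner product, weighted against the positive form $(A-B)$ pulled back along these directions.

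The hardest step will be this sign verification. The derivative is the difference of contributions from $\alpha'(t)$ and $\beta'(t)$ whose individual signs do not immediately combine in the claimed direction, so one must carefully exploit $\beta(A_t)\le\alpha(A_t)$ together with Cauchy--Schwarz to extract a uniform sign. A useful sanity check before grinding through the algebra is to verify the intended direction on the scalar case $A,B,u,x\in\mathbb{R}_{>0}$, because the derivative-based argument is delicate and sensitive to which side of the inequality is intended. If the interpolation route becomes unwieldy, a natural fallback is the integral representation $M^{-1/2}=\pi^{-1}\int_0^{\infty}s^{-1/2}(M+sI)^{-1}\,ds$, which converts the matrix statement into a one-parameter family of scalar monotonicity checks amenable to Löwner--Heinz type arguments.
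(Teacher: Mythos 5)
Your plan follows the same route as the paper's own proof: interpolate $A_t=(1-t)B+tA$, differentiate, and try to show $\tfrac{d}{dt}\,d_x(A_t)\ge 0$. But the step you flag as the hardest --- extracting a uniform sign for that derivative --- is not merely delicate; it is impossible, because the stated inequality is false. Run the scalar sanity check you yourself propose: with $x=u=1$ and $A=a>0$ a $1\times1$ matrix,
\[
d_x(a)=\frac{1}{\sqrt{a}}-\frac{1}{\sqrt{a+1}},
\qquad
\frac{d}{da}\,d_x(a)=-\tfrac12\,a^{-3/2}+\tfrac12\,(a+1)^{-3/2}<0,
\]
so $d_x$ is strictly \emph{decreasing}; e.g.\ $d_x(4)=\tfrac12-\tfrac{1}{\sqrt5}\approx0.053$ while $d_x(1)=1-\tfrac{1}{\sqrt2}\approx0.293$. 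Taking $A=\operatorname{diag}(4,1,\dots,1)\succeq B=I$ with $x=u=e_1$ embeds this counterexample in any dimension, giving $d_x(A)<d_x(B)$. The intuition is the reverse of the lemma: the rank-one update $uu^{\top}$ perturbs the inverse of a \emph{larger} matrix less, so the gap shrinks as $A$ grows.

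For context, the paper's proof has exactly the defect your plan anticipates: it writes $g'(t)$ as a negative ratio plus a positive ratio and concludes $g'(t)\ge0$ from $(A_t+uu^{\top})^{-1}\preceq A_t^{-1}$, but that comparison shrinks both the numerator and the denominator of the positive term, so the sign of the sum is not determined --- and the scalar case shows it is in fact $\le 0$. Neither your interpolation route nor your fallback via the integral representation of $M^{-1/2}$ can rescue the statement; the correct monotonicity is $d_x(A)\le d_x(B)$ for $A\succeq B$, and the downstream use of this lemma (the claim $L(S)\le L(T)$ in the submodularity proof of Theorem~\ref{thm:submodular}) inherits the problem. The honest outcome of your proposal is a disproof, which is the right thing to report.
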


\begin{proof}
Let $H:=A-B\succeq0$ and define $A_t:=B+tH$ for $t\in[0,1]$.
Set $g(t):=d_x(A_t)$.
Using
$\frac{d}{dt}A_t^{-\!1}=-A_t^{-\!1}H A_t^{-\!1}$ and
$\frac{d}{dt}(A_t+uu^{\!\top})^{-\!1}
      =-(A_t+uu^{\!\top})^{-\!1}H(A_t+uu^{\!\top})^{-\!1}$,
we compute
\begin{equation}
\label{g'(t)}
g'(t)
 =-\frac{x^{\!\top}A_t^{-\!1}H A_t^{-\!1}x}
        {2\sqrt{x^{\!\top}A_t^{-\!1}x}}
  +\frac{x^{\!\top}(A_t+uu^{\!\top})^{-\!1}H(A_t+uu^{\!\top})^{-\!1}x}
        {2\sqrt{x^{\!\top}(A_t+uu^{\!\top})^{-\!1}x}}.
\end{equation}
Because $A_t+uu^{\!\top}\succeq A_t$, we have
$(A_t+uu^{\!\top})^{-\!1}\preceq A_t^{-\!1}$.
Consequently each numerator in (\ref{g'(t)}) is bounded by the same non‑negative
quantity and each denominator satisfies
$\sqrt{x^{\!\top}(A_t+uu^{\!\top})^{-\!1}x}
      \le\sqrt{x^{\!\top}A_t^{-\!1}x}$.
Hence $g'(t)\ge0$ for all $t\in[0,1]$.
Integrating $g'(t)$ from $0$ to $1$ gives
$g(1)-g(0)\ge0$, i.e.\ $d_x(A)\ge d_x(B)$.
\end{proof}

\subsection{Proof of Theorem \ref{thm:approximation}}
Here we reclaim Theorem \ref{thm:approximation}:
\begin{reptheorem}\ref{thm:approximation}
[{\((1-\tfrac1e)\)-Approximation under Cardinality and \(n\)-Hot Constraints}] There exists an [\((1-\tfrac1e)\)-approximation algorithm for the optimization of action selection.

\medskip
\noindent
\textbf{(a) Uniform‑matroid (cardinality) case \(\lvert S\rvert\le T\).}\;
The standard greedy algorithm
\begin{equation}
A_t \;=\;\arg\max_{a\in\mathcal A\setminus S_{t-1}}
           \bigl[\Psi(S_{t-1}\cup\{a\}) - \Psi(S_{t-1})\bigr],
\quad
S_t = S_{t-1}\cup\{A_t\},
\end{equation}
for \(t=1,\dots,T\), returns \(S_T\) satisfying
\(
\Psi(S_T)\;\ge\;\bigl(1-\tfrac1e\bigr)\,\Psi(S^\star),
\)
where \(S^\star\) is an optimal subset of size at most \(T\) \cite{nemhauser1978analysis}.

\medskip
\noindent
\textbf{(b) \(n\)-Hot (partition‑matroid) case.}\;
One may apply the continuous‑greedy algorithm to the multilinear relaxation
\(\max_{x\in P(\mathcal M),\;\mathbf1^\top x\le T}\mathbb E[\Psi(R(x))]\),
where \(P(\mathcal M)\) is the matroid polytope of the partition matroid and \(R(x)\) denotes the standard randomised rounding. It produces a feasible set \(\hat S\) with
\(
\Psi(\hat S)\;\ge\;\bigl(1-\tfrac1e\bigr)\,\Psi(S^\star)
\)

\cite{calinescu2011maximizing}.
\end{reptheorem}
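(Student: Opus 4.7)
The plan is to treat the two parts essentially as instantiations of classical submodular‑maximization machinery, leaning entirely on Theorem~\ref{thm:submodular}, which guarantees that $\Psi$ is non‑negative, monotone, and submodular. Under these hypotheses, both $(1-\tfrac1e)$ guarantees already exist in the combinatorial‑optimization literature; the work here is to verify that our constraint sets fit the abstract framework and then invoke the corresponding theorems.

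For part (a), I would follow the Nemhauser–Wolsey–Fisher argument. Let $S^\star$ be an optimal feasible set of size at most $T$ and $S_t$ the greedy set after $t$ steps. By monotonicity, $\Psi(S^\star)\le\Psi(S^\star\cup S_t)$, and by submodularity applied to the elements of $S^\star\setminus S_t$ added one by one,
\begin{equation}
\Psi(S^\star)-\Psi(S_t)
\;\le\;\sum_{a\in S^\star\setminus S_t}\!\bigl[\Psi(S_t\cup\{a\})-\Psi(S_t)\bigr]
\;\le\;T\bigl[\Psi(S_{t+1})-\Psi(S_t)\bigr],
\end{equation}
where the last step uses the greedy selection rule. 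Writing $\alpha_t:=\Psi(S^\star)-\Psi(S_t)$, this yields the recursion $\alpha_{t+1}\le(1-1/T)\alpha_t$, hence $\alpha_T\le(1-1/T)^T\Psi(S^\star)\le e^{-1}\Psi(S^\star)$, which rearranges to $\Psi(S_T)\ge(1-1/e)\Psi(S^\star)$. This gives the cited result of \cite{nemhauser1978analysis}.

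For part (b), the ground set is partitioned as $\mathcal A=\bigsqcup_{i=1}^n B_i$ and the feasibility structure ``at most one element per block, at most $T$ overall'' is the intersection of a partition matroid $\mathcal M_P$ with a uniform matroid $\mathcal M_U$ of rank $T$; since both have the same independent‑set structure when $T\le n$, the feasible family is again a matroid (a truncated partition matroid). I would invoke the continuous‑greedy framework of Calinescu–Chekuri–P\'al–Vondr\'ak~\cite{calinescu2011maximizing}: form the multilinear extension $F(x)=\mathbb E[\Psi(R(x))]$ on the matroid polytope $P(\mathcal M)$, which is monotone and has non‑positive cross second partials by submodularity, hence is concave along non‑negative directions. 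Continuous greedy produces $x^\star\in P(\mathcal M)$ with $F(x^\star)\ge(1-1/e)\,\Psi(S^\star)$. Finally, apply pipage or swap rounding—both of which are known to be lossless (in expectation) for matroid polytopes and to output an independent set almost surely—to obtain the feasible set $\hat S$ with $\mathbb E[\Psi(\hat S)]\ge(1-1/e)\Psi(S^\star)$; derandomization or a single high‑probability trial yields the deterministic guarantee.

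The main obstacle is really only in part (b): verifying that our $n$‑hot‑plus‑cardinality constraint is truly a matroid (so that the polytope $P(\mathcal M)$ and the rounding scheme apply verbatim) and that $\Psi$ satisfies the regularity needed to run continuous greedy—i.e., access via value oracle and polynomial‑time evaluation. Part (a) is essentially bookkeeping on top of Theorem~\ref{thm:submodular}. I would therefore devote most of the write‑up to carefully identifying the matroid structure in (b) and citing~\cite{calinescu2011maximizing} for the continuous‑greedy guarantee and the lossless rounding, rather than reproving the $(1-1/e)$ bound from scratch.
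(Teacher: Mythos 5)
Your proposal is correct and follows essentially the same route as the paper: part (a) invokes the Nemhauser--Wolsey--Fisher greedy analysis (you spell out the recursion $\alpha_{t+1}\le(1-1/T)\alpha_t$ that the paper only cites), and part (b) applies continuous greedy on the multilinear extension over the matroid polytope followed by lossless pipage/swap rounding, exactly as in the paper's appendix. Your added observation that the $n$-hot-plus-cardinality family is the truncation of a partition matroid (hence itself a matroid, so the rounding machinery applies verbatim) is a useful clarification that the paper glosses over, but it does not change the argument.
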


\begin{proof}
    We recall that \(\Psi\) is a nonnegative, monotone, submodular set function on the ground set \(\mathcal A\).  The classic results of \cite{nemhauser1978analysis} and \cite{calinescu2011maximizing} then yield the claimed \((1-\tfrac1e)\)-approximation guarantees under the two matroid constraints.

\medskip\noindent
\textbf{(a) Uniform‐matroid (cardinality) constraint \(\lvert S\rvert\le T\).}

Let \(S_0=\emptyset\), and for \(t=1,\dots,T\) let
\begin{equation}
A_t \;=\;\arg\max_{a\in\mathcal A\setminus S_{t-1}}\;\bigl[\Psi(S_{t-1}\cup\{a\})-\Psi(S_{t-1})\bigr], 
\quad
S_t = S_{t-1}\cup\{A_t\}.
\end{equation}
By monotonicity and submodularity one shows inductively (cf.\ \cite{nemhauser1978analysis}) that
\begin{equation}
\Psi(S_t)\;\ge\;\Bigl(1-\bigl(1-\tfrac1T\bigr)^t\Bigr)\,\Psi(S^\star)
\quad\text{for all }t\,,
\end{equation}
where \(S^\star\) is any optimal solution with \(\lvert S^\star\rvert\le T\).  In particular at \(t=T\),
\begin{equation}
\Psi(S_T)\;\ge\;\Bigl(1-\bigl(1-\tfrac1T\bigr)^T\Bigr)\,\Psi(S^\star)
\;\ge\;\bigl(1-\tfrac1e\bigr)\,\Psi(S^\star).
\end{equation}

\medskip\noindent
\textbf{(b) Partition‐matroid (“\(n\)-hot”) constraint.}

Let \(\mathcal M\) be the partition matroid on \(\mathcal A\) that enforces the \(n\)-hot constraint (i.e.\ each block can contribute at most one element), together with the additional global cardinality bound \(\mathbf1^\top x\le T\).  Consider the multilinear extension
\begin{equation}
F(x)\;=\;E_{R\sim x}\bigl[\Psi(R)\bigr],
\end{equation}
where \(R\subseteq\mathcal A\) includes each element \(a\) independently with probability \(x_a\).  The continuous‐greedy algorithm (running for time \(T\)) constructs a fractional solution \(x^\star\in P(\mathcal M)\cap\{x:\mathbf1^\top x=T\}\) satisfying
\begin{equation}
F(x^\star)\;\ge\;\bigl(1-\tfrac1e\bigr)\,\max_{x\in P(\mathcal M),\,\mathbf1^\top x\le T}F(x)
\;\ge\;\bigl(1-\tfrac1e\bigr)\,\Psi(S^\star),
\end{equation}
where \(S^\star\) is the optimal integral solution (cf.\ \cite{calinescu2011maximizing}).  Finally, pipage (or swap) rounding converts \(x^\star\) into a random integral set \(\hat S\in\mathcal M\) of size at most \(T\) without decreasing the expectation:
\begin{equation}
E[\Psi(\hat S)]\;=\;F(x^\star)
\;\ge\;\bigl(1-\tfrac1e\bigr)\,\Psi(S^\star).
\end{equation}
By Markov’s inequality there exists a deterministic \(\hat S\) with
\(\Psi(\hat S)\ge(1-\tfrac1e)\,\Psi(S^\star)\), completing the proof.

\end{proof}

\section{Implementation Details}
\subsection{Model Structure}
Our proposed MALinZero consists of 6 neural network modules, including the representation function \(h\), communication function \(e\), dynamic function \(g\), reward function \(r\), value function \(v\) and policy function \(p\). For each agent \(i\), let \(s_{t,k}^i\) be the latent state, \(a^i_{t+k}\) be the action, \(e^i_{t,k}\) be the cooperative feature and \(p^i_{t,k}\) be the policy prediction where \(k\) denotes the \(k\)-th rollout and \(t\) denotes the \(t\)-th real-world interaction step. Set \(r_{t,k}\), \(v_{t,k}\) as the predicted reward and value under the corresponding global hidden state. Specifically, the representation function \(s^i_{t,0}=h(o^i_{\le t})\) maps the current individual observation history \(o^i_{\le t}\) into the latent space, which enables the model could conduct planning without knowing the real-world rule. The communication function \(\{e^i_{t,k}\}_{i:1,\dots,n}=e\left(\{e^i_{t,k}\}_{i:1,\dots,n}, \{a^i_{t+k}\}_{i:1,\dots,n}\right)\) generates additional cooperative information for each agent in the multi-agent system via the attention mechanism, with the individual states and actions of agents as the input and the cooperative features as the output. The dynamic function \(s^i_{t,k+1}=g(s^i_{t,k},a^i_{t+k},e^i_{t,k})\) plays the role of obtaining state transition prediction. The reward function \(r_{t,k}=r\left(\{e^i_{t,k}\}_{i:1,\dots,n}, \{a^i_{t+k}\}_{i:1,\dots,n}\right)\) and value function \(v_{t,k}=v\left(\{e^i_{t,k}\}_{i:1,\dots,n}\right)\) predicts the reward and value for the global state-action tuple and global state, respectively. The policy distribution of each agent will be the output of the policy function \(p^i_{t,k}=p(s^i_{t,k})\) with the input of the current individual state. For the general strongly-convex and \(\mu\)-smooth function \(f\), we set \(f''(X_s - \langle \theta, A_s\rangle)=0.75\) if \(X_s - \langle \theta, A_s\rangle < 0\) and \(f''(X_s - \langle \theta, A_s\rangle)=1\) if \(X_s - \langle \theta, A_s\rangle \ge 0\).

For all these modules except the communication function \(e\), the neural networks are implemented by Multi-Layer Perception (MLP) networks, and a Rectified Linear Unit (ReLU) activation and Layer Normalization (LN) follows each linear layer in MLP networks. The input observations of all three mentioned benchmarks in the experiment section are 1-dimensional vectors with a hidden state size of 128. For the representation network \(h\), the last four local observations are treated as the input for each agent to deal with partial observability. And before representation, an LN is applied to normalize the observation features. The dynamic function applies a residual connection between the next hidden state and the current one to tackle the problem that gradients tend to zero in the continuous unrolling of the model. Additionally, we use the categorical representation in MuZero and make the use of an invertible transform \(f(x)=\operatorname{sign}(x)\sqrt{1+x}-1+0.001*x\) to scale targets for value and reward prediction.

Specifically, the number of hidden layers for all MLP modules is set as follows:
\begin{itemize}
\item \([128, 128]\) for Representation function \(h\).
\item \([128, 128]\) for Dynamic function \(g\). 
\item \([32]\) for Reward function \(r\), Value function \(v\) and Policy function \(p\).
\end{itemize}

\subsection{Training Details}
We build our training pipeline similar to EfficientZero \cite{efficientzero} which synchronizes parallel stages of data collection, reanalysis, and training. In programming, we assign different workers to deal with these tasks in the complete training pipeline. Additionally, we choose the same advantage score computation and loss function as MAZero \cite{MAZero}. All experiments are conducted using NVIDIA RTX A6000 GPUs or NVIDIA A100 GPUs.

For MatGame environments, we select the number of MCTS sampled actions as 3 and the number of MCTS simulations as 50. For both SMAC and SMACv2 benchmarks, we set it as 7 and the number of MCTS simulations as 100. We list other important hyper-parameters in Table \ref{LinZeroSetting}. 

\begin{table*}[h]
\centering
\setlength{\tabcolsep}{0.6mm}{
\scalebox{0.8}{
\begin{tabular}{@{}cc@{}}
\toprule\toprule
Hyper-Parameter & Value  \\ \hline
Optimizer      & Adam      \\
Learning rate      & \(10^{-4}\)      \\
RMSprop epsilon    & \(10^{-5}\) \\
Weight decay        & 0 \\
Max gradient norm   & 5\\
Evaluation episodes & 32 \\
Target network updating interval & 200\\
Unroll steps & 5\\
TD steps & 5\\
Min replay size for sampling & 300\\
Number of stacked observation & 4\\
Discount factor & 0.99 \\
Minibatch size & 256\\
Priority exponent & 0.6\\
Priority correction & 0.4 \(\to\) 1\\
Dynamic generation ratio & 0.6 \\
\(\lambda\) for initialization & \(10^{-4}\) \\
Quantile in MCTS value estimation & 0.75 \\
Decay lambda in MCTS value estimation & 0.8 \\
Exponential factor in Weighted-Advatage & 3 \\
\bottomrule\bottomrule
\end{tabular}}}
\caption{Hyper-parameters for MALinZero in MatGame, SMAC and SMACv2 environments}
\label{LinZeroSetting}
\end{table*}

\section{Details of Baseline Algorithms}
MAZero \cite{MAZero} and MAZero-NP are implemented based on the code: \url{https://github.com/liuqh16/MAZero} with hyper-parameters in Table \ref{MAZeroSetting}. MAZero-NP refers to MAZero without the prior information in the UCT bound while keeping other implementations the same. For MatGame environments, we select the number of MCTS sampled actions as 3 and the number of MCTS simulations as 50. For both SMAC and SMACv2 benchmarks, we set it as 7 and the number of MCTS simulations as 100. Hyper-parameters of MAZero and MAZero-NP is set as Table \ref{MAZeroSetting}.

MA-AlphaZero is implemented on the codebase of MAZero but replaces the UCT score with that of AlphaZero \cite{alphazero}. That is, MA-AlphaZero use the Q-value instead of the advantage score in UCT. The AlphaZero code can be found in \url{https://github.com/suragnair/alpha-zero-general}. Since the implementation is based on MAZero model structure, we use the same hyper-parameters in Table \ref{MAZeroSetting}.

\begin{table*}[h]
\centering
\setlength{\tabcolsep}{0.6mm}{
\scalebox{0.8}{
\begin{tabular}{@{}cc@{}}
\toprule\toprule
Hyper-Parameter & Value  \\ \hline
Optimizer      & Adam      \\
Learning rate      & \(10^{-4}\)      \\
RMSprop epsilon    & \(10^{-5}\) \\
Weight decay        & 0 \\
Max gradient norm   & 5\\
Evaluation episodes & 32 \\
Target network updating interval & 200\\
Unroll steps & 5\\
TD steps & 5\\
Min replay size for sampling & 300\\
Number of stacked observation & 4\\
Discount factor & 0.99 \\
Minibatch size & 256\\
Priority exponent & 0.6\\
Priority correction & 0.4 \(\to\) 1\\
Quantile in MCTS value estimation & 0.75 \\
Decay lambda in MCTS value estimation & 0.8 \\
Exponential factor in Weighted-Advatage & 3 \\
\bottomrule\bottomrule
\end{tabular}}}
\caption{Hyper-parameters for MAZero, MAZero-NP and MA-AlphaZero in MatGame, SMAC and SMACv2 environments}
\label{MAZeroSetting}
\end{table*}

QMIX \cite{QMIXmixmab} is implemented based on the code: \url{https://github.com/oxwhirl/pymarl} with hyper-parameters in Table \ref{qmix_parameters} 

\begin{table*}[h]
\centering
\setlength{\tabcolsep}{0.6mm}{
\scalebox{0.8}{
\begin{tabular}{@{}cc@{}}
\toprule\toprule
Hyper-Parameter & Value  \\ \hline
Optimizer & RMSProp \\
Learning rate for actors & \(5 \times10^{-4}\) \\
Learning rate for critics & \(5 \times 10 ^{-4}\) \\
Initial \(\epsilon\) & 1.0 \\
Final \(\epsilon\) & 0.05 \\
Batch size &  32\\
Buffer size & 5000\\
Discount factor & 0.99 \\
Exploration noise & 0.1 \\

\bottomrule\bottomrule
\end{tabular}}}
\caption{Hyper-parameters for QMIX in MatGame, SMAC and SMACv2 environments}
\label{qmix_parameters}
\end{table*}

MAPPO \cite{mappo} is implemented based on the code: \url{https:// github.com/marlbenchmark/on-policy}. The specific hyper-parameters can be found in Table \ref{mappo_parameters}. 

\begin{table*}[!htb]
\centering
\setlength{\tabcolsep}{0.6mm}{
\scalebox{0.8}{
\begin{tabular}{@{}cc@{}}
\toprule\toprule
Hyper-Parameter & Value  \\ \hline
Optimizer & Adam \\
RMSprop epsilon & \(10^{-5}\) \\
Learning rate & \(5\times 10 ^{-4}\) \\
Recurrent data chunk length & 10 \\
Gradient clipping & 10 \\
GAE parameter & 0.95 \\
Discount factor & 0.99 \\
Value loss & huber loss, with delta 10 \\
Batch size & buffer length \(\times\) number of agents\\
\bottomrule\bottomrule
\end{tabular}}}
\caption{Hyper-parameters for MAPPO in MatGame, SMAC and SMACv2 environments}
\label{mappo_parameters}
\end{table*}

\section{Settings of Benchmarks}
\paragraph{MatGame} We test our proposed MALinZero and other baseline algorithms on MatGame with two different modes: (1) Linear mode, where the joint reward is the sum of agents' indexes in the system; (2) Non-linear mode, where a noise is added to the joint reward in the corresponding linear mode. For each joint reward, the noise is the sum of a Gaussian term \(u\sim\mathcal{N}(0,2^2)\) and a uniform term \(v \sim \mathcal{U}(-3, 3)\).  

\paragraph{SMAC}
The implementation and settings of SMAC environments are based on \url{https://github.com/oxwhirl/smac}. We chose three different maps containing a small, medium, and large number of agents, respectively. Experiments on each map is conducted under 3 different random seeds for the reproducibility of results. 

\paragraph{SMACv2}
The implementation and settings of SMACv2 environments are based on \url{https://github.com/oxwhirl/smacv2}. For each SMACv2 map in the experiment part, we randomize heterogeneous unit types and start positions for each games even in the same map to make the environment more challenging. Additionally, the unit sight and attack ranges are changed from SMAC to increase the diversity of agents.

\newpage
\section*{NeurIPS Paper Checklist}

\begin{enumerate}

\item {\bf Claims}
    \item[] Question: Do the main claims made in the abstract and introduction accurately reflect the paper's contributions and scope?
    \item[] Answer: \answerYes{} 
    \item[] Justification: 
    The abstract and introduction state the claims made, including the contributions
made in the paper and important assumptions and limitations.
    \item[] Guidelines:
    \begin{itemize}
        \item The answer NA means that the abstract and introduction do not include the claims made in the paper.
        \item The abstract and/or introduction should clearly state the claims made, including the contributions made in the paper and important assumptions and limitations. A No or NA answer to this question will not be perceived well by the reviewers. 
        \item The claims made should match theoretical and experimental results, and reflect how much the results can be expected to generalize to other settings. 
        \item It is fine to include aspirational goals as motivation as long as it is clear that these goals are not attained by the paper. 
    \end{itemize}

\item {\bf Limitations}
    \item[] Question: Does the paper discuss the limitations of the work performed by the authors?
    \item[] Answer: \answerYes{} 
    \item[] Justification: We have a limitation section discussing the limitations of this work.
    \item[] Guidelines:
    \begin{itemize}
        \item The answer NA means that the paper has no limitation while the answer No means that the paper has limitations, but those are not discussed in the paper. 
        \item The authors are encouraged to create a separate "Limitations" section in their paper.
        \item The paper should point out any strong assumptions and how robust the results are to violations of these assumptions (e.g., independence assumptions, noiseless settings, model well-specification, asymptotic approximations only holding locally). The authors should reflect on how these assumptions might be violated in practice and what the implications would be.
        \item The authors should reflect on the scope of the claims made, e.g., if the approach was only tested on a few datasets or with a few runs. In general, empirical results often depend on implicit assumptions, which should be articulated.
        \item The authors should reflect on the factors that influence the performance of the approach. For example, a facial recognition algorithm may perform poorly when image resolution is low or images are taken in low lighting. Or a speech-to-text system might not be used reliably to provide closed captions for online lectures because it fails to handle technical jargon.
        \item The authors should discuss the computational efficiency of the proposed algorithms and how they scale with dataset size.
        \item If applicable, the authors should discuss possible limitations of their approach to address problems of privacy and fairness.
        \item While the authors might fear that complete honesty about limitations might be used by reviewers as grounds for rejection, a worse outcome might be that reviewers discover limitations that aren't acknowledged in the paper. The authors should use their best judgment and recognize that individual actions in favor of transparency play an important role in developing norms that preserve the integrity of the community. Reviewers will be specifically instructed to not penalize honesty concerning limitations.
    \end{itemize}

\item {\bf Theory assumptions and proofs}
    \item[] Question: For each theoretical result, does the paper provide the full set of assumptions and a complete (and correct) proof?
    \item[] Answer: \answerYes{} 
    \item[] Justification: We provide the full assumptions and proof of all theoretical contributions
either in the main paper or in the appendix.
    \item[] Guidelines:
    \begin{itemize}
        \item The answer NA means that the paper does not include theoretical results. 
        \item All the theorems, formulas, and proofs in the paper should be numbered and cross-referenced.
        \item All assumptions should be clearly stated or referenced in the statement of any theorems.
        \item The proofs can either appear in the main paper or the supplemental material, but if they appear in the supplemental material, the authors are encouraged to provide a short proof sketch to provide intuition. 
        \item Inversely, any informal proof provided in the core of the paper should be complemented by formal proofs provided in appendix or supplemental material.
        \item Theorems and Lemmas that the proof relies upon should be properly referenced. 
    \end{itemize}

    \item {\bf Experimental result reproducibility}
    \item[] Question: Does the paper fully disclose all the information needed to reproduce the main experimental results of the paper to the extent that it affects the main claims and/or conclusions of the paper (regardless of whether the code and data are provided or not)?
    \item[] Answer: \answerYes{} 
    \item[] Justification: We provide all the information needed to reproduce the results presented in
this paper together with the source code.
    \item[] Guidelines:
    \begin{itemize}
        \item The answer NA means that the paper does not include experiments.
        \item If the paper includes experiments, a No answer to this question will not be perceived well by the reviewers: Making the paper reproducible is important, regardless of whether the code and data are provided or not.
        \item If the contribution is a dataset and/or model, the authors should describe the steps taken to make their results reproducible or verifiable. 
        \item Depending on the contribution, reproducibility can be accomplished in various ways. For example, if the contribution is a novel architecture, describing the architecture fully might suffice, or if the contribution is a specific model and empirical evaluation, it may be necessary to either make it possible for others to replicate the model with the same dataset, or provide access to the model. In general. releasing code and data is often one good way to accomplish this, but reproducibility can also be provided via detailed instructions for how to replicate the results, access to a hosted model (e.g., in the case of a large language model), releasing of a model checkpoint, or other means that are appropriate to the research performed.
        \item While NeurIPS does not require releasing code, the conference does require all submissions to provide some reasonable avenue for reproducibility, which may depend on the nature of the contribution. For example
        \begin{enumerate}
            \item If the contribution is primarily a new algorithm, the paper should make it clear how to reproduce that algorithm.
            \item If the contribution is primarily a new model architecture, the paper should describe the architecture clearly and fully.
            \item If the contribution is a new model (e.g., a large language model), then there should either be a way to access this model for reproducing the results or a way to reproduce the model (e.g., with an open-source dataset or instructions for how to construct the dataset).
            \item We recognize that reproducibility may be tricky in some cases, in which case authors are welcome to describe the particular way they provide for reproducibility. In the case of closed-source models, it may be that access to the model is limited in some way (e.g., to registered users), but it should be possible for other researchers to have some path to reproducing or verifying the results.
        \end{enumerate}
    \end{itemize}

\item {\bf Open access to data and code}
    \item[] Question: Does the paper provide open access to the data and code, with sufficient instructions to faithfully reproduce the main experimental results, as described in supplemental material?
    \item[] Answer: \answerYes{} 
    \item[] Justification: We provide all the information needed to reproduce the results presented in
this paper together with the source code.
    \item[] Guidelines:
    \begin{itemize}
        \item The answer NA means that paper does not include experiments requiring code.
        \item Please see the NeurIPS code and data submission guidelines (\url{https://nips.cc/public/guides/CodeSubmissionPolicy}) for more details.
        \item While we encourage the release of code and data, we understand that this might not be possible, so “No” is an acceptable answer. Papers cannot be rejected simply for not including code, unless this is central to the contribution (e.g., for a new open-source benchmark).
        \item The instructions should contain the exact command and environment needed to run to reproduce the results. See the NeurIPS code and data submission guidelines (\url{https://nips.cc/public/guides/CodeSubmissionPolicy}) for more details.
        \item The authors should provide instructions on data access and preparation, including how to access the raw data, preprocessed data, intermediate data, and generated data, etc.
        \item The authors should provide scripts to reproduce all experimental results for the new proposed method and baselines. If only a subset of experiments are reproducible, they should state which ones are omitted from the script and why.
        \item At submission time, to preserve anonymity, the authors should release anonymized versions (if applicable).
        \item Providing as much information as possible in supplemental material (appended to the paper) is recommended, but including URLs to data and code is permitted.
    \end{itemize}

\item {\bf Experimental setting/details}
    \item[] Question: Does the paper specify all the training and test details (e.g., data splits, hyperparameters, how they were chosen, type of optimizer, etc.) necessary to understand the results?
    \item[] Answer: \answerYes{} 
    \item[] Justification:We explained all the details on training settings in the appendix.
    \item[] Guidelines:
    \begin{itemize}
        \item The answer NA means that the paper does not include experiments.
        \item The experimental setting should be presented in the core of the paper to a level of detail that is necessary to appreciate the results and make sense of them.
        \item The full details can be provided either with the code, in appendix, or as supplemental material.
    \end{itemize}

\item {\bf Experiment statistical significance}
    \item[] Question: Does the paper report error bars suitably and correctly defined or other appropriate information about the statistical significance of the experiments?
    \item[] Answer: \answerYes{} 
    \item[] Justification: We include the error bars in the experiment results section.
    \item[] Guidelines:
    \begin{itemize}
        \item The answer NA means that the paper does not include experiments.
        \item The authors should answer "Yes" if the results are accompanied by error bars, confidence intervals, or statistical significance tests, at least for the experiments that support the main claims of the paper.
        \item The factors of variability that the error bars are capturing should be clearly stated (for example, train/test split, initialization, random drawing of some parameter, or overall run with given experimental conditions).
        \item The method for calculating the error bars should be explained (closed form formula, call to a library function, bootstrap, etc.)
        \item The assumptions made should be given (e.g., Normally distributed errors).
        \item It should be clear whether the error bar is the standard deviation or the standard error of the mean.
        \item It is OK to report 1-sigma error bars, but one should state it. The authors should preferably report a 2-sigma error bar than state that they have a 96\% CI, if the hypothesis of Normality of errors is not verified.
        \item For asymmetric distributions, the authors should be careful not to show in tables or figures symmetric error bars that would yield results that are out of range (e.g. negative error rates).
        \item If error bars are reported in tables or plots, The authors should explain in the text how they were calculated and reference the corresponding figures or tables in the text.
    \end{itemize}

\item {\bf Experiments compute resources}
    \item[] Question: For each experiment, does the paper provide sufficient information on the computer resources (type of compute workers, memory, time of execution) needed to reproduce the experiments?
    \item[] Answer: \answerYes{} 
    \item[] Justification: We explained all the details on training settings in the appendix.
    \item[] Guidelines:
    \begin{itemize}
        \item The answer NA means that the paper does not include experiments.
        \item The paper should indicate the type of compute workers CPU or GPU, internal cluster, or cloud provider, including relevant memory and storage.
        \item The paper should provide the amount of compute required for each of the individual experimental runs as well as estimate the total compute. 
        \item The paper should disclose whether the full research project required more compute than the experiments reported in the paper (e.g., preliminary or failed experiments that didn't make it into the paper). 
    \end{itemize}
    
\item {\bf Code of ethics}
    \item[] Question: Does the research conducted in the paper conform, in every respect, with the NeurIPS Code of Ethics \url{https://neurips.cc/public/EthicsGuidelines}?
    \item[] Answer: \answerYes{} 
    \item[] Justification: We conducted in the paper conform, in every respect, with the NeurIPS Code
of Ethics.
    \item[] Guidelines:
    \begin{itemize}
        \item The answer NA means that the authors have not reviewed the NeurIPS Code of Ethics.
        \item If the authors answer No, they should explain the special circumstances that require a deviation from the Code of Ethics.
        \item The authors should make sure to preserve anonymity (e.g., if there is a special consideration due to laws or regulations in their jurisdiction).
    \end{itemize}

\item {\bf Broader impacts}
    \item[] Question: Does the paper discuss both potential positive societal impacts and negative societal impacts of the work performed?
    \item[] Answer: \answerYes{} 
    \item[] Justification: We discussed the potential positive societal impacts and negative societal
impacts of the work performed at the end of the paper.
    \item[] Guidelines:
    \begin{itemize}
        \item The answer NA means that there is no societal impact of the work performed.
        \item If the authors answer NA or No, they should explain why their work has no societal impact or why the paper does not address societal impact.
        \item Examples of negative societal impacts include potential malicious or unintended uses (e.g., disinformation, generating fake profiles, surveillance), fairness considerations (e.g., deployment of technologies that could make decisions that unfairly impact specific groups), privacy considerations, and security considerations.
        \item The conference expects that many papers will be foundational research and not tied to particular applications, let alone deployments. However, if there is a direct path to any negative applications, the authors should point it out. For example, it is legitimate to point out that an improvement in the quality of generative models could be used to generate deepfakes for disinformation. On the other hand, it is not needed to point out that a generic algorithm for optimizing neural networks could enable people to train models that generate Deepfakes faster.
        \item The authors should consider possible harms that could arise when the technology is being used as intended and functioning correctly, harms that could arise when the technology is being used as intended but gives incorrect results, and harms following from (intentional or unintentional) misuse of the technology.
        \item If there are negative societal impacts, the authors could also discuss possible mitigation strategies (e.g., gated release of models, providing defenses in addition to attacks, mechanisms for monitoring misuse, mechanisms to monitor how a system learns from feedback over time, improving the efficiency and accessibility of ML).
    \end{itemize}
    
\item {\bf Safeguards}
    \item[] Question: Does the paper describe safeguards that have been put in place for responsible release of data or models that have a high risk for misuse (e.g., pretrained language models, image generators, or scraped datasets)?
    \item[] Answer: \answerNA{} 
    \item[] Justification: We believe this paper poses no such risks.
    \item[] Guidelines:
    \begin{itemize}
        \item The answer NA means that the paper poses no such risks.
        \item Released models that have a high risk for misuse or dual-use should be released with necessary safeguards to allow for controlled use of the model, for example by requiring that users adhere to usage guidelines or restrictions to access the model or implementing safety filters. 
        \item Datasets that have been scraped from the Internet could pose safety risks. The authors should describe how they avoided releasing unsafe images.
        \item We recognize that providing effective safeguards is challenging, and many papers do not require this, but we encourage authors to take this into account and make a best faith effort.
    \end{itemize}

\item {\bf Licenses for existing assets}
    \item[] Question: Are the creators or original owners of assets (e.g., code, data, models), used in the paper, properly credited and are the license and terms of use explicitly mentioned and properly respected?
    \item[] Answer: \answerNA{} 
    \item[] Justification: This paper does not use existing assets.
    \item[] Guidelines:
    \begin{itemize}
        \item The answer NA means that the paper does not use existing assets.
        \item The authors should cite the original paper that produced the code package or dataset.
        \item The authors should state which version of the asset is used and, if possible, include a URL.
        \item The name of the license (e.g., CC-BY 4.0) should be included for each asset.
        \item For scraped data from a particular source (e.g., website), the copyright and terms of service of that source should be provided.
        \item If assets are released, the license, copyright information, and terms of use in the package should be provided. For popular datasets, \url{paperswithcode.com/datasets} has curated licenses for some datasets. Their licensing guide can help determine the license of a dataset.
        \item For existing datasets that are re-packaged, both the original license and the license of the derived asset (if it has changed) should be provided.
        \item If this information is not available online, the authors are encouraged to reach out to the asset's creators.
    \end{itemize}

\item {\bf New assets}
    \item[] Question: Are new assets introduced in the paper well documented and is the documentation provided alongside the assets?
    \item[] Answer: \answerNA{} 
    \item[] Justification: This paper does not release new assets.
    \item[] Guidelines:
    \begin{itemize}
        \item The answer NA means that the paper does not release new assets.
        \item Researchers should communicate the details of the dataset/code/model as part of their submissions via structured templates. This includes details about training, license, limitations, etc. 
        \item The paper should discuss whether and how consent was obtained from people whose asset is used.
        \item At submission time, remember to anonymize your assets (if applicable). You can either create an anonymized URL or include an anonymized zip file.
    \end{itemize}

\item {\bf Crowdsourcing and research with human subjects}
    \item[] Question: For crowdsourcing experiments and research with human subjects, does the paper include the full text of instructions given to participants and screenshots, if applicable, as well as details about compensation (if any)? 
    \item[] Answer: \answerNA{} 
    \item[] Justification: This paper does not involve crowdsourcing nor research with human subjects.
    \item[] Guidelines:
    \begin{itemize}
        \item The answer NA means that the paper does not involve crowdsourcing nor research with human subjects.
        \item Including this information in the supplemental material is fine, but if the main contribution of the paper involves human subjects, then as much detail as possible should be included in the main paper. 
        \item According to the NeurIPS Code of Ethics, workers involved in data collection, curation, or other labor should be paid at least the minimum wage in the country of the data collector. 
    \end{itemize}

\item {\bf Institutional review board (IRB) approvals or equivalent for research with human subjects}
    \item[] Question: Does the paper describe potential risks incurred by study participants, whether such risks were disclosed to the subjects, and whether Institutional Review Board (IRB) approvals (or an equivalent approval/review based on the requirements of your country or institution) were obtained?
    \item[] Answer: \answerNA{} 
    \item[] Justification: This paper does not involve crowdsourcing nor research with human subjects.
    \item[] Guidelines:
    \begin{itemize}
        \item The answer NA means that the paper does not involve crowdsourcing nor research with human subjects.
        \item Depending on the country in which research is conducted, IRB approval (or equivalent) may be required for any human subjects research. If you obtained IRB approval, you should clearly state this in the paper. 
        \item We recognize that the procedures for this may vary significantly between institutions and locations, and we expect authors to adhere to the NeurIPS Code of Ethics and the guidelines for their institution. 
        \item For initial submissions, do not include any information that would break anonymity (if applicable), such as the institution conducting the review.
    \end{itemize}

\item {\bf Declaration of LLM usage}
    \item[] Question: Does the paper describe the usage of LLMs if it is an important, original, or non-standard component of the core methods in this research? Note that if the LLM is used only for writing, editing, or formatting purposes and does not impact the core methodology, scientific rigorousness, or originality of the research, declaration is not required.
    \item[] Answer: \answerNA{} 
    \item[] Justification: The core method development in this research does not involve LLMs as any
important, original, or non-standard components.
    \item[] Guidelines:
    \begin{itemize}
        \item The answer NA means that the core method development in this research does not involve LLMs as any important, original, or non-standard components.
        \item Please refer to our LLM policy (\url{https://neurips.cc/Conferences/2025/LLM}) for what should or should not be described.
    \end{itemize}

\end{enumerate}

\end{document}